\newcommand{\abs}[1]{\left\lvert #1 \right\rvert}
\def\rinfnorm{\rVert_{\infty}}
\DeclarePairedDelimiter\infnorm{\lVert}{\rinfnorm}
\newtheorem{prop}{Proposition}[section]
\newtheorem{lem}{Lemma}[section]
\DeclareMathOperator{\mat}{matrix}
\DeclareMathOperator{\diag}{diag}
\newcommand\eps{\epsilon}
\newcommand\teps{\tl{\eps}}
\newcommand\tlambda{\ensuremath{\tilde{\lambda}}}
\newcommand\tlambdamax{\ensuremath{\tilde{\lambda}_{\max}}}
\newcommand\tlambdamin{\ensuremath{\tilde{\lambda}_{\min}}}
\renewcommand\th[1]{\ensuremath{\theta_{#1}}}
\newcommand\ut[1]{\ensuremath{\check{\theta}_{#1}}}
 \newcommand{\IGNORE}[1]{}
\newcommand{\ignore}[1]{}
\renewcommand{\cal}[1]{\mathcal #1}
\newcommand{\R}{\mathbb R}
\newcommand{\twonorm}[1]{\left\lVert #1 \right\rVert_{2}}
\newcommand{\frob}[1]{\left\lVert #1 \right\rVert_{F}}
\newcommand{\Sec}[1]{\hyperref[sec:#1]{\S\ref*{sec:#1}}} 
\newcommand{\Eqn}[1]{\hyperref[eq:#1]{(\ref*{eq:#1})}} 
\newcommand{\Fig}[1]{\hyperref[fig:#1]{Fig.\,\ref*{fig:#1}}} 
\newcommand{\Tab}[1]{\hyperref[tab:#1]{Tab.\,\ref*{tab:#1}}} 
\newcommand{\Thm}[1]{\hyperref[thm:#1]{Theorem\,\ref*{thm:#1}}} 
\newcommand{\Lem}[1]{\hyperref[lem:#1]{Lemma\,\ref*{lem:#1}}} 
\newcommand{\Prop}[1]{\hyperref[prop:#1]{Prop.~\ref*{prop:#1}}} 
\newcommand{\Cor}[1]{\hyperref[cor:#1]{Corollary~\ref*{cor:#1}}} 
\newcommand{\Def}[1]{\hyperref[def:#1]{Definition~\ref*{def:#1}}} 
\newcommand{\Alg}[1]{\hyperref[alg:#1]{Alg.~\ref*{alg:#1}}} 
\newcommand{\Ex}[1]{\hyperref[ex:#1]{Ex.~\ref*{ex:#1}}} 
\newcommand{\Clm}[1]{\hyperref[clm:#1]{Claim~\ref*{clm:#1}}} 
\colorlet{shadecolor}{blue!20}
\newcommand{\defas}{:=}
\newcommand{\Etn}{\E^{(t+1)}}
\renewcommand\t{{\scriptscriptstyle\top}}
\DeclareMathOperator{\ncralgo}{RTD}
\newcommand\tl{\tilde}
 \renewcommand{\L}{L}
\renewcommand{\S}{S}
\newcommand{\Lo}{\L^*}
\newcommand{\So}{\S^*}
\newcommand{\Lt}[1][t]{\L^{(#1)}}
\newcommand{\St}[1][t]{\S^{(#1)}}
\newcommand{\Ltn}{\L^{(t+1)}}
\newcommand{\Stn}{\S^{(t+1)}}
\newcommand{\M}{T}
\newcommand{\E}{{E}}
\newcommand{\Et}[1][t]{\E^{(#1)}}
\def\tl{\tilde}\renewcommand\t{{\scriptscriptstyle\top}}
\newcommand\inner[1]{\ensuremath{\langle #1 \rangle}}
\DeclareMathOperator*{\HT}{\mathcal{H}}
\DeclareMathOperator{\supp}{supp}
 \DeclareMathOperator*{\argmax}{arg\,max}
\DeclareMathOperator{\dist}{dist}
\def\tha{{\mbox{\tiny th}}}
 \def\0{{\bf 0}}
\def\viz{{viz.,\ \/}}
\def\nn{\nonumber}
\def\qed{\hfill\hbox{${\vcenter{\vbox{
    \hrule height 0.4pt\hbox{\vrule width 0.4pt height 6pt
    \kern5pt\vrule width 0.4pt}\hrule height 0.4pt}}}$}}
\def\Nc{{\cal N}}
\def\Rbb{{\mathbb R}}
\newcommand{\bprf}{\begin{myproof}}
\newcommand{\eprf}{\end{myproof}}
\newcommand{\bp}{\begin{psfrags}}
\newcommand{\ep}{\end{psfrags}}
\newcommand{\bl}{\begin{lemma}}
\newcommand{\el}{\end{lemma}}
\newcommand{\bt}{\begin{theorem}}
\newcommand{\et}{\end{theorem}}
\newcommand{\bc}{\begin{center}}
\newcommand{\ec}{\end{center}}
\newcommand{\bi}{\begin{itemize}}
\newcommand{\ei}{\end{itemize}}
\newcommand{\ben}{\begin{enumerate}}
\newcommand{\een}{\end{enumerate}}
\newcommand{\bd}{\begin{definition}}
\newcommand{\ed}{\end{definition}}
\def\beq{\begin{equation}}
\def\eeq{\end{equation}\noindent}
\def\beqn{\begin{eqnarray}}
\def\eeqn{\end{eqnarray} \noindent}
\def\beqnn{  \begin{eqnarray*}}
\def\eeqnn{\end{eqnarray*}  \noindent}
\def\bcase{  \begin{numcases}}
\def\ecase{\end{numcases}   \noindent}
\def\bsbcase{  \begin{subnumcases}}
\def\esbcase{\end{subnumcases}   \noindent}
\newtheorem{theorem}{Theorem}
\newtheorem{corollary}{Corollary}
\newtheorem{lemma}[theorem]{Lemma}
\newtheorem{definition}{Definition}
\newenvironment{myproof}{\noindent{\bf Proof:} \hspace*{1em}}{
    \hspace*{\fill} $\Box$ }
\newenvironment{proof_of}[1]{\noindent {\bf Proof of #1: }}{\hspace*{\fill} $\Box$ }
\newcommand{\matplottc}[1]{               
        \unitlength .45truein
        \begin{center}
        \includegraphics{#1.ps}
        \end{picture}
        \end{center}
}
\def\psfancypar#1#2{\begingroup\def\par{\endgraf\endgroup\lineskiplimit=0pt}
               \setbox2=\hbox{\large\sc #2}
               \newdimen\tmpht \tmpht \ht2 \advance\tmpht by \baselineskip
               \font\hhuge=Times-Bold at \tmpht
               \setbox1=\hbox{{\hhuge #1}}
               \count7=\tmpht \count8=\ht1
               \divide\count8 by 1000 \divide\count7 by \count8
               \tmpht=.001\tmpht\multiply\tmpht by \count7
               \font\hhuge=Times-Bold at \tmpht
               \setbox1=\hbox{{\hhuge #1}}
               \noindent
                \hangindent1.05\wd1
               \hangafter=-2 {\hskip-\hangindent
               \lower1\ht1\hbox{\raise1.0\ht2\copy1}%
                \kern-0\wd1}\copy2\lineskiplimit=-1000pt}
\def\Kout{\setbox1=\hbox{\Huge\bf K}\hbox to
1.05\wd1{\hspace{.05\wd1}
\def\Sout{\setbox1=\hbox{\Huge\bf S}\hbox to 1.05\wd1{\hspace{.05\wd1}

\newcommand{\torestate}[3]{%
\expandafter \def \csname BBRESTATE #2 \endcsname{#3}
\theoremstyle{plain}
\newtheorem{BBRESTATETHMNUM#2}[theorem]{#1}
\begin{BBRESTATETHMNUM#2}\label{#2}\csname BBRESTATE #2 \endcsname   \end{BBRESTATETHMNUM#2}
\newtheorem*{BBRESTATETHMNONNUM#2}{{#1}~\ref{#2}}
}

\newcommand{\restate}[1]{\begin{BBRESTATETHMNONNUM#1}[Restated] \csname BBRESTATE #1 \endcsname
\end{BBRESTATETHMNONNUM#1}}




\title{Tensor vs Matrix Methods: \\ Robust Tensor Decomposition under Block Sparse Perturbations}

\date{\today}

\author{Animashree Anandkumar\thanks{University of California, Irvine. Email: a.anandkumar@uci.edu}, Prateek Jain\thanks{Microsoft Research, India. Email: prajain@microsoft.com}, Yang Shi\thanks{University of California, Irvine. Email: shiy4@uci.edu}, U. N. Niranjan\thanks{University of California, Irvine. Email: un.niranjan@uci.edu}}

\begin{document}

\maketitle

\begin{abstract}
Robust tensor CP decomposition involves decomposing  a tensor into low rank and sparse components. We propose a novel non-convex iterative algorithm with guaranteed recovery.  It alternates  between  low-rank CP decomposition through gradient ascent (a variant of the tensor power method), and  hard thresholding of the residual. We prove convergence to the globally optimal solution   under natural incoherence conditions on the low rank component, and bounded level of sparse perturbations. We compare our method with natural baselines, \viz which apply   robust  matrix   PCA either to the {\em flattened} tensor, or to the matrix slices of the tensor. Our method can provably handle a far greater level of perturbation  when the sparse  tensor is  block-structured. 
Thus, we establish that tensor methods can tolerate  a higher level of gross corruptions compared to matrix methods.
\end{abstract}

\section{Introduction}
In this paper, we develop a robust tensor decomposition  method, which  recovers a low rank  tensor   subject to gross corruptions.  Given an input tensor $T=L^*+S^*$, we aim to recover both $L^*$ and $S^*$, where $\Lo$ is a low rank tensor and $\So$ is a sparse tensor
\begin{equation}\label{eqn:robust-def} \M = \Lo+\So,\quad \Lo=\sum_{i=1}^r \sigma_i^* u_i\otimes u_i \otimes u_i\end{equation} and $\M, \Lo, \So\in \Rbb^{n \times n \times n}$. The above form of $\Lo$  is known as the Candecomp/Parafac or the CP-form. We assume that $\Lo$ is a rank-$r$ orthogonal tensor, i.e., $\langle u_i, u_j\rangle=1$ if $i=j$ and $0$ otherwise. The above problem arises in numerous applications such as image and video denoising~\cite{huang2008robust}, multi-task learning, and robust learning of latent variable models (LVMs) with grossly-corrupted moments, for  details see Section~\ref{sec:app}.

The matrix version of \eqref{eqn:robust-def}, \viz decomposing a matrix into sparse and low rank matrices,   is known as    robust principal component analysis (PCA).  It  has been studied extensively~\cite{chandrasekaran2011rank,candes2011robust,hsu2011robust,netrapalli2014non}. Both convex ~\cite{chandrasekaran2011rank,candes2011robust} as well as non-convex~\cite{netrapalli2014non} methods have been proposed with provable recovery. 

One can attempt to solve  the robust tensor problem  in \eqref{eqn:robust-def}  using matrix methods. In other words, robust matrix PCA can be applied either to  each matrix slice of the tensor, or to the matrix obtained by flattening the tensor. However, such matrix methods ignore the  tensor algebraic constraints or the CP  rank constraints, which differ from the matrix rank constraints. There are however a number of challenges to incorporating the tensor CP rank   constraints. Enforcing a given  tensor rank   is NP-hard~\cite{hillar2013most}, unlike the matrix case, where low rank projections can be computed efficiently. Moreover, finding the best convex relaxation of the tensor CP rank is also NP-hard~\cite{hillar2013most}, unlike   the matrix case, where the convex relaxation of the rank, \viz the nuclear norm,  can be computed efficiently.

\subsection{Summary of Results}
\vspace*{-5pt}
\paragraph{Proposed method: }We   propose a  non-convex iterative method, termed $\ncralgo$, that maintains low rank  and sparse estimates $\hat{L}$, $\hat{S}$, which are alternately updated. The low rank estimate $\hat{L}$ is updated through the eigenvector computation of $T-\hat{S}$, and the sparse estimate is updated through (hard) thresholding of the residual $T-\hat{L}$.
\vspace*{-5pt}
\paragraph{Tensor Eigenvector Computation: } Computing eigenvectors of $T-\hat{S}$ is challenging as the tensor can have arbitrary ``noise'' added to an orthogonal tensor and hence the techniques of~\cite{AnandkumarEtal:tensor12} do not apply as they only guarantee an approximation to the eigenvectors up to the ``noise'' level. Results similar to the shifted power method of~\cite{DBLP:journals/siammax/KoldaM11} should apply for our problem, but their results hold in an arbitrarily small centered at the true eigenvectors, and the size of the ball is typically not well-defined. In this work, we provide a simple variant of the tensor power method based on gradient ascent of a regularized variational form of the eigenvalue problem of a tensor. We show that our method converges to the true eigenvectors at a linear rate when initialized within a reasonably small ball around eigenvectors. See Theorem~\ref{thm:robustpower} for details. 
\vspace*{-5pt}
\paragraph{Guaranteed recovery: }As a main result, we prove convergence to the global optimum $\{\Lo, \So\}$ for $\ncralgo$  under an  incoherence assumption on $\Lo$, and a bounded sparsity level for   $\So$.  These conditions are similar to the conditions required for the success of matrix robust PCA. We also prove fast linear convergence rate for $\ncralgo$, i.e. we obtain  an additive $\epsilon$-approximation in $O(\log(1/\epsilon))$ iterations.
\vspace*{-5pt}
\paragraph{Superiority over matrix robust PCA: }We  compare our $\ncralgo$ method with matrix robust PCA, carried out either on matrix slices of the tensor, or on the {\em flattened} tensor. We prove our $\ncralgo$ method is superior and can handle higher sparsity levels in the noise tensor $\So$, when it is block structured. Intuitively, each block of noise represents correlated noise which persists for a subset  of slices in the tensor. For example, in a video if there is an occlusion then the occlusion remains fixed in a small number of frames. In the scenario of  moment-based estimation,  $\So$ represents gross corruptions of the moments of some multivariate distribution, and we can assume that it occurs over a small subset of variables.

We prove that our tensor methods can handle a much higher level of block sparse perturbations, when the overlap between the blocks is controlled (e.g.  random block sparsity). For example, for a rank-$1$ tensor, our method can handle $O(n^{17/12})$ corrupted entries per fiber of the tensor (i.e. row/column of a slice of the tensor). In contrast,   matrix robust PCA   methods  only allows for $O(n)$ corrupted entries, and this bound is tight \cite{netrapalli2014non}. We prove that even better  gains are obtained for $\ncralgo$ when the rank $r$ of $\Lo$ increases, and we provide precise results in this paper. Thus, our $\ncralgo$ achieves best of both the worlds: better accuracy and faster running times.

We conduct extensive simulations to empirically validate the performance of our method and compare it to various matrix robust PCA methods.    Our  synthetic experiments  show that our tensor method is   $2$-$3$ times more accurate, and about $8$-$14$ times faster, compared to  matrix decomposition methods. On the real-world  \textit{Curtain} dataset, for the activity detection, our tensor method obtains better recovery with a $10$\% speedup.
\vspace*{-5pt}
\paragraph{Overview of techniques: }At a high level, the proposed method $\ncralgo$ is a  tensor analogue of the non-convex  matrix robust PCA method in~\cite{netrapalli2014non}. However, both the algorithm ($\ncralgo$) and the analysis of  $\ncralgo$ is significantly challenging due to two key reasons: a) there can be significantly more structure in the tensor problem that needs to be exploited carefully using structure in the noise,
b) unlike matrices, tensors can have an exponential number of eigenvectors~\cite{cartwright2013number}.

We would like to stress that we need to establish convergence to the globally optimal solution $\{\Lo, \So\}$,  and not just to a local optimum, despite the non-convexity of the decomposition problem. Intuitively, if we are in the basin of attraction of the global optimum, it is natural to expect that  the estimates $\{\hat{L}, \hat{S}\}$  under $\ncralgo$ are progressively   refined, and get closer to the true solution $\{\Lo, \So\}$. However, characterizing this basin, and the  conditions needed to ensure we ``land'' in this basin   is non-trivial and novel.

As mentioned above, our method alternates between finding low rank estimate $\hat{L}$ on the residual $T-\hat{S}$ and viceversa. The main steps in our proof are as follows: {\em (i)} For updating the low rank estimate, we propose a modified tensor power method, and prove that it converges to one of the eigenvectors of $T-\hat{S}$. In addition,  the recovered eigenvectors are ``close'' to the components of $\Lo$. {\em (ii)} When the sparse estimate  $\hat{S}$ is updated through hard thresholding, we prove that the support of $\hat{S}$ is contained within that of $\So$. {\em (iii)} We make strict progress in each epoch, where $\hat{L}$ and $\hat{S}$ are alternately updated.

In order to prove the first part, we establish that the proposed method performs gradient ascent on a regularized variational form of the eigenvector problem. We then establish that the regularized objective satisfies local strong convexity and smoothness. We also establish that by having a polynomial number of initializations, we can recover vectors that are ``reasonably'' close to eigenvectors of $T-\hat{S}$. Using the above two facts, we establish a linear convergence to the true eigenvectors of $T-\hat{S}$, which are close to the components of $\Lo$.

For step {\em (ii)} and {\em (iii)}, we  show that using an intuitive block structure in the noise, we can bound the affect of noise on the eigenvectors of the true low-rank tensor $\Lo$ and show that the proposed iterative scheme refines the estimates of $\Lo$ and converge to $\Lo$ at a linear rate.

\subsection{Applications}\label{sec:app}
In addition to the standard applications of robust tensor decomposition to video denoising~\cite{huang2008robust}, we propose two applications in probabilistic learning.
\vspace*{-5pt}
\paragraph{Learning latent variable models using  grossly corrupted moments: }Using tensor decomposition for learning LVMs has been intensely studied in the last few years, e.g.~\cite{AnandkumarEtal:tensor12,AnandkumarEtal:community12COLT,jain2013learning}. The idea is to learn the models through CP decomposition of higher order moment tensors.
While the above works assume access to empirical moments, we can extend the framework to that of {\em robust} estimation, where the moments are subject to gross corruptions. 
In this case, gross corruptions on the moments can occur either due to adversarial manipulations or systematic bias in estimating moments of some subset of variables.
\vspace*{-5pt}
\paragraph{Multi-task learning of linear Bayesian networks: }
Let $z_{(i)} \sim \Nc (0, \Sigma_{(i)})$. The samples for the $i^{\tha}$ Bayesian network  are generated as $
x_{(i)} = U h_{(i)} + z_{(i)},$ and  $
\mathbb{E}[x_{(i)} x_{(i)}^\top] = U \diag (w_{(i)}) U^\top + \Sigma_{(i)},
$ where $h_{(i)}$ is the hidden variable for the $i^{\tha}$ network. If the Bayesian networks are related, we can share parameters among them. In the above framework,  we share   parameters $U$, which   map the hidden variables  to observed ones.
Assuming that all the  covariances
$\Sigma_{(i)}$ are sparse, when they are stacked together, they   form a sparse tensor. Similarly $U\diag (w_{(i)}) U^\top$ stacked  constitutes a low rank tensor. Thus, we can consider the samples jointly, and learn the parameters by performing robust tensor   decomposition.

\subsection{Related Work}\label{sec:rel}
\vspace*{-5pt}
\paragraph{Robust matrix decomposition: }
In the matrix setting, the above problem of decomposition into sparse and low rank parts is popularly known as {\em robust PCA}, and has been studied in a number of works (\cite{chandrasekaran2011rank},\cite{candes2011robust}). The popular method is based on convex relaxation, where the low rank penalty is replaced by nuclear norm and the sparsity is replaced by the $\ell_1$ norm. However, this technique is not applicable in the tensor setting, when we consider the {\em CP rank}. There is no convex surrogate available for the CP rank.

Recently a non convex method for robust PCA is proposed in~\cite{netrapalli2014non}. It involves alternating steps of PCA and   thresholding of the residual. Our proposed tensor method can be seen as a tensor analogue of the method in~\cite{netrapalli2014non}. However, the analysis is very different, since the optimization landscape for tensor decomposition differs significantly from that of the matrix.
\vspace*{-5pt}
\paragraph{Convex Robust Tucker decomposition: }
Previous works which employ convex surrogates for tensor problems employ a different notion of rank, known as the {\em Tucker} rank or the {\em multi-}rank, e.g.~\cite{tomioka2010estimation,gandy2011tensor,huang2014provable,kreimer2013tensor,NIPS2014_5409}. However, the notion of a multi-rank is based on ranks of the matricization or flattening of the tensor, and thus, this method does not exploit the tensor algebraic constraints.  The problem of robust tensor PCA is specifically tackled in~\cite{gu2014robust,goldfarb2014robust}. In~\cite{goldfarb2014robust}, convex and non-convex methods are proposed based on {\em Tucker} rank, but there are no guarantees on if it yields the original tensors $\Lo$ and $\So$ in \eqref{eqn:robust-def}. In ~\cite{gu2014robust}, they prove success under restricted eigenvalue conditions. However, these conditions are opaque and it is not clear regarding the level of sparsity that can be handled. 
\vspace*{-5pt}
\paragraph{Sum of squares: }Barak et al~\cite{Boaz:2015} recently consider  CP-tensor completion using algorithms based on the sum of squares hierarchy. However, these algorithms are expensive. 
In contrast, in this paper, we consider simple iterative methods based on the power method that are efficient and scalable for large datasets. It is however unclear if the sum of squares algorithm improves the result for the block sparse model considered here.
\vspace*{-5pt}
\paragraph{Robust tensor decomposition: }Shah et al~\cite{shah2015sparse} consider robust tensor decomposition method using a randomized convex relaxation formulation. Under their random sparsity model, their algorithm provides guaranteed recovery as long as the number of non-zeros per fibre is $O(\sqrt{n})$. This is in contrast to our method which potentially tolerates upto $O(n^{17/12})$ non-zero sparse corruptions per fibre.

\section{Proposed Algorithm}\label{sec:algo}
\vspace*{-5pt}
\paragraph{Notations: } Let $[n] := \{1,2,\dotsc,n\}$, and $\|v\|$ denote the $\ell_2$ norm of vector $v$. For a matrix or a tensor $M$,  $\|M\|$ refers to spectral norm   and $\|M\|_\infty $ refers to maximum absolute entry.
\vspace*{-5pt}
\paragraph{Tensor preliminaries: }
A real third order tensor $T \in \R^{n \times n \times n}$ is a three-way array.
The different dimensions of the tensor are referred to as {\em modes}. 
In addition, {\em fibers} are higher order analogues of matrix rows and columns. A fiber is obtained by fixing all but one of the indices of the tensor. For example, for a third order tensor $T\in \R^{n \times n \times n}$, the mode-$1$ fiber is given by $T(:, j, l)$. 
Similarly, {\em slices} are obtained by fixing all but two of the indices of the tensor. For example, for the third order tensor $T$, the slices along third mode are given by $T(:, :, l)$. A {\em flattening} of tensor $T$ along mode $k$ is a matrix $M$ whose columns correspond to mode-$k$ fibres of $T$.

We view a tensor $T \in \Rbb^{n \times n \times n}$ as a multilinear form. In particular, for vectors $u,v,w \in \R^n$, let\begin{equation} \label{eqn:rank-1 update}
 T(I,v,w) := \sum_{j,l \in [n]} v_j w_l T(:,j,l) \ \in \R^n,
\end{equation}
which is a multilinear combination of the tensor mode-$1$ fibers.
Similarly $T(u,v,w) \in \R$ is a multilinear combination of the tensor entries.

A tensor $T  \in \Rbb^{n \times n \times n}$ has a CP {\em rank} at most $r$ if it can be written as the sum of $r$ rank-$1$ tensors as
\begin{equation}\label{eqn:tensordecomp}
T = \sum_{i\in [r]} \sigma^*_i u_i \otimes u_i \otimes u_i, \quad   u_i \in \Rbb^n, \|u_i\|=1,
\end{equation}
where notation $\otimes$  represents the {\em outer product}. We sometimes abbreviate  $a \otimes a \otimes a$ as $a^{\otimes 3}$. Without loss of generality, $\sigma^*_i>0$, since $-\sigma_i^* u_i^{\otimes 3}= \sigma_i^* (-u_i)^{\otimes 3}$.

\floatname{algorithm}{Algorithm}

\begin{algorithm}[t!]
  \caption{$(\widehat{L},\ \widehat{S})=$ $\ncralgo$ $(\M, \delta, r,\beta)$: Tensor Robust PCA}
  \begin{algorithmic}[1]
    \STATE {\bf Input}: Tensor $\M\in \R^{n\times n \times n}$, convergence criterion $\delta$, target rank $r$, thresholding scale parameter $\beta$. $P_l(A)$ denote estimated    rank-$l$ approximation of tensor $A$,  and let $\sigma_l(A)$ denote the estimated $l^{\tha}$ largest eigenvalue using   Procedure~\ref{algo:sspm}. $HT_\zeta(A)$ denotes hard-thresholding, i.e. $\HT_{\zeta}(A))_{ijk}=A_{ijk}$ if $|A_{ijk}|\geq \zeta$ and $0$ otherwise.
    \STATE Set initial threshold $\zeta_0 \, \leftarrow \, \beta \sigma_1(\M)$ and estimates $ S^{(0)}=\HT_{\zeta_0}(\M-L^{(0)})$.
    \FOR{Stage $l=1$ to $r$ }
    \FOR{$t=0$ to $\tau=10 \log \left(n \beta \twonorm{\M-\S^{(0)}}/\delta\right)$}
    \STATE    $\,L^{(t+1)}=P_l(\M-S^{(t)})$.
    \STATE $ S^{(t+1)}=\HT_{\zeta}(\M-L^{(t+1)}).$
    \STATE $\zeta_{t+1}\!\! =\!\!   \beta  (\sigma_{l+1}(\M\!-\!S^{(t+1)}) +\left(\frac{1}{2}\right)^t \sigma_l(\M-S^{(t+1)})  )$.
    \ENDFOR
    \STATE If {$\beta \sigma_{l+1}(\Ltn) < \frac{\delta}{2n}$}, then  return $\L^{(\tau)},\S^{(\tau)}$, else reset $S^{(0)}= S^{(\tau)}$
    \ENDFOR
    \STATE {\bf Return: }$\widehat{L}=L^{(\tau)}, \widehat{S}=S^{(\tau)}$
  \end{algorithmic}
  \label{algo:sap}
\end{algorithm}
\vspace*{-5pt}
\paragraph{$\ncralgo$ method: }We propose  non-convex  algorithm  $\ncralgo$ for robust tensor decomposition, described in Algorithm~\ref{algo:sap}. The algorithm proceeds in stages, $l=1,\ldots, r$, where $r$ is the target rank of the low rank estimate. In $l^{\tha}$ stage, we    consider alternating steps of low rank projection $P_l(\cdot)$ and hard thresholding of the residual, $\HT(\cdot)$. For computing $P_l(\widetilde{L})$, that denotes the $l$ leading eigenpairs of $\widetilde{L}$, we execute gradient ascent on a function $f(v)=\widetilde{L}(v, v, v)-\lambda \|v\|^4$ with multiple restarts and deflation (see Procedure~\ref{algo:sspm}).

\floatname{algorithm}{Procedure}
\renewcommand{\thealgorithm}{1}
\begin{algorithm}[t!]
  \caption{$\{\hat{L}_l, (\hat{u_j}, \lambda_j)_{j\in [l]}\}=P_l(\M)$: GradAscent (Gradient Ascent method)}\label{algo:sspm}
  \begin{algorithmic}[1]
\STATE {\bf Input}: Symmetric tensor $\M\in \R^{n\times n \times n}$,   target rank $l$, exact rank $r$, $N_1$ number of initializations or restarts, $N_2$ number of power iterations for each initialization. Let $\M_1\leftarrow \M.$ 
\FOR{$j=1,\ldots, r$}
\FOR {$i = 1, \ldots, N_1$}
\STATE $\theta \sim \mathcal{N} (0, I_n)$.  Compute   top singular vector $u$ of $\M_j(I,I,\theta)$. Initialize $v_i^{(1)}\leftarrow u$. Let $\lambda=\M_j(u, u, u)$.
\REPEAT
\STATE{$v_i^{(t+1)} \leftarrow \M_j(I,v_i^{(t)},v_i^{(t)}) / \| \M_j(I,v_i^{(t)},v_i^{(t)}) \|_2$} \COMMENT {Run power method to land in spectral ball}
\STATE $\lambda_i^{(t+1)} \leftarrow \M_j(v_i^{(t+1)},v_i^{(t+1)},v_i^{(t+1)})$
\UNTIL {$t =N_2$} 
\STATE Pick the best: reset $i \leftarrow \argmax_{i \in [N_1]} T_j(v_i^{(t+1)},v_i^{(t+1)},v_i^{(t+1)})$ and $\lambda_i = \lambda_i^{(t+1)}$ and $v_i = v_i^{(t+1)}$.
\STATE Deflate: $\M_j\leftarrow \M_j - \lambda_i v_i \otimes v_i \otimes v_i $.
\ENDFOR
\ENDFOR
\FOR{$j=1,\ldots, r$}
\REPEAT
\STATE Gradient Ascent iteration: {\small $v_j^{(t+1)} \leftarrow v_j^{(t)}+\frac{1}{4 \lambda (1+\lambda/\sqrt{n})}\cdot \left(\M(I, v_j^{(t)}, v_j^{(t)})-\lambda \| v_j^{(t)}\|^2v_j^{(t)}\right)$.}
\UNTIL convergence  (linear rate, refer Lemma~\ref{lem:grad_asc}).
\STATE Set $\widehat{u}_j = v_j^{(t+1)}$, $\lambda_j  = \M (v_j^{(t+1)}, v_j^{(t+1)}, v_j^{(t+1)})$
\ENDFOR
\RETURN Estimated top $l$ out of all the top $r$ eigenpairs
$(\widehat{u}_j, \lambda_j)_{j\in [l]}$, and low rank estimate $\hat{L}_l = \sum_{i \in [l]} \lambda_i \widehat{u}_j\otimes \widehat{u}_j\otimes \widehat{u}_j$.
\end{algorithmic}
\end{algorithm}

\vspace*{-5pt}
\paragraph{Computational complexity: }In $\ncralgo$, at the $l^{\tha}$ stage, the $l$-eigenpairs are computed using Procedure~\ref{algo:sspm}, whose complexity is $O(n^3  l N_1 N_2)$.  The hard thresholding operation $\HT_{\zeta}(\M-L^{(t+1)})$ requires $O(n^3)$ time. We have $O\left( \log \left( \frac{n \|\M\|}{\delta} \right)\right)$ iterations for each stage of the $\ncralgo$ algorithm and there are $r$ stages. By Theorem~\ref{thm:robustpower}, it suffices to have $N_1= \tilde{O }\left(n^{1+c}  \right)$  and $
N_2 =\tilde{O} \left( 1\right)$,  and where $\tilde{O}(\cdot)$ represents $O(\cdot)$ up to polylogarithmic factors and  $c$ is a small constant. Hence, the overall computational complexity of $\ncralgo$ is $\tilde{O} \left(n^{4+c} r^2    \right)$.

\section{Theoretical Guarantees}
In this section, we provide guarantees for the $\ncralgo$ proposed in the previous section for RTD in \eqref{eqn:robust-def}. Even though we consider a symmetric $\Lo$ and $\So$ in \eqref{eqn:robust-def}, we can extend the results to asymmetric tensors, by embedding them into symmetric tensors, on lines of~\cite{ragnarsson2013block}.

In general,  it is impossible to have a unique recovery of  low-rank and sparse components.
Instead, we assume the following conditions to guarantee uniqueness:

\textbf{(L)} $\Lo$ is a rank-$r$ orthogonal tensor in \eqref{eqn:robust-def}, i.e., $\inner{u_i, u_j}=\delta_{i,j}$, where $\delta_{i,j}=1$ iff $i=j$ and $0$ o.w. $\Lo$ is $\mu$-incoherent, i.e.,    $\|u_i \|_\infty \leq \frac{\mu}{n^{1/2}}$ and $\sigma_i^* > 0$, $\forall 1\leq i\leq r$.

The above conditions of having an incoherent low rank tensor  $\Lo$ are similar to the conditions for robust matrix PCA. For tensors, the assumption of an orthogonal decomposition is limiting, since there exist tensors whose CP decomposition is non-orthogonal~\cite{AnandkumarEtal:tensor12}.  We later discuss how our  analysis can be extended to  non-orthogonal tensors. We now list the conditions for sparse tensor $\So$.

The tensor $\So$ is block sparse, where each block has at most $d$ non-zero entries along any fibre and the number of blocks is $B$. Let $\Psi$ be the support tensor that has the same sparsity pattern as $\So$, but with unit entries, i.e. $\Psi_{i,j,k}=1$ iff. $\So_{i,j,k}\neq 0$ for all $i,j,k\in [n]$. We now make assumptions on sparsity pattern of $\Psi$. Let $\eta$ be the maximum fraction of overlap between any two block fibres $\psi_i$ and $\psi_j$. In other words,  $\max_{i\neq j} \inner{\psi_i, \psi_j} < \eta d$.
\textbf{(S)} Let   $d$ be the sparsity level along any fibre of a block and let $B$ be  the number of blocks. We require
    \begin{align} \label{eqn:psi}\Psi= \sum_{i=1}^B   \psi_i \otimes \psi_i \otimes \psi_i, \|\psi_i\|_0 \leq d, \, \psi_i(j)=0\mbox{ or } 1 \nonumber \\
    \forall i\in[B],\,j\in [n],\\
    \label{eqn:S1} d =O ( n / r \mu^3)^{2/3},  B =O( \min ( n^{2/3} r^{1/3}, \, \eta^{-1.5} )).\end{align}
We assume a block sparsity model for $\So$ above. Under this   model, the support tensor $\Psi$ which encodes sparsity pattern, has rank $B$, but not the sparse tensor $\So$ since the entries are allowed to be arbitrary. We also note that we set $d$ to be $n^{2/3}$ for ease of exposition and show one concrete example where our method significantly outperforms matrix robust PCA methods.

As discussed in the introduction, it may be advantageous to consider tensor methods for robust decomposition only when sparsity across the different matrix slices are aligned/structured in some manner, and a block sparse model is a natural structure to consider.  We later demonstrate the precise nature of superiority of tensor methods under block sparse perturbations.

For the above mentioned sparsity structure, we set $\beta=4 \mu^3 r / n^{3/2}$  in our algorithm. Under the above conditions, our proposed algorithm $\ncralgo$ establishes convergence to the globally optimal solution.

\begin{theorem}[Convergence to global optimum for $\ncralgo$]
Let $\Lo, \So$ satisfy $(L)$ and $(S)$, and $\beta= 4 \frac{\mu^3 r}{n^{3/2}}$. The outputs $\widehat{\L}$ (and its parameters $\hat{u}_i$ and $\hat{\lambda}_i$) and $\widehat{\S}$ of Algorithm~\ref{algo:sap} satisfy w.h.p.:
\begin{align*}
  \left\| \hat{u}_i - u_i \right\|_\infty \leq \delta / \mu^2 r n^{1/2} \sigma_{\min}^*, \\
  | \hat{\lambda}_i - \sigma_i^* | \leq \delta, \ \ \ \forall i  \in [n], \\
\frob{\widehat{\L} - \Lo} \leq \delta, \ \ \infnorm{\widehat{\S} - \So} \leq \delta / n^{3/2},\\ \mbox{ and }\ \ \supp{\widehat{\S}} \subseteq \supp{\So}.
\end{align*}
\label{thm:main}
\end{theorem}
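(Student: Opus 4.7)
The plan is a nested induction mirroring the algorithm's two loops. The outer induction is over stages $l = 1, \ldots, r$ and the inner induction is over the $\tau = O(\log(n\beta\|T\|/\delta))$ alternating updates within a stage. The two invariants to maintain are (i) the support inclusion $\supp(S^{(t)}) \subseteq \supp(S^*)$, and (ii) a geometric entrywise bound of the form
\[
\|L^{(t)} - L^*\|_\infty \;\lesssim\; \beta\bigl( \sigma^*_{l+1} + 2^{-t}\sigma^*_l \bigr),
\]
together with the derived $\hat u_i^{(t)}, \hat\lambda_i^{(t)}$ guarantees inherited from Theorem~\ref{thm:robustpower}. The base case at stage $1$, $t=0$, is handled by the initial threshold $\zeta_0 = \beta\sigma_1(T)$ combined with incoherence, which gives $\|L^*\|_\infty \leq \mu^3 r\sigma_{\max}^*/n^{3/2}$ and immediately zeroes out every entry outside $\supp(S^*)$, establishing (i).

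The core of the proof is a single inner step. Given (i) and (ii) at time $t$, write $T - S^{(t)} = L^* + E$ with $E := S^* - S^{(t)}$, so $E$ inherits its support from the block-sparsity tensor $\Psi$ in assumption (S). The critical sub-lemma is a sharp bound on the tensor spectral norm $\|E\|$ in terms of $(d, B, \eta)$ and $\|E\|_\infty$: decomposing $E$ block by block and controlling the cross-block contributions via the overlap parameter $\eta$ yields $\|E\| \lesssim d^{3/2}\|E\|_\infty$ up to correction terms involving $\eta B$, which is strictly sub-linear in the naive $d^3 B$-style bound that any matrix RPCA analysis must pay. Feeding this perturbation into Theorem~\ref{thm:robustpower} yields $\ell_2$ guarantees on the recovered eigenpairs $\hat u_i^{(t+1)}, \hat\lambda_i^{(t+1)}$ of $T - S^{(t)}$, which via $\mu$-incoherence translate into an entrywise bound on $L^{(t+1)} - L^*$.

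The hard-thresholding update $S^{(t+1)} = \mathrm{HT}_{\zeta_{t+1}}(T - L^{(t+1)})$ is then analysed indexwise. For $(i,j,k) \notin \supp(S^*)$, $(T - L^{(t+1)})_{ijk} = -(L^{(t+1)} - L^*)_{ijk}$ is in magnitude below $\zeta_{t+1}$ by (ii) and is thresholded to zero, preserving (i). For $(i,j,k) \in \supp(S^*)$, the entry either survives thresholding, in which case $|S^{(t+1)} - S^*|_{ijk}$ is controlled by the same $\ell_\infty$ bound on $L^{(t+1)} - L^*$, or it is zeroed but was already comparable to $\zeta_{t+1}$ in magnitude, so the error remains bounded. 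Combining both cases closes the inner induction with the advertised factor-of-two contraction. The outer induction then uses the termination test $\beta\sigma_{l+1}(L^{(\tau)}) < \delta/(2n)$ to either exit with the stated guarantees (the Frobenius bound $\|\widehat L - L^*\|_F \le \delta$ following from the entrywise bound and the $n^{3/2}$ dimension factor, and the per-component bounds coming directly from Theorem~\ref{thm:robustpower}), or to reset $S^{(0)} \leftarrow S^{(\tau)}$ — which still satisfies (i) — and proceed to stage $l+1$ with a smaller effective $\sigma^*_{l+1}$ driving the next contraction.

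The main obstacle is (a) proving the sharp block-sparse spectral-norm bound in terms of $d, B, \eta$, since this is precisely what quantifies the tensor-vs-matrix advantage and dictates the sparsity budget in (S). A secondary obstacle is (b) propagating errors through the deflation loop inside Procedure~\ref{algo:sspm}: after extracting $(\hat u_1, \hat\lambda_1)$, the residual tensor passed to the next eigenpair extraction already carries estimation error, so Theorem~\ref{thm:robustpower} must be applied to a slightly perturbed input while keeping the effective contraction constant below $1/2$ across all $r$ peeling steps. A final technical point is calibrating the two-term threshold $\zeta_{t+1} = \beta(\sigma_{l+1}(T - S^{(t+1)}) + 2^{-t}\sigma_l(T - S^{(t+1)}))$ so that the $2^{-t}\sigma_l$ piece tracks the current geometric error while the $\sigma_{l+1}$ piece correctly handles the yet-unextracted rank-$(l+1)$ component across stage boundaries.
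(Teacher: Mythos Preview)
Your high-level architecture --- nested induction over stages $l$ and inner iterations $t$, the two invariants $\supp(E^{(t)})\subseteq\supp(S^*)$ and a geometrically decaying $\ell_\infty$ bound on $L^{(t)}-L^*$, the indexwise thresholding analysis, and the termination dichotomy --- matches the paper's proof exactly. The threshold calibration you flag is handled in the paper by a short lemma showing $\lambda_{l+1}+2^{-t}\lambda_l$ and $\sigma^*_{l+1}+2^{-t}\sigma^*_l$ differ by at most a multiplicative constant.

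The gap is in your core low-rank step. You write that Theorem~\ref{thm:robustpower} ``yields $\ell_2$ guarantees on the recovered eigenpairs \ldots\ which via $\mu$-incoherence translate into an entrywise bound on $L^{(t+1)}-L^*$.'' That translation is not what the paper does, and as stated it is not a valid step: incoherence controls $\|u_i\|_\infty$, not $\|\hat u_i - u_i\|_\infty$, so the only generic conversion is the trivial $\|\hat u_i-u_i\|_\infty\le\|\hat u_i-u_i\|_2\le O(\|E\|_2/\sigma_i^*)$, and chaining this through the spectral bound $\|E\|_2\lesssim d^{3/2}\|E\|_\infty$ does not obviously produce a contraction constant strictly below $1$ with the stated constants in (S). The paper instead exploits that $\hat u_i^{(t+1)}$ is an \emph{exact} eigenvector of $T-S^{(t)}$ (this is precisely why Procedure~\ref{algo:sspm} runs gradient ascent after the power iterations, and the paper flags it as ``critically require[d]''). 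Writing the eigenvector equation as
\[
\hat u_i^{(t+1)} \;=\; \Bigl[I+\sum_{p\ge1}(E^i/\lambda_i)^p\Bigr]\sum_{j}\tfrac{\sigma_j^*}{\lambda_i}\langle \hat u_i^{(t+1)},u_j\rangle^2 u_j,
\qquad E^i:=E^{(t)}(\hat u_i^{(t+1)},I,I),
\]
and bounding each term in $\ell_\infty$ requires a second block-sparse lemma you did not mention: $\|E(u,v,I)\|_\infty\le (Bd^2\mu/\sqrt{n})\cdot(\mu/\sqrt{n})\|E\|_\infty$ and analogous control on $\sum_{p\ge1}\|(E^i/\lambda_i)^p u_j\|_\infty$. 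It is this $\ell_\infty$-to-$\ell_\infty$ contraction (with constant $\kappa_t=Bd^2\mu/\sqrt{n}$), not the spectral bound alone, that yields $\|\hat u_i-u_i\|_\infty\lesssim \kappa_t\mu/(\lambda_i\sqrt{n})\cdot\|E^{(t)}\|_\infty$ and drives the explicit $B,d$-dependent sparsity budget in (S). Your obstacle (a) is only half the story; the sharper $\ell_\infty$ contraction lemma is the other half you need to add.
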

\vspace*{-9pt}
\paragraph{Comparison with matrix methods: }We now compare with the matrix methods for recovering the sparse and low rank tensor components in \eqref{eqn:robust-def}.
Robust matrix PCA can be performed either on all the   matrix slices of the input tensor $M_i:=\M(I,I, e_i)$,  for $i \in [n]$, or on the {\em flattened} tensor $\M$ (see the definition in Section~\ref{sec:algo}). We can either use convex relaxation methods~\cite{chandrasekaran2011rank,candes2011robust,hsu2011robust} or non-convex methods~\cite{netrapalli2014non} for robust matrix PCA.

Recall that $\eta$ measures the fraction of overlapping entries between any two different block fibres, i.e. $\max_{i\neq j} \inner{\psi_i, \psi_j} < \eta d$, where $\psi_i$ are the fibres of the block components of tensor $\Psi$ in \eqref{eqn:psi} which encodes the sparsity pattern of $\So$ with $0$-$1$ entries. A short proof is given in Appendix~\ref{proof:cor}.

\begin{corollary}[Superiority of tensor methods]\label{cor:sup}
The proposed tensor method $\ncralgo$ can handle perturbations $\So$ at a higher sparsity level compared to performing  matrix robust PCA on either matrix slices or the flattened tensor using guaranteed methods in~\cite{hsu2011robust,netrapalli2014non} when the (normalized) overlap between different blocks satisfies  $\eta =O(  r / n )^{2/9}$.
\end{corollary}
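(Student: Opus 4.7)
The plan is to compare, fiber by fiber, the maximum number of non-zero corruptions that $\ncralgo$ tolerates against the bounds attainable by guaranteed robust matrix PCA (\cite{hsu2011robust,netrapalli2014non}) when applied to the slices of $T$ or to a flattening of $T$, and then to solve for the overlap regime in which the tensor bound dominates.

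First, I would compute the total corruption per fiber permitted by $\ncralgo$ under assumption (S). Since $\Psi=\sum_{i=1}^B\psi_i\otimes\psi_i\otimes\psi_i$ with $\|\psi_i\|_0\leq d$, every fiber of $\So$ receives contributions from at most $B$ blocks, each contributing at most $d$ non-zeros, so the per-fiber sparsity is bounded by $Bd$. Substituting $d=O\bigl((n/(r\mu^3))^{2/3}\bigr)$ and $B=O(\eta^{-3/2})$ from \eqref{eqn:S1} gives a tolerated per-fiber sparsity of
\[
Bd \;=\; O\!\left(\eta^{-3/2}\,\bigl(n/r\bigr)^{2/3}\mu^{-2}\right).
\]

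Second, I would invoke the best available matrix robust PCA guarantees. For the slice view, each $n\times n$ slice $T(:,:,k)$ decomposes into a rank-$r$ incoherent part (since $\Lo(:,:,k)=\sum_i\sigma_i^* u_i(k)\,u_iu_i^\top$) plus a sparse part, and \cite{hsu2011robust,netrapalli2014non} tolerate $O(n/(\mu^2 r))$ non-zeros per row/column. The flattening $n\times n^2$ has rank at most $r$ but the tight sparsity bound is governed by the smaller dimension, yielding $O(n/(\mu^2 r))$ corruptions per column; the block structure does not help because a matrix method is oblivious to how corruptions align across slices. Hence in both baselines the per-fiber sparsity budget is $O(n/(\mu^2 r))$.

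Third, I would impose the dominance inequality
\[
\eta^{-3/2}\,(n/r)^{2/3}\mu^{-2} \;\gtrsim\; n/(\mu^2 r),
\]
cancel $\mu^{-2}$, and rearrange to $\eta^{-3/2}\gtrsim (n/r)^{1/3}$, i.e.\ $\eta=O\bigl((r/n)^{2/9}\bigr)$, which is exactly the claimed threshold. I would conclude by noting that at this level of overlap, $\ncralgo$ still satisfies the constraints of assumption (S), since $B=O(\eta^{-3/2})$ lies below the other cap $O(n^{2/3}r^{1/3})$ precisely when $\eta\lesssim (r/n)^{2/9}$, so Theorem~\ref{thm:main} applies and yields exact recovery at a sparsity level strictly larger than what either matrix baseline can handle.

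The only step with any subtlety is the second one: one must verify that the block-sparse support really does force at least $\Omega(n/(\mu^2 r))$ corruptions per row on some slice (or per column of the flattening) once we push $Bd$ above the matrix threshold, so that the matrix methods genuinely fail rather than merely losing their guarantee. This follows from the fact that, for blocks with small pairwise overlap $\eta d$, the supports $\psi_i$ concentrate on roughly disjoint coordinate sets, so at least one slice $k$ with $\psi_i(k)=1$ inherits a row of $d$ non-zeros from block $i$ and these rows stack across the $B$ blocks without substantial cancellation, producing rows of sparsity $\Theta(Bd)$. The remainder is bookkeeping.
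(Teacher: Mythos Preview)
Your approach is the same as the paper's: compute the per-fiber sparsity $Bd$ tolerated by $\ncralgo$ under assumption~(S), compare it with the $O(n/(\mu^2 r))$ per-row/column budget of the matrix guarantees, and solve $\eta^{-3/2}(n/r)^{2/3}\gtrsim n/r$ to obtain $\eta=O((r/n)^{2/9})$. The paper's proof is exactly this three-line computation.

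Two remarks on the details. First, your consistency check is miscomputed: the inequality $\eta^{-3/2}\leq n^{2/3}r^{1/3}$ holds when $\eta$ is \emph{large} enough (namely $\eta\gtrsim n^{-4/9}r^{-2/9}$), not when $\eta\lesssim (r/n)^{2/9}$, since $\eta^{-3/2}$ is decreasing in $\eta$. At the threshold $\eta=(r/n)^{2/9}$ one has $\eta^{-3/2}=(n/r)^{1/3}\leq n^{2/3}r^{1/3}$, so the check passes there; for smaller $\eta$ the cap $B=O(n^{2/3}r^{1/3})$ binds instead, giving $Bd=O(n^{4/3}/(r^{1/3}\mu^2))$, which is still larger than $n/(r\mu^2)$. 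So the error is harmless, but the stated equivalence is wrong.

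Second, your final paragraph is unnecessary and not really supportable as written. The corollary, like the paper's proof, is a comparison of \emph{guarantees}: it asserts that $\ncralgo$ succeeds at a per-fiber sparsity level exceeding the threshold up to which the cited matrix analyses are known to succeed. Showing that the matrix methods actually \emph{fail} beyond $O(n/(\mu^2 r))$ would require a separate lower-bound argument (and indeed the deterministic $O(n/r)$ barrier is tight for matrix robust PCA, but your heuristic about rows ``stacking without cancellation'' is not a proof of this). Drop that paragraph; the dominance inequality alone is what is being claimed.
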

\vspace*{-5pt}
\paragraph{Simplifications under random block sparsity: } We now obtain transparent results for a  random block sparsity model, where the components   $\psi_i$ in \eqref{eqn:psi} for the support tensor $\Psi$ are drawn uniformly among all $d$-sparse vectors.
In this case, the incoherence $\eta$ simplifies as $\eta \overset{w.h.p}{=} O (\frac{d}{n} ) $ when  $d> \sqrt{n}$ and $\eta \overset{w.h.p}{=}O\ (1 / \sqrt{n} ),$  o.w.  Thus, the condition on  $B$ in \eqref{eqn:S1} simplifies as  $B= O(\min( n^{2/3} r^{1/3}, ( n / d)^{1.5}))$ when  $d> \sqrt{n}$ and $B= O(\min( n^{2/3} r^{1/3}, n^{0.75}))$  o.w. Recall that as before, we require sparsity level of a fibre in any block as $d =O( n /  r \mu^3)^{2/3}$.
For simplicity, we  assume that $\mu=O(1)$ for the remaining section.

We now explicitly compute the sparsity level of $\So$ allowed by our method and compare it to the level allowed by matrix based robust PCA.

\begin{corollary}[Superiority of tensor methods under random sparsity] Let $D_{\ncralgo}$ be the number of non-zeros in $\So$ ($\|\So\|_0$) allowed by $\ncralgo$ under the analysis of Theorem~\ref{thm:main} and let $D_{\mat}$ be the allowed $\|\So\|_0$  using the standard matrix robust PCA analysis. Also, let $\So$ be sampled from the random block sparse model. Then, the following holds:
\bcase{\frac{D_{\ncralgo}}{ D_{\mat}}= } \Omega\left( n^{1/6}r^{4/3}\right),&\mbox{for }$r< n^{0.25}$,\label{eqn:case5}\\ \Omega\left(n^{5/12} r^{1/3}\right) ,&o.w. \label{eqn:case6}\ecase
\end{corollary}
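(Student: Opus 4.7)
The plan is to specialize the abstract sparsity bound of Theorem~\ref{thm:main} to the random block sparsity model, compare it against the bound implied by standard matrix RPCA, and simplify the ratio in each of the two rank regimes $r<n^{1/4}$ and $r\ge n^{1/4}$.

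First, I would quantify the overlap $\eta$ under the random model. For independent uniform $d$-sparse $0/1$ vectors $\psi_i$, each inner product $\langle\psi_i,\psi_j\rangle$ is a sum of weakly-dependent indicators with mean $d^2/n$, so a Chernoff bound together with a union bound over the $O(B^2)$ pairs yields, with high probability, $\eta=O(d/n)$ when $d>\sqrt n$ (the mean $d^2/n$ dominates the deviation) and $\eta=O(1/\sqrt n)$ when $d\le\sqrt n$ (the sub-Gaussian tail of order $\sqrt n$ dominates, and dividing by $d$ stays $\le 1/\sqrt n$ in the relevant range of $d$). Substituting into the constraint $B\le\min(n^{2/3}r^{1/3},\eta^{-3/2})$ of condition~(S) at the maximal $d=(n/r)^{2/3}$, the crossover $d=\sqrt n$ corresponds to $r=n^{1/4}$; the second term in the $\min$ binds in both regimes, giving $B_{\max}=\eta^{-3/2}=(n/d)^{3/2}=n^{1/2}r$ for $r<n^{1/4}$ and $B_{\max}=n^{3/4}$ for $r\ge n^{1/4}$. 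Because the expected coordinate-wise overlap between blocks is $o(1)$, one has $D_{\ncralgo}=|\supp(\Psi)|=\Theta(B_{\max}\,d^3)$, which evaluates to $\Theta(n^{5/2}/r)$ for $r<n^{1/4}$ and $\Theta(n^{11/4}/r^2)$ for $r\ge n^{1/4}$.

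Next, for $D_{\mat}$, I would invoke the standard matrix RPCA analyses (\cite{netrapalli2014non,hsu2011robust}), which require a per-row/column sparsity fraction $\alpha=O(1/(\mu^2 r))$. Applied either to the $n\times n$ slices of $\M$ or to the mode-1 flattening (an $n\times n^2$ matrix whose right-incoherence degrades to $\mu^2$), and propagated through the random block sparse pattern (per-fibre sparsity of a slice is $\Theta(Bd^3/n^2)$ and per-row sparsity of the flattening is $\Theta(Bd^3/n)$), a joint optimization of $Bd^3$ within the matrix RPCA feasibility region caps $\|\So\|_0$ at $D_{\mat}=O\bigl((n/r)^{7/3}\bigr)$. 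Taking ratios case by case then yields $D_{\ncralgo}/D_{\mat}=(n^{5/2}/r)\big/(n/r)^{7/3}=n^{1/6}r^{4/3}$ for $r<n^{1/4}$ and $(n^{11/4}/r^2)\big/(n/r)^{7/3}=n^{5/12}r^{1/3}$ for $r\ge n^{1/4}$, matching the claim.

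The main obstacle is establishing $D_{\mat}=O\bigl((n/r)^{7/3}\bigr)$: the naive matrix RPCA total-sparsity bound is $O(n^3/r)$, which is too loose to imply the claimed ratio. One must carefully isolate the tight binding constraint under random block sparsity---simultaneously tracking the per-row/column fraction, the degraded right-incoherence $\mu^2$ of the flattening, and the block-internal density $d$---to extract the sharper $(n/r)^{7/3}$ scaling. The remaining steps (Chernoff concentration for $\eta$, substitution into~(S), and the ratio arithmetic) are essentially routine.
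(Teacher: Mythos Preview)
Your computation of $D_{\ncralgo}$ is correct and matches the paper's simplification of condition~(S) under random block sparsity: with $d=(n/r)^{2/3}$ one gets $B_{\max}=n^{1/2}r$ for $r<n^{1/4}$ and $B_{\max}=n^{3/4}$ otherwise, hence $D_{\ncralgo}=B_{\max}d^3$ equals $n^{5/2}/r$ and $n^{11/4}/r^2$ respectively. The difficulty is exactly where you locate it, in $D_{\mat}$, but your proposed resolution does not deliver $(n/r)^{7/3}$.

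If you feed the \emph{random-model} per-fibre count $\Theta(Bd^3/n^2)$ (or the per-row count $\Theta(Bd^3/n)$ for the flattening) into the matrix RPCA constraint $\alpha=O(1/(\mu^2 r))$, you only obtain $Bd^3\le n^3/r$, i.e.\ $D_{\mat}=O(n^3/r)$. Tracking the degraded right-incoherence $\mu^2$ of the flattening changes nothing at order level when $\mu=O(1)$; there is no way to squeeze $(n/r)^{7/3}$ out of that route, and with $D_{\mat}=n^3/r$ the claimed ratios fail.

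The argument that actually yields the corollary is the direct extension of the paper's proof of Corollary~\ref{cor:sup}. One compares the two methods on the \emph{same} block-sparse family with the per-block fibre sparsity fixed at $d=(n/r)^{2/3}$, and uses the \emph{worst-case} per-fibre count $D=dB$ (as in that proof), not the random-model average. The matrix RPCA requirement $dB\le n/(r\mu^2)$ then forces $B_{\mat}\le (n/r)/d=(n/r)^{1/3}$, so $D_{\mat}=B_{\mat}d^3=(n/r)^{7/3}$. Equivalently, since $d$ is common, the ratio is just $B_{\max}/B_{\mat}$, which evaluates to $n^{1/2}r\big/(n/r)^{1/3}=n^{1/6}r^{4/3}$ for $r<n^{1/4}$ and $n^{3/4}\big/(n/r)^{1/3}=n^{5/12}r^{1/3}$ otherwise. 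The randomness of the block model enters only through the simplification of $\eta$ (and hence of $B_{\max}$) on the tensor side; the matrix bound is taken from the standard worst-case analysis and does not benefit from it.
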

\vspace*{-7pt}
\paragraph{Unstructured sparse perturbations $\So$: }If we do not assume block sparsity in (S), but instead assume unstructured sparsity at level $D$, i.e. the number of non zeros along any fibre of $\So$ is at most $D$, then the matrix methods are superior. In this case, for success of $\ncralgo$, we require that $D=O\left( \frac{\sqrt{n}}{ r \mu^3}\right)$ which is worse than the requirement of matrix methods $D=O(\frac{n}{r \mu^2})$. Our analysis suggests that if there is no structure in sparse patterns, then matrix methods are superior. This is possibly due to the fact that finding a low rank tensor decomposition requires more stringent conditions on the noise level. Meanwhile, when there is no block structure, the tensor algebraic constraints do not add significantly new information. However, in the experiments, we find that our tensor method $\ncralgo$ is superior to matrix methods even in this case, in terms of both accuracy and running times.

\subsection{Analysis of Procedure~\ref{algo:sspm}}
\begin{figure*}[t]
\centering
\begin{tabular}{cccc}
\hspace*{-10pt}
\psfrag{Figure}[c]{\tiny $n = 100, \mu = 1, r = 5, det. sp.$ }
\psfrag{10}[c]{\tiny 10}
\psfrag{0.6}{\tiny 0.6}
\psfrag{0.5}{\tiny 0.5}
\psfrag{0.4}{\tiny 0.4}
\psfrag{Nonwhiten}{}
\psfrag{40}[l]{\tiny \hspace{-1em} 40}
\psfrag{30}[l]{\tiny \hspace{-1em} 30}
\psfrag{20}[l]{\tiny \hspace{-1em} 20}
\psfrag{Matrix(slice)}{}
\includegraphics[width=0.2\textwidth]{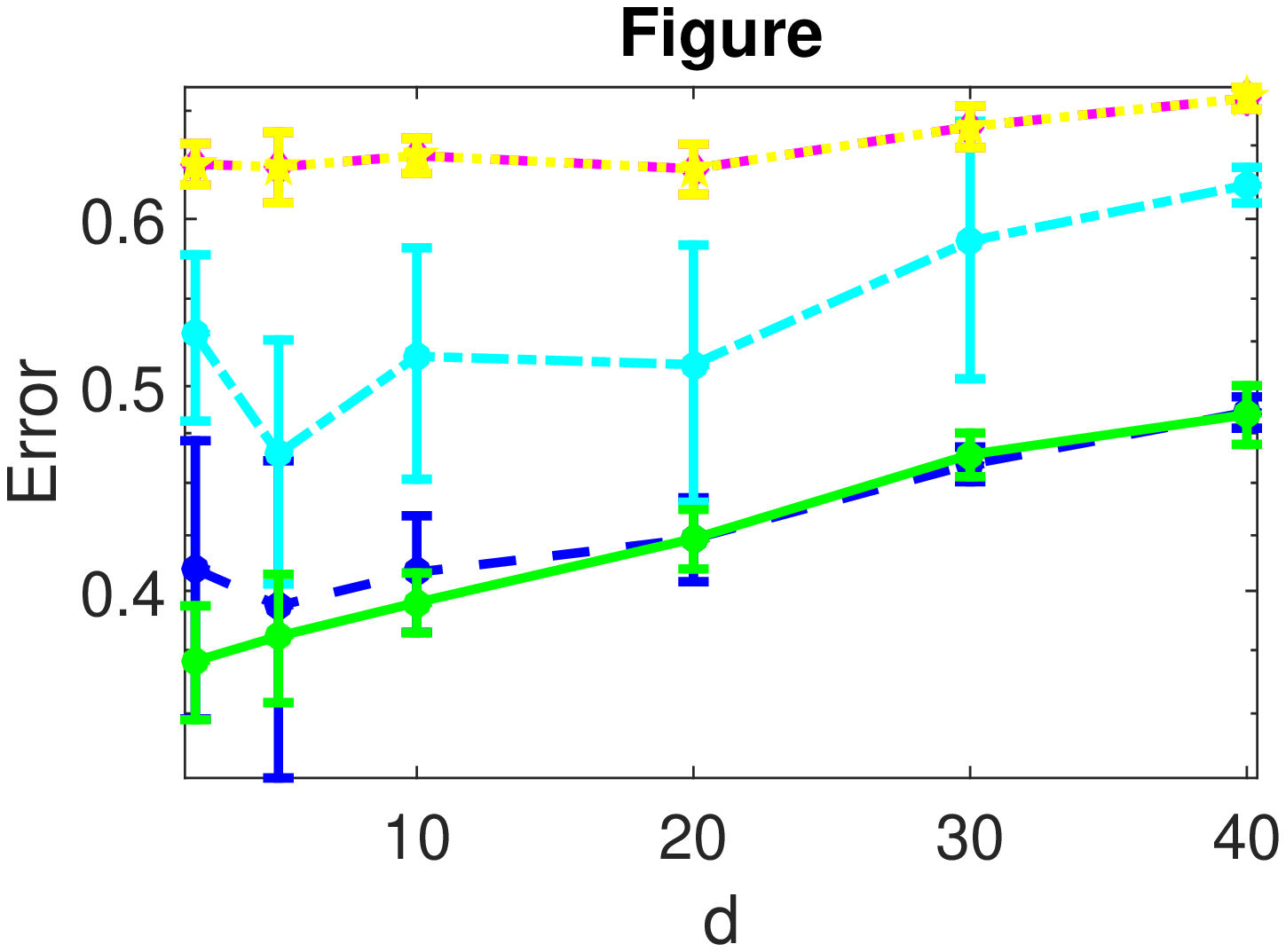}&
\hspace*{-10pt}
\psfrag{Figure}[c]{\tiny $n = 100, \mu = 1, r = 25, det. sp.$ }
\psfrag{10}[c]{\tiny 10}
\psfrag{0.8}{\tiny 0.8}
\psfrag{0.7}{\tiny 0.7}
\psfrag{0.6}{\tiny 0.6}
\psfrag{0.5}{\tiny 0.5}
\psfrag{0.4}{\tiny 0.4}
\psfrag{0.3}{\tiny 0.3}
\psfrag{Nonwhiten}{}
\psfrag{40}[l]{\tiny \hspace{-1em} 40}
\psfrag{30}[l]{\tiny \hspace{-1em} 30}
\psfrag{20}[l]{\tiny \hspace{-1em} 20}
\includegraphics[width=0.2\textwidth]{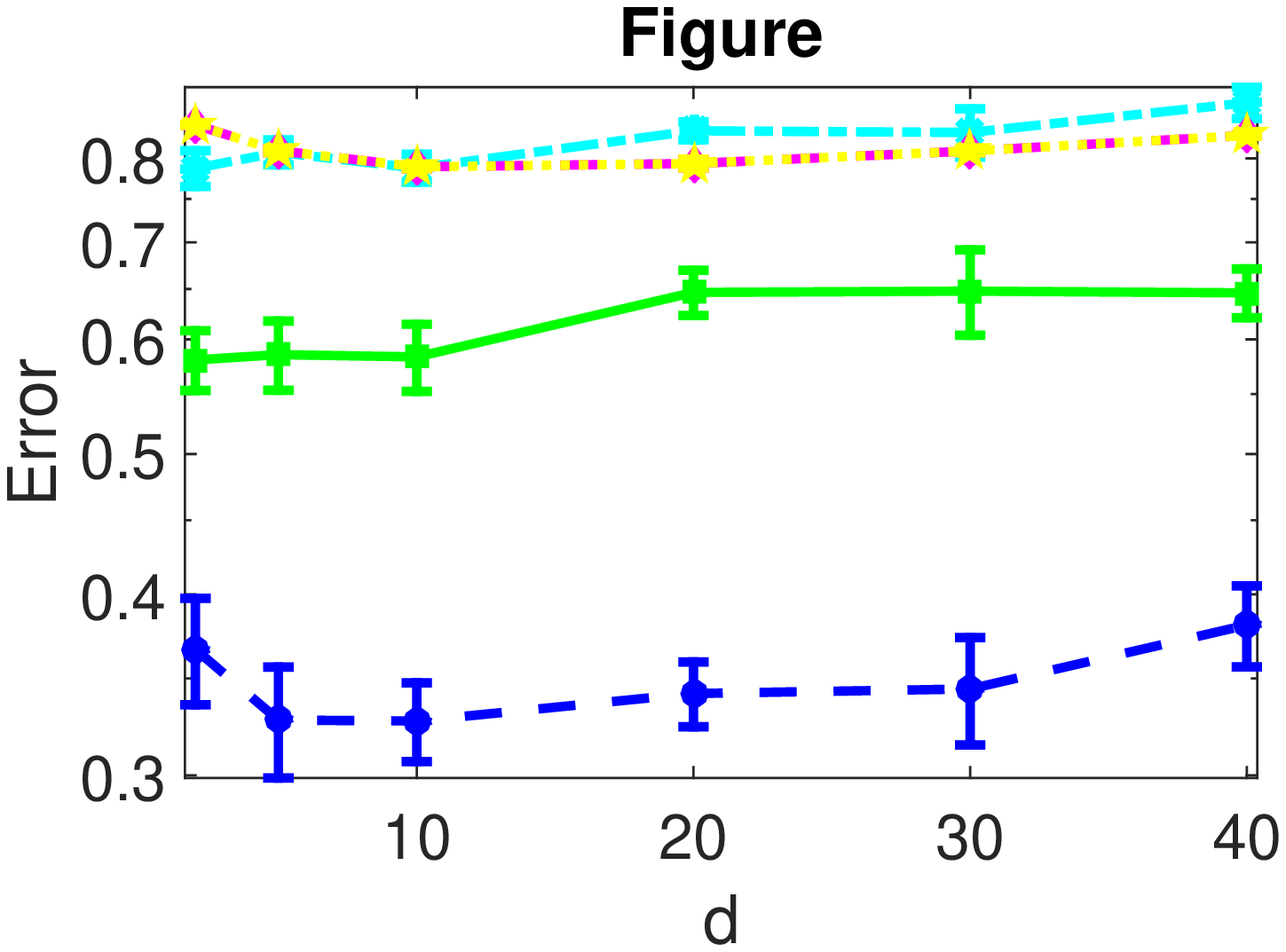}&
\hspace*{-10pt}
\psfrag{Figure}[c]{\tiny $n = 100, \mu = 1, r = 5, bl. sp.$ }
\psfrag{10}[c]{\tiny 10}
\psfrag{0}[c]{\tiny 0}
\psfrag{40}[l]{\tiny \hspace{-1em} 40}
\psfrag{30}[l]{\tiny \hspace{-1em} 30}
\psfrag{20}[l]{\tiny \hspace{-1em} 20}
\psfrag{Nonwhiten}{\tiny T-RPCA}
\psfrag{Whiten(random)}{\tiny T-RPCA-W(slice)}
\psfrag{Whiten(true)}{\tiny T-RPCA-W(true)}
\psfrag{Matrix(slice)}{\tiny M-RPCA(slice)}
\psfrag{Matrix(flat)}{\tiny M-RPCA(flat)}
\includegraphics[width=0.2\textwidth]{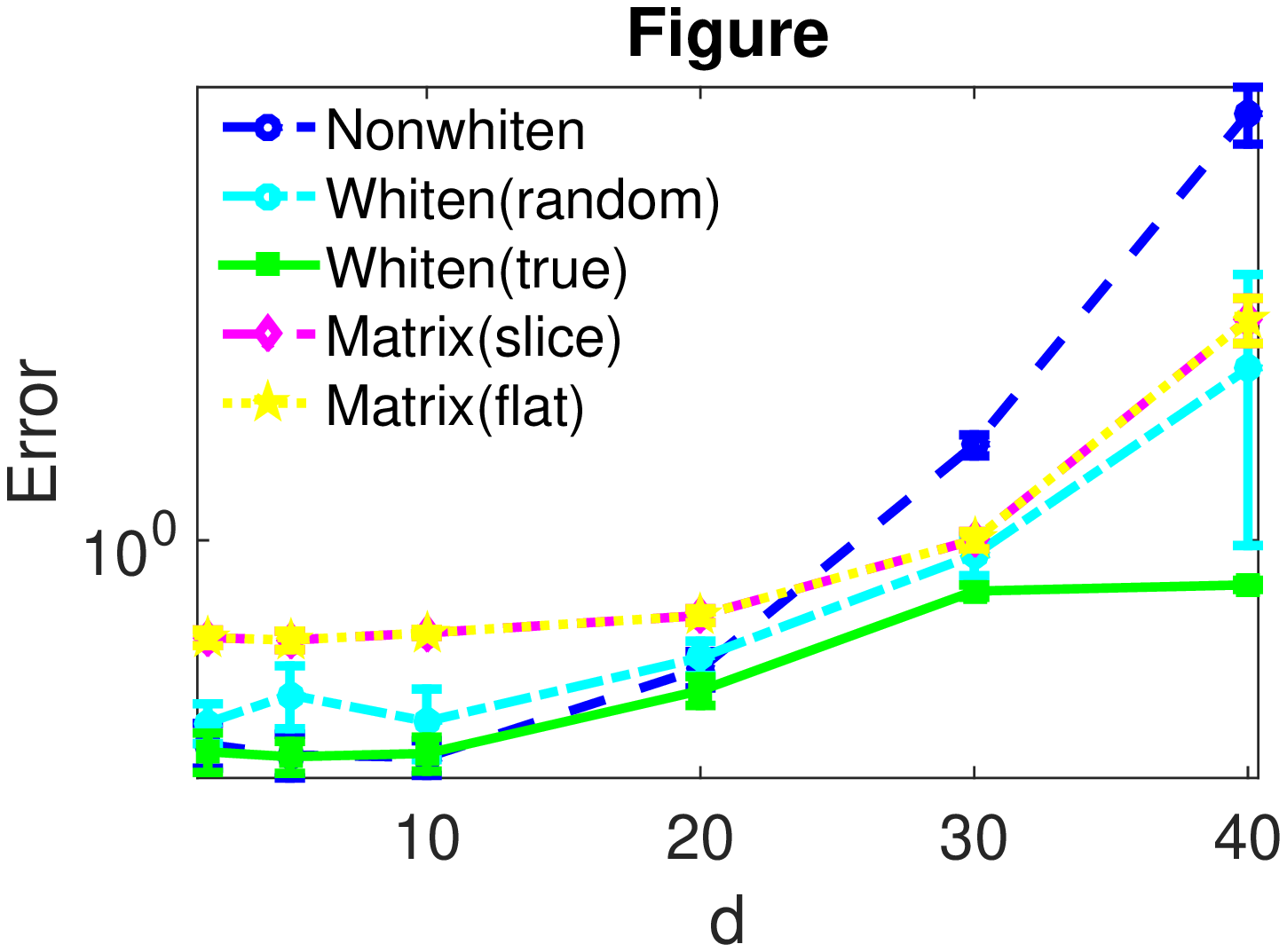}&
\hspace*{-10pt}
\psfrag{Figure}[c]{\tiny $n = 100, \mu = 1, r = 25, bl. sp.$ }
\psfrag{10}[c]{\tiny 10}
\psfrag{0}[c]{\tiny 0}
\psfrag{40}[l]{\tiny \hspace{-1em} 40}
\psfrag{30}[l]{\tiny \hspace{-1em} 30}
\psfrag{20}[l]{\tiny \hspace{-1em} 20}
\includegraphics[width=0.2\textwidth]{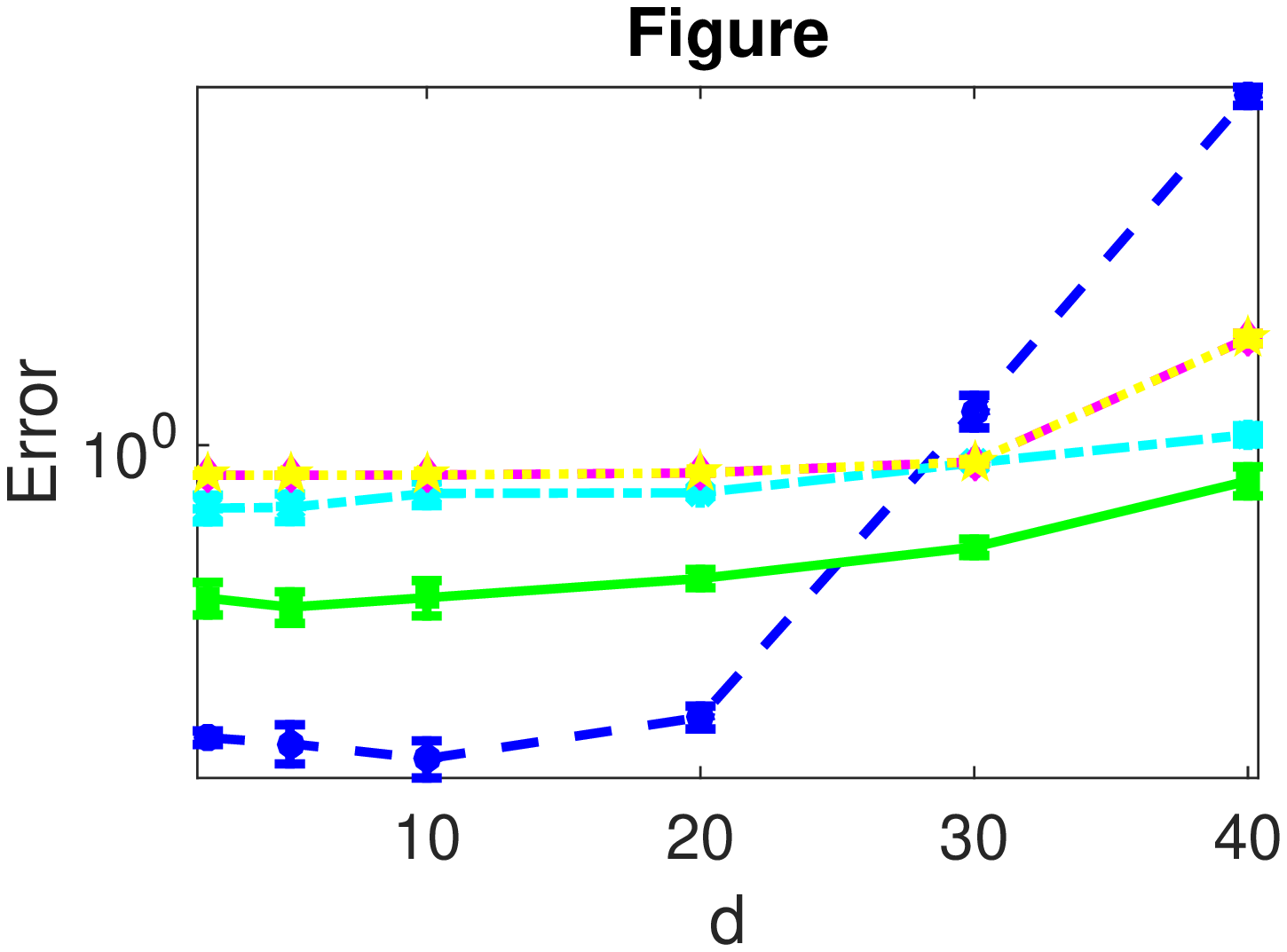}\\[-3pt]
(a)&(b)&(c)&(d)\\
\end{tabular}
\caption{\small (a) Error comparison of different methods with deterministic sparsity, rank $5$, varying $d$. (b) Error comparison of different methods with deterministic sparsity, rank $25$, varying $d$. (c) Error comparison of different methods with block sparsity, rank $5$, varying $d$. (d) Error comparison of different methods with block sparsity, rank $25$, varying $d$. Error = $\|L^*-L\|_F/\|L^*\|_F$. The curve labeled `T-RPCA-W(slice)' refers to considering recovered low rank part from a random slice of the tensor $T$ by using matrix non-convex RPCA method as the whiten matrix, `T-RPCA-W(true)' is using true second order moment in whitening, `M-RPCA(slice)' treats each slice of the input tensor as a non-convex matrix-RPCA(M-RPCA) problem, `M-RPCA(flat)' reshapes the tensor along one mode and treat the resultant as a matrix RPCA problem. All four sub-figures share same curve descriptions.}
\label{fig:synthetic_plots_1}
\end{figure*}

\begin{figure*}[t]
\centering
\begin{tabular}{cccc}
\hspace*{-10pt}
\psfrag{Figure}[c]{\tiny $n = 100, \mu = 1, r = 5, det. sp.$ }
\psfrag{0}[l]{\tiny \hspace{-1em} 0}
\psfrag{10}[l]{\tiny \hspace{-1em} 10}
\psfrag{40}[l]{\tiny \hspace{-1em} 40}
\psfrag{30}[l]{\tiny \hspace{-1em} 30}
\psfrag{20}[l]{\tiny \hspace{-1em} 20}
\includegraphics[width=0.2\textwidth]{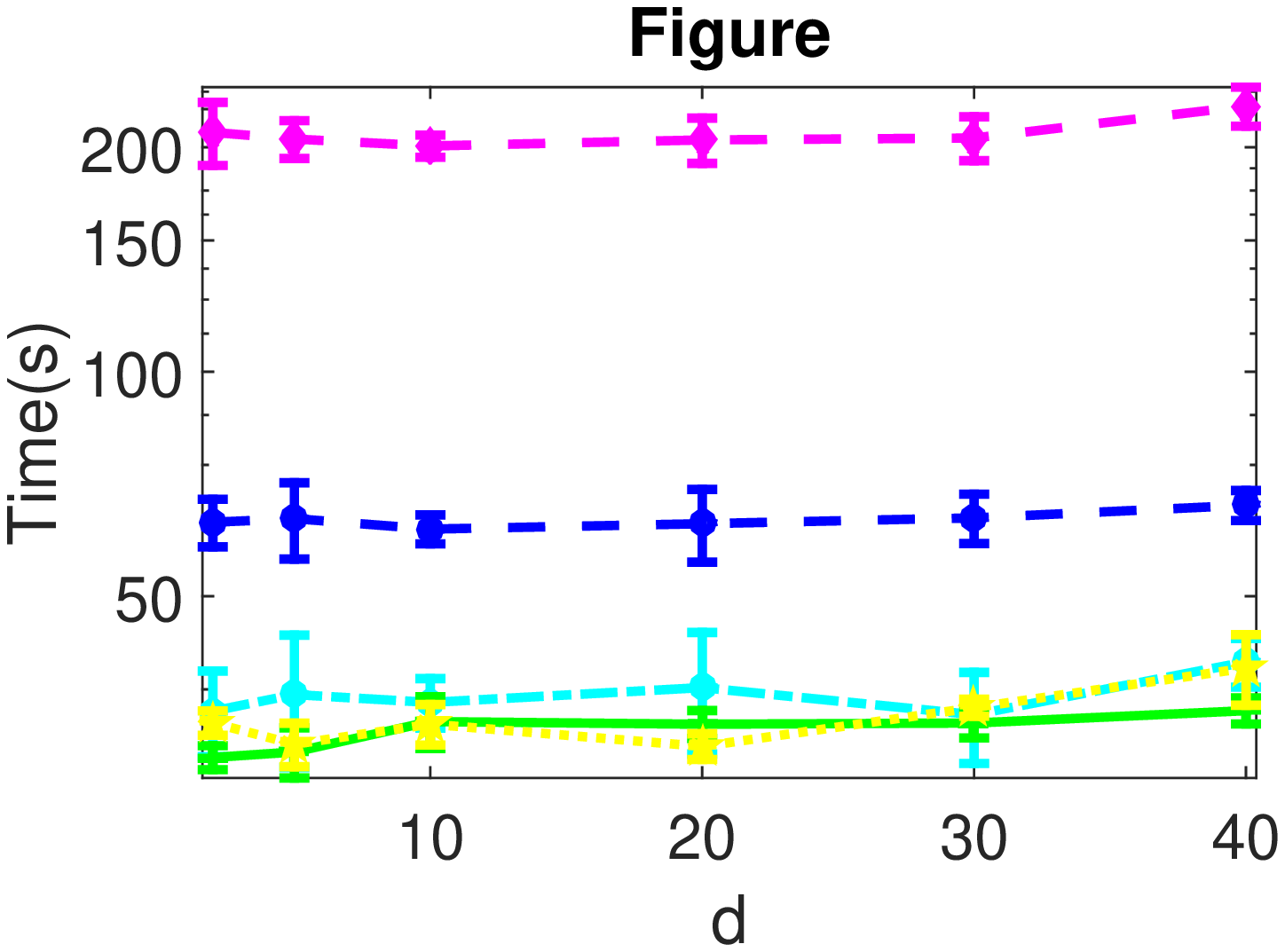}&
\hspace*{-10pt}
\psfrag{0}[l]{\tiny \hspace{-1em} 0}
\psfrag{10}[l]{\tiny \hspace{-1em} 10}
\psfrag{40}[l]{\tiny \hspace{-1em} 40}
\psfrag{30}[l]{\tiny \hspace{-1em} 30}
\psfrag{20}[l]{\tiny \hspace{-1em} 20}
\psfrag{Figure}[c]{\tiny $n = 100, \mu = 1, r = 25, det. sp.$ }
\includegraphics[width=0.2\textwidth]{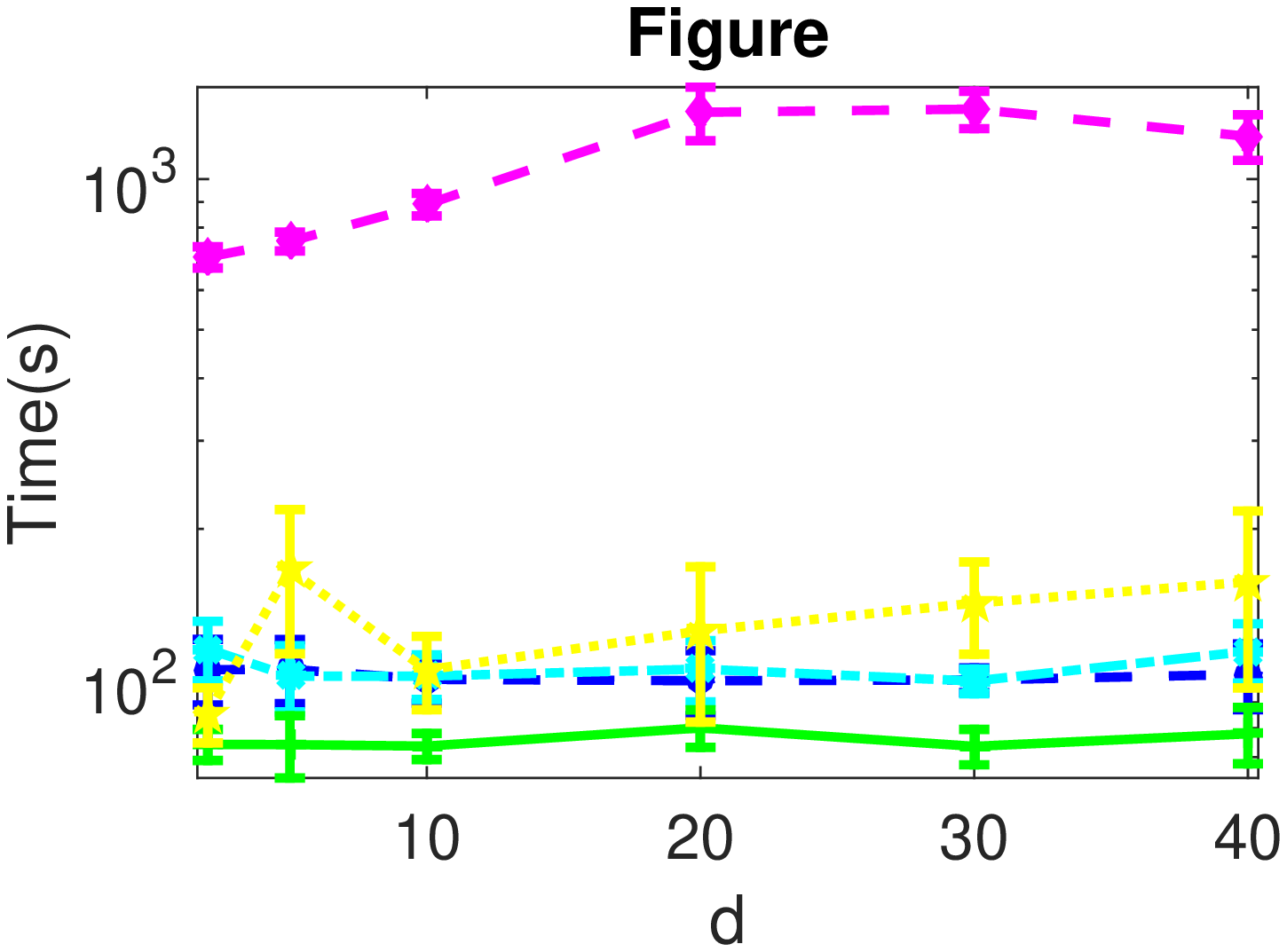}&
\hspace*{-10pt}
\psfrag{Figure}[c]{\tiny $n = 100, \mu = 1, r = 5, bl. sp.$ }
\psfrag{0}[l]{\tiny \hspace{-1em} 0}
\psfrag{10}[l]{\tiny \hspace{-1em} 10}
\psfrag{40}[l]{\tiny \hspace{-1em} 40}
\psfrag{30}[l]{\tiny \hspace{-1em} 30}
\psfrag{20}[l]{\tiny \hspace{-1em} 20}
\psfrag{Nonwhiten}{\tiny T-RPCA}
\psfrag{Whiten(random)}{\tiny T-RPCA-W(slice)}
\psfrag{Whiten(true)}{\tiny T-RPCA-W(true)}
\psfrag{Matrix(slice)}{\tiny M-RPCA(slice)}
\psfrag{Matrix(flat)}{\tiny M-RPCA(flat)}
\includegraphics[width=0.2\textwidth]{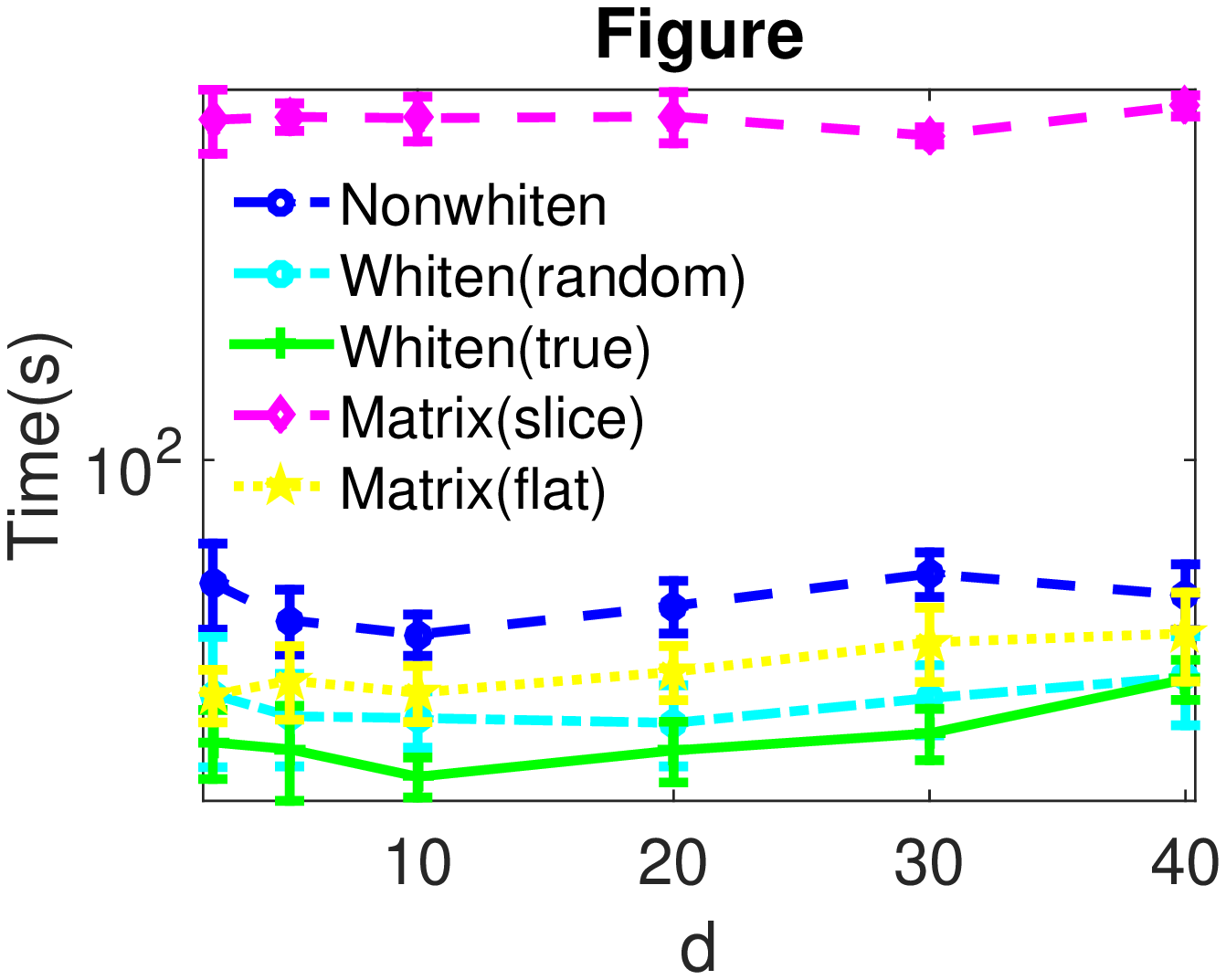}&
\hspace*{-10pt}
\psfrag{0}[l]{\tiny \hspace{-1em} 0}
\psfrag{10}[l]{\tiny \hspace{-1em} 10}
\psfrag{40}[l]{\tiny \hspace{-1em} 40}
\psfrag{30}[l]{\tiny \hspace{-1em} 30}
\psfrag{20}[l]{\tiny \hspace{-1em} 20}
\psfrag{Figure}[c]{\tiny $n = 100, \mu = 1, r = 25, bl. sp.$ }
\includegraphics[width=0.2\textwidth]{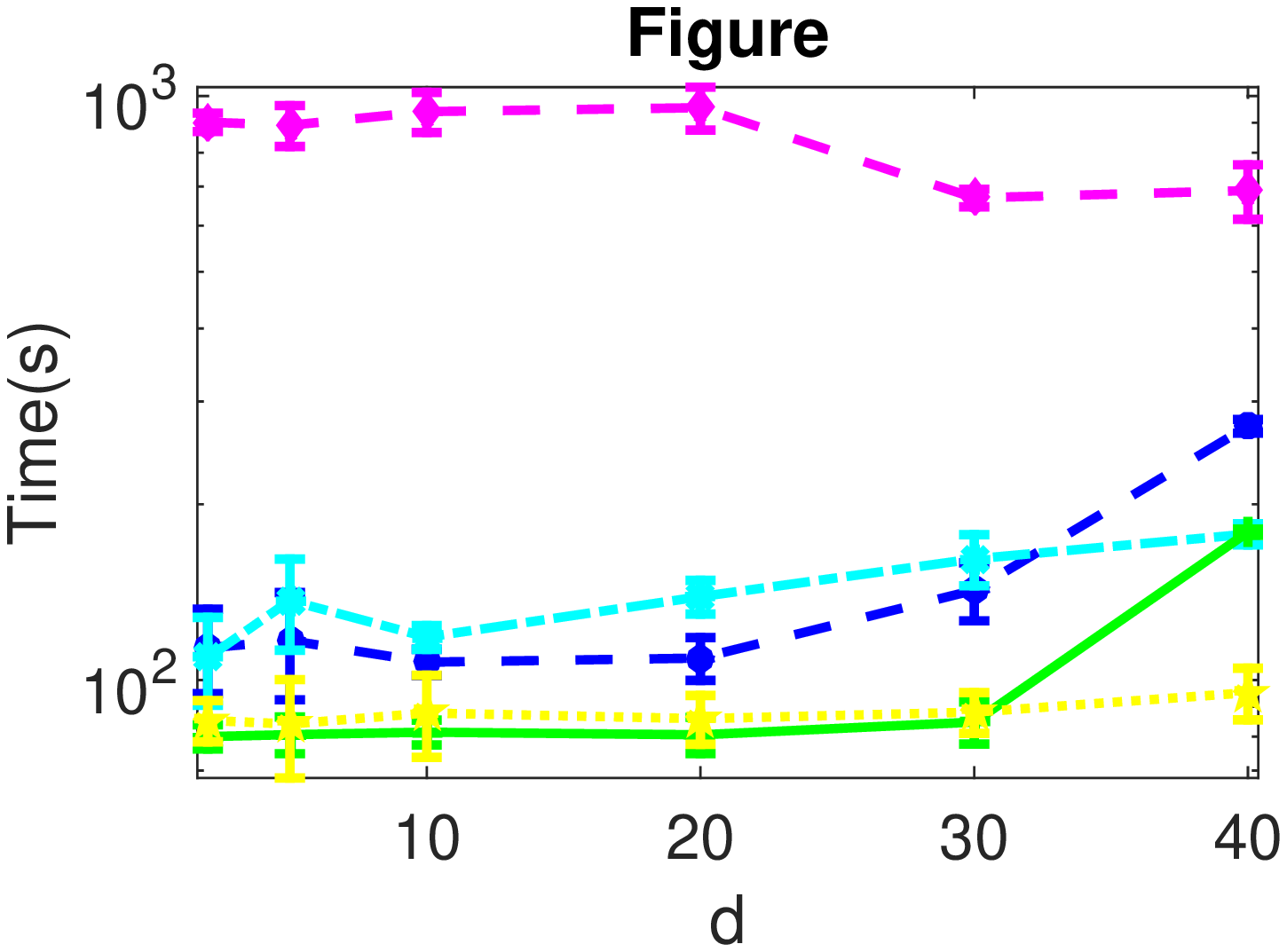}\\[-3pt]
(a)&(b)&(c)&(d)\\
\end{tabular}
\caption{\small (a) Running time comparison of different methods with deterministic sparsity, rank $5$, varying $d$. (b) Running time comparison of different methods with deterministic sparsity, rank $25$, varying $d$. (c) Running time comparison of different methods with block sparsity, rank $5$, varying $d$. (d) Running time comparison of different methods with block sparsity, rank $25$, varying $d$. Curve descriptions are same as in Figure 1.}
\label{fig:synthetic_plots_2}
\end{figure*}

Our proof of Theorem~\ref{thm:main} depends critically on an assumption that Procedure~\ref{algo:sspm} indeed obtains the top-$r$ eigen-pairs. We now concretely prove this claim. Let $\widetilde{L}$ be a symmetric tensor which is a perturbed version of an orthogonal tensor $\Lo$, $ \widetilde{L} = \Lo+ E \in \R^{n \times n \times n}, \quad \Lo = \sum_{i\in [r]} \sigma^*_i
u_i^{\otimes 3},$  where  $\sigma^*_1 \geq \sigma^*_2\ldots \sigma^*_r > 0$ and $\{ u_1, u_2, \dotsc, u_r \}$
form an orthonormal basis.

Recall that $N_1$ is the number of initializations to seed the power method,   $N_2$ is the number of  power iterations,   and $\delta$ is the convergence criterion.
For any $\xi \in (0,1)$, and  $l \leq r$, assume the following
\begin{align*}\label{eqn:condpower}
&\|E\| \leq O ( \sigma^*_l / \sqrt{n} ) ,
\, N_1= O (n^{1+c} \log (1 / \xi ) ), \\
&N_2 \geq \Omega    (   \log\left( {k} \right) +  \log \log ( \sigma^*_{\max}/\|E\| )   ),
\end{align*} where $c$ is a small constant.
We now state the main result for recovery of components of $\Lo$ when Procedure~\ref{algo:sspm} is applied to $\widetilde{L}$.

\begin{theorem}[Gradient Ascent method]
\label{thm:robustpower}
Let $\widetilde{L}=\Lo+E$ be as defined above with $\|E\|\leq O(\sigma_r^*/\sqrt{n})$.  Then, applying  Procedure~\ref{algo:sspm} with deflations on $\widetilde{L}$  with target rank $l \leq r$,  yields  $l$ eigen-pairs of $\widetilde{L}$,    given by
 $(\lambda_1, \hat{u}_1), (\lambda_2, \hat{u}_2), \dotsc,
(\lambda_l, \hat{u}_l),$ up to arbitrary small error $\delta>0$ and  with probability at least $1-\xi$. Moreover, there exists a permutation $\pi$ on
$[l]$ such that: $\forall j \in [l]$,
\begin{align*}|\sigma^*_{\pi(j)}-\lambda_j| \leq O\left(\|E\|+\delta\right),\\
\|u_{\pi(j)}-\hat{u}_j\| \leq O ( (\|E\| / \sigma^*_{\pi(j)})+\delta ).
\end{align*}
\end{theorem}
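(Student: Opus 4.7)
\textbf{Proof plan for Theorem~\ref{thm:robustpower}.}
The plan is to decompose the analysis along the two phases of Procedure~\ref{algo:sspm}. Phase~I (lines 2--12) uses $N_1$ random initializations followed by $N_2$ tensor power iterations and deflation to land, for each $j \in [l]$, inside a small neighborhood of some true component $u_{\pi(j)}$. Phase~II (lines 13--19) refines each rough estimate via gradient ascent on $f(v) = \widetilde L(v,v,v) - \lambda \|v\|^4$ until we reach a true eigenpair of $\widetilde L$ up to accuracy $\delta$. The deflation bookkeeping then lets us pass from one eigenpair to the next while controlling how errors accumulate.

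\textbf{Phase~I: random initialization + power method.}
First I would argue that for a Gaussian seed $\theta \sim \mathcal{N}(0,I_n)$, the top left singular vector of $\widetilde L(I,I,\theta)$ has a nontrivial correlation with at least one $u_i$ with probability $\Omega(n^{-c})$, so that over $N_1 = \tilde O(n^{1+c})$ restarts, with probability $1-\xi$ at least one restart is a ``good'' seed for every remaining eigenvector. Given such a warm start, the tensor power iteration $v \mapsto \widetilde L(I,v,v)/\|\widetilde L(I,v,v)\|$ applied on top of $L^* = \sum_i \sigma^*_i u_i^{\otimes 3}$ has the standard contraction property: the coefficients along non-dominant $u_k$ shrink quadratically while the coefficient along $u_{\pi(j)}$ approaches $1$; the perturbation $E$ contributes an $O(\|E\|/\sigma^*_{\pi(j)})$ additive term at each step. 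Choosing $N_2 = \Omega(\log k + \log\log(\sigma^*_{\max}/\|E\|))$ then places $v^{(N_2)}$ inside a ball $B(u_{\pi(j)}, c_0/\sqrt{n})$ of the correct eigenvector, which will serve as the basin of attraction required for Phase~II. This part closely follows the analysis of~\cite{AnandkumarEtal:tensor12}, but I would need to verify that the ``best restart'' selection rule (step~9) correctly identifies the top remaining component despite the adversarial $E$, using that $\widetilde L(v,v,v) \approx \sigma^*_{\pi(j)}$ whenever $v$ aligns with $u_{\pi(j)}$.

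\textbf{Phase~II: gradient ascent via local strong concavity.}
Here I would rewrite the eigenvalue problem as maximizing the Lagrangian $f(v) = \widetilde L(v,v,v) - \lambda \|v\|^4$ with $\lambda$ chosen as the current Rayleigh quotient. Any critical point satisfies $\widetilde L(I,v,v) = 2\lambda \|v\|^2 v$, i.e., $v$ is an eigenvector of $\widetilde L$. The crux is to show that, on the ball $B(u_{\pi(j)}, c_0/\sqrt n)$ obtained from Phase~I, the Hessian $\nabla^2 f(v) = 6\,\widetilde L(I,I,v) - 4\lambda(\|v\|^2 I + 2vv^\top)$ is strongly concave along every direction orthogonal to $v$, with smoothness parameter on the order of $\lambda(1 + \lambda/\sqrt n)$. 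Because $\widetilde L(I,I,v) = \sum_i \sigma^*_i \langle u_i, v\rangle\, u_iu_i^\top + E(I,I,v)$, the leading term evaluated at $v \approx u_{\pi(j)}$ is essentially $\sigma^*_{\pi(j)} u_{\pi(j)} u_{\pi(j)}^\top$ plus small cross terms, and the quartic regularizer $-\lambda\|v\|^4$ provides the curvature in the radial direction, yielding the strong concavity. The prescribed step size $\tfrac{1}{4\lambda(1+\lambda/\sqrt n)}$ is exactly $1/L$ where $L$ is this smoothness bound, so projected-free gradient ascent contracts the error by a factor $1 - \Omega(1/\kappa)$ per step, delivering $\delta$-accuracy in $O(\log(1/\delta))$ iterations and the stated eigenpair error bounds $|\sigma^*_{\pi(j)} - \lambda_j| \le O(\|E\|+\delta)$, $\|u_{\pi(j)} - \hat u_j\| \le O(\|E\|/\sigma^*_{\pi(j)} + \delta)$, the first term in each coming from the $E$-perturbation of the fixed point and the second from the optimization tolerance.

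\textbf{Deflation and union bound.}
After each eigenpair $(\lambda_j, \hat u_j)$ is extracted, the residual tensor $\widetilde L - \sum_{i \le j}\lambda_i \hat u_i^{\otimes 3}$ equals $\sum_{i>j} \sigma^*_{\pi(i)} u_{\pi(i)}^{\otimes 3}$ plus a new error tensor bounded by $\|E\| + \sum_{i \le j} O(\|E\|)$, which remains $O(\|E\|) \le O(\sigma^*_l/\sqrt n)$ for all $j \le l$ by induction. Thus the hypotheses of Phase~I and Phase~II apply to every stage, and a union bound over the $r$ stages and $N_1$ restarts inflates the failure probability by only a $\poly(n)$ factor that is absorbed into the choice of $N_1 = \tilde O(n^{1+c}\log(1/\xi))$.

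\textbf{Main obstacle.}
The delicate step is controlling the basin of attraction for Phase~II: I need to exhibit a radius $c_0/\sqrt n$ within which $f$ is uniformly strongly concave (modulo the radial direction) despite the $E$-perturbation, and simultaneously argue that Phase~I actually deposits the iterate inside this radius. Matching these two scales---the post-power-method error of $O(\|E\|/\sigma^*_r + (\sigma^*_r/\sigma^*_{\max})^{N_2})$ against the strong-concavity radius of the quartic-regularized Lagrangian---is where the precise choice of $N_1, N_2$ and the step size becomes essential, and is the technically novel part relative to prior tensor power method analyses.
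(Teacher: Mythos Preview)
Your plan is essentially the paper's own proof: the two-phase decomposition (SVD-seeded power method to land in a $O(1/\sqrt{n})$ ball, then gradient ascent on the quartic-regularized Rayleigh quotient using local strong concavity/smoothness, cf.\ Lemma~\ref{lem:grad_asc}), together with the inductive deflation argument, matches Appendix~\ref{app:power} step for step. The only cosmetic differences are that the paper uses the normalization $f(v)=\widetilde L(v,v,v)-\tfrac34\lambda\|v\|^4$ (so the Hessian at $u_i$ is exactly $-3\sigma_i^*I$, giving strong concavity in \emph{all} directions, not just orthogonal to $v$) and that its Phase~I analysis is phrased for a shifted power iteration; neither affects the logic you outlined.
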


While~\cite[Thm. 5.1]{AnandkumarEtal:tensor12} considers  power method, here we consider the power method followed by a gradient ascent procedure. With both  methods, we   obtain outputs $(\lambda_i, \hat{u}_i)$ which are ``close'' to the original eigen-pairs of $(\sigma^*_i,u_i)$ of $\Lo$. However, the crucial difference is that  Procedure~\ref{algo:sspm}  outputs $(\lambda_i, \hat{u}_i)$   correspond to specific eigen-pairs of input tensor $\tilde{L}$, while the outputs of the usual power method have no such property and only guarantees accuracy upto $O(\|E\|_2)$ error. We critically require the eigen property of the outputs in order to guarantee contraction of error in $\ncralgo$ between alternating steps of low rank decomposition and thresholding.

The analysis of Procedure~\ref{algo:sspm} has two phases. In the first phase, we prove  that with $N_1$ initializations and $N_2$ power iterations, we get close to true eigenpairs of $\Lo$, i.e. $(\sigma^*_i,u_i)$ for $i \in [l]$. After this, in the second phase, we prove  convergence to   eigenpairs of $\widetilde{L}$.

The proof for the first phase  is on lines of proof in~\cite{AnandkumarEtal:tensor12}, but with improved requirement on error tensor $E$ in~\eqref{eqn:condpower}. This is due to the use of SVD initializations rather than random initializations to seed the power method, and its analysis is given in~\cite{DBLP:journals/corr/AnandkumarGJ14}.

Proof of the second phase follows using two observations: a) Procedure~\ref{algo:sspm} is just a simple gradient ascent of the following program: $  f(v)=\widetilde{L}(v, v, v)-\frac{3}{4}\lambda \|v\|_2^4$, b) with-in a small distance to the eigenvectors of $\widetilde{L}$, $f(v)$ is   strongly concave and as well as strongly smooth with appropriate parameters. See below lemma for a detailed proof of the above claim. Hence, using our initialization guarantee from the phase-one, Procedure~\ref{algo:sspm} converges to a $\delta$ approximation to eigen-pair of $\widetilde{L}$ in time $O(\log(1/\delta))$ and hence, Theorem~\ref{thm:robustpower} holds.
\vspace*{-5pt}
\begin{lemma}
  Let $f(v)=\widetilde{L}(v, v, v)-\frac{3}{4}\lambda \|v\|_2^4$. Then, $f$ is $\sigma_i^*(1-\frac{300\sigma_r^*}{\sqrt{n}})$-strongly concave and $\sigma_i^*(1+\frac{300\sigma_r^*}{\sqrt{n}})$ strongly smooth at all points $(v, \lambda)$ s.t. $\|v-u_i\|\leq \frac{10}{\sqrt{n}}$ and $|\lambda-\sigma_i^*|\leq \frac{10\sigma_r^*}{\sqrt{n}}$, for some $1\leq i\leq r$. Procedure~\ref{algo:sspm} converges to an eigenvector of $\widetilde{L}$ at a linear rate.
  \label{lem:grad_asc}
\end{lemma}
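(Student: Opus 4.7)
The plan is to establish strong concavity and strong smoothness of $f$ by computing its Hessian explicitly and bounding its deviation from a multiple of the identity throughout the specified neighborhood, then invoke the standard linear-convergence bound for gradient ascent on a strongly concave, strongly smooth function.

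First I would compute
\begin{align*}
\nabla f(v) &= 3\widetilde{L}(I,v,v) - 3\lambda\|v\|^2 v, \\
\nabla^2 f(v) &= 6\widetilde{L}(I,I,v) - 3\lambda\|v\|^2 I - 6\lambda\, vv^{\top},
\end{align*}
and evaluate at the ``center'' $(v,\lambda)=(u_i,\sigma_i^*)$. Writing $\widetilde{L}=\sum_{j}\sigma_j^* u_j^{\otimes 3}+E$ and using orthonormality of $\{u_j\}$ gives $\widetilde{L}(I,I,u_i)=\sigma_i^* u_iu_i^{\top}+E(I,I,u_i)$, so the main terms cancel cleanly: $6\sigma_i^* u_iu_i^{\top}-3\sigma_i^* I-6\sigma_i^* u_iu_i^{\top}=-3\sigma_i^* I$. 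Thus $\nabla^2 f(u_i)=-3\sigma_i^* I+6E(I,I,u_i)$, which is already a multiple of $-I$ up to an $O(\|E\|)$ perturbation. (The mismatch between the ``$3$'' here and the ``$1$'' in the lemma statement is exactly absorbed by the step-size $1/(4\lambda(1+\lambda/\sqrt{n}))$ chosen in Procedure~\ref{algo:sspm}, which effectively performs gradient ascent on $f/3$.)

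Next I would extend to the neighborhood $\|v-u_i\|\le 10/\sqrt{n}$, $|\lambda-\sigma_i^*|\le 10\sigma_r^*/\sqrt{n}$. Writing $v=u_i+\Delta$ and expanding,
\[
\widetilde{L}(I,I,v)=\sigma_i^*\langle u_i,v\rangle\, u_iu_i^{\top}+\sum_{j\ne i}\sigma_j^*\langle u_j,\Delta\rangle u_ju_j^{\top}+E(I,I,v),
\]
and since $\{u_j\}$ is orthonormal, the cross-term has operator norm at most $\sigma_{\max}^*\max_{j\ne i}|\langle u_j,\Delta\rangle|\le\sigma_{\max}^*\|\Delta\|\le 10\sigma_{\max}^*/\sqrt n$. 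The remaining deviations from the center, namely the $\lambda\|v\|^2 I$ and $\lambda vv^{\top}$ terms, give further $O(\sigma_i^*/\sqrt n+|\delta_\lambda|)$ contributions to the operator norm, and $\|E(I,I,v)\|=O(\|E\|)=O(\sigma_r^*/\sqrt n)$ by assumption of Theorem~\ref{thm:robustpower}. Summing these gives
\[
-3\sigma_i^*\Bigl(1+\tfrac{300\sigma_r^*}{\sqrt n}\Bigr)I\;\preceq\;\nabla^2 f(v)\;\preceq\;-3\sigma_i^*\Bigl(1-\tfrac{300\sigma_r^*}{\sqrt n}\Bigr)I,
\]
which is the strong concavity/smoothness claim (the factor of $3$ being the aforementioned rescaling).

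For the convergence claim, with step size $1/(4\lambda(1+\lambda/\sqrt n))$ roughly matching $1/\beta$ where $\beta$ is the smoothness constant, classical theory for gradient ascent on an $\alpha$-strongly concave, $\beta$-smooth function gives contraction of the iterate distance to the unique stationary point by factor $1-\alpha/\beta=1-\Theta(1)$, hence linear convergence. The one nontrivial piece, and the main obstacle in the argument, is to ensure the iterates remain in the specified neighborhood so that the Hessian bound continues to apply. This follows by induction: the warm-start of the gradient-ascent phase is provided by the first (power-method) phase of Procedure~\ref{algo:sspm} which already lands inside this neighborhood, and the contraction property shows that the new iterate is strictly closer to the stationary point of $f$. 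Since this stationary point is itself an eigenpair of $\widetilde{L}$ lying within $O(\|E\|/\sigma_r^*)=O(1/\sqrt n)$ of $u_i$ (by solving $\nabla f=0$, i.e. $\widetilde{L}(I,v,v)=\lambda\|v\|^2 v$), the entire iterate trajectory stays within the neighborhood and converges linearly to a bona fide eigenpair of $\widetilde{L}$, as required.
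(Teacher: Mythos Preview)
Your proposal is correct and follows essentially the same approach as the paper: compute the Hessian $\nabla^2 f(v)=6\widetilde{L}(I,I,v)-6\lambda vv^\top-3\lambda\|v\|^2 I$, bound its deviation from $-3\sigma_i^* I$ in the specified neighborhood, and invoke standard gradient-ascent theory on a locally strongly concave/smooth objective. If anything, your write-up is more thorough than the paper's: you explicitly address the factor-of-$3$ discrepancy with the step size, the issue of iterates remaining in the neighborhood, and the identification of the fixed point as an eigenpair of $\widetilde{L}$, all of which the paper's proof leaves implicit.
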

\vspace*{-5pt}
\begin{proof}
  Consider the gradient and Hessian of $f$ w.r.t. $v$:
  \begin{align}
    \nabla f&=3\widetilde{L}(I, v, v) - 3 \lambda \|v\|^2 v, \\
H&=6 \widetilde{L}(I, I, v) - 6\lambda vv^\top - 3\lambda \|v\|^2 I.
  \end{align}
We first note that the stationary points of $f$ indeed correspond to eigenvectors of $\widetilde{L}$ with eigenvalues $\lambda \|v\|^2$. Moreover, when $|\lambda - \sigma_i^*|\leq \frac{10\sigma_r^*}{\sqrt{n}}$ and $\|v-u_i\|\leq \frac{10}{\sqrt{n}}$, we have:
$$\|H- (-3\sigma^*_i I)\|_2 \leq 30\frac{\sigma_r^*}{\sqrt{n}}+180\frac{\sigma_r^*}{\sqrt{n}}.$$
Recall that $\widetilde{L}=L^*+E$, where $L^*$ is an orthogonal tensor and $\|E\|_2\leq \sigma_r^*/\sqrt{n}$. Hence, there exists one eigenvector in each of the above mentioned set, i.e., set of $(v, \lambda)$ s.t. $|\lambda - \sigma_i^*|\leq \frac{10\sigma_r^*}{\sqrt{n}}$ and $\|v-u_i\|\leq \frac{10}{\sqrt{n}}$. Hence, the standard gradient ascent procedure on $f$ would lead to convergence to an eigenvector of $\widetilde{L}$.
\end{proof}
\vspace*{-7pt}
\paragraph{Extending to non-orthogonal low rank tensors: }In (L), we assume that the low rank tensor $\Lo$ in \eqref{eqn:robust-def} is orthogonal. We can also  extend to non-orthogonal tensors $\Lo$, whose components $u_i$ are linearly independent. Here, there exists an invertible transformation  $W$ known as {\em whitening} that orthogonalizes the tensors~\cite{AnandkumarEtal:tensor12}.  We can incorporate whitening in   Procedure~\ref{algo:sspm} to find low rank tensor decomposition, within the iterations of $\ncralgo$. 

\section{Experiments}
\begin{figure*}[t]
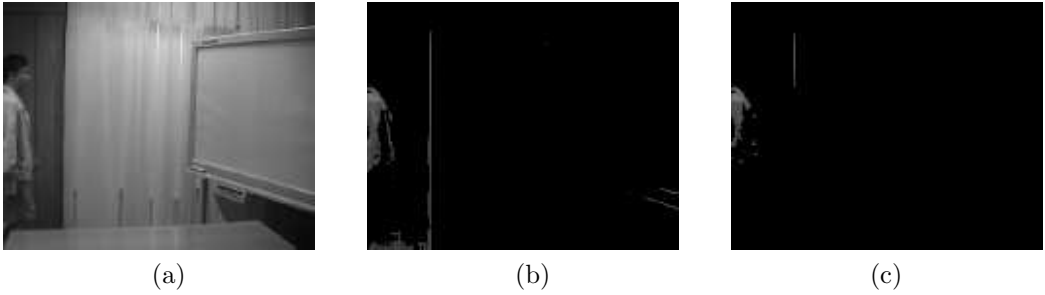

\centering
\begin{tabular}{ccc}
\includegraphics[width=0.25\textwidth]{new_algo_Tvid_m128n160r_hat1_orig.eps} \hspace{5pt} &
\includegraphics[width=0.25\textwidth]{new_algo_Tvid_m128n160r_hat1_S_t_ten.eps} \hspace{5pt} &
\includegraphics[width=0.25\textwidth]{new_algo_Tvid_m128n160r_hat1_S_t_mat.eps} \\
(a)&(b)&(c)
\end{tabular}
\caption{Foreground filtering or activity detection in the {\em Curtain} video dataset. (a): Original image frame. (b): Foreground filtered (sparse part estimated) using tensor method; time taken is $5.1s$. (c): Foreground filtered (sparse part estimated) using matrix method; time taken is $5.7s$.}
\label{fig:curtain}
\end{figure*}

We now present an empirical study of our method. The goal of this study is three-fold: a) establish that our method indeed recovers the low-rank and sparse parts correctly, without significant parameter tuning, b) demonstrate that whitening during low rank decomposition  gives computational advantages, c)  show that  our tensor methods are superior to matrix based RPCA methods in practice.

Our pseudo-code (Algorithm~\ref{algo:sap}) prescribes the threshold $\zeta$ in Step 5, which depends on the knowledge of the singular values of the low rank component $\Lo$. Instead, in the experiments, we set the threshold at the $(t+1)$ step of $l^{\tha}$ stage as $\zeta=\mu\sigma_{l+1}(\M-\St) / n^{3/2}$.  We found that the above thresholding, in the tensor case as well, provides exact recovery while speeding up the computation significantly.
\vspace*{-5pt}
\paragraph{Synthetic datasets: }The low-rank part $\Lo=\sum_{i \in [k]} \lambda_i u_i^{\otimes 3}$ is generated from a factor matrix $U\in \R^{n\times k}$ whose entries are i.i.d. samples from $\mathcal{N}(0,1)$. For deterministic sparsity setting, $\supp(\So)$ is generated by setting each entry of $[n]\times [n]\times [n]$ to be non-zeros with probability $d/n$ and each non-zero value $\So_{ijk}$ is drawn i.i.d. from the uniform distribution over $[ r / (2 n^{3/2}), r / n^{3/2} ]$. For block sparsity setting, we randomly select $B$ numbers of $[n] \times [1]$ vectors $v_i, i = 1...B$ in which each entry is chosen to be non-zero with probability $d/n$. The value of non-zero entry is assigned similar to the one in deterministic sparsity case. Each of this vector will form a subtensor($v_i^{\otimes 3}$) and those subtensors form the whole $S$. For increasing incoherence of $\Lo$, we randomly zero-out rows of $U$ and then re-normalize them. For the CP-decomposition, we use the alternating least squares (ALS) method available in the tensor toolbox~\cite{TTB_Software}. Note that 
we use the ALS procedure in practice since we found that empirically, ALS performs quite well and is convenient to use. For whitening, we use two different whitening matrices: a) the true second order moment from the true low-rank part, b) the recovered low rank part from a random slice of the tensor $T$ by using matrix non-convex RPCA method. We compare our $\ncralgo$ with matrix non-convex RPCA in two ways: a) treat each slice of the input tensor as a  matrix RPCA problem, b) reshape the tensor along one mode and treat the resultant as a matrix RPCA problem.

We vary sparsity of $\So$ and rank of $\Lo$ for $\ncralgo$ with a fixed tensor size. We investigate performance of our method, both with and without whitening, and compare with the state-of-the-art matrix non-convex RPCA algorithm. Our results for synthetic datasets is averaged over $5$ runs. In Figure~\ref{fig:synthetic_plots_1}, we report relative error ($\|L^*-\L\|_F/\|L^*\|_F$)  by each of the methods allowing maximum number of iterations up to $100$. Comparing (a) and (b) in Figure~\ref{fig:synthetic_plots_1}, we can see that with block sparsity, $\ncralgo$ is better than matrix based non-convex RPCA method when $d$ is less than $20$. If we use whitening, the advantage of $\ncralgo$ is more significant. But when rank becomes higher, the whitening method is not helpful. In Figure~\ref{fig:synthetic_plots_2}, we illustrate the computational time of each methods. We can see that  whitening simplifies the problem and give us computational advantage. Besides, the running time for the one using tensor method is similar to the one using matrix method when we reshape the tensor as one matrix. Doing matrix slices increases the running time.
\vspace*{-10pt}
\paragraph{Real-world dataset: }To demonstrate the advantage of our method, we apply our method to  \textit{activity detection} or \textit{foreground filtering}~\cite{li2004statistical}. The goal of this task is to detect activities from a video coverage, which is a set of image frames that form a tensor. In our robust decomposition framework, the moving objects correspond sparse (foreground) perturbations which we wish to filter out. The static ambient background is of lesser interest since nothing changes.

We selected one of datasets, namely the \textit{Curtain} video dataset wherein a person walks in and out of the room between the frame numbers $23731$ and $23963$. We compare our tensor method with the state-of-the-art matrix method in \cite{netrapalli2014non} on the set of $233$ frames where the activity happens. In our tensor method, we   preserve the multi-modal nature of videos and consider the set of image frames without vectorizing them. For the matrix method, we follow the setup of \cite{netrapalli2014non} by reshaping each image frame into a vector and stacking them together. We set the convergence criterion to $10^{-3}$ and run both the methods. Our tensor method yields a $10$\% speedup and obtains a better visual recovery for the same convergence accuracy as shown in Figure~\ref{fig:curtain}. 

\section{Conclusion}
We proposed a non-convex alternating method for decomposing a tensor into low rank and sparse parts. We established convergence to the globally optimal solution under natural conditions such as incoherence of the low rank part and bounded  sparsity levels for the sparse part.  We prove that our proposed tensor method can handle perturbations at a much higher sparsity level compared to robust matrix methods.  Our simulations show superior performance of our tensor method,  both in terms of accuracy and computational time. Some future directions are analyzing: (1) our method with whitening (2) the setting where grossly corrupted samples arrive in streaming manner.

\subsubsection*{Acknowledgements}
Animashree Anandkumar is supported in part by Microsoft Faculty Fellowship, NSF Career Award CCF-1254106,  ONR Award N00014-14-1-0665, ARO YIP Award W911NF-13-1-0084, and AFOSR YIP FA9550-15-1-0221. Yang Shi is supported by NSF Career Award CCF-1254106 and ONR Award N00014-15-1-2737, Niranjan is supported by NSF BigData Award IIS-1251267 and ONR Award N00014-15-1-2737.

\newpage
\appendix

\newpage

\appendix
\onecolumn
\section{Bounds for block sparse tensors}
One of the main bounds to control is the spectral norm of the sparse perturbation tensor $S$. The success of the power iterations and the improvement in accuracy  of recovery over iterative steps of $\ncralgo$ requires this bound.

\begin{lemma}[Spectral norm bounds for block sparse tensors]
\label{lem:sparse2}
Let $M \in \R^{n \times n \times n}$ satisfy the block sparsity assumption (S). Then \beq\label{lem:sparse_1} \| M \|_2=O( d^{1.5} \|M\|_\infty).\eeq\end{lemma}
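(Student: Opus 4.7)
}

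The plan is to reduce the spectral norm to a sum over blocks of the block-sparse support and then use the overlap assumption to control that sum. By definition,
\begin{equation*}
\|M\|_2 \;=\; \sup_{\|u\|=\|v\|=\|w\|=1} M(u,v,w),
\end{equation*}
so I would fix arbitrary unit vectors $u,v,w$ and try to bound $|M(u,v,w)|$ by $C\, d^{3/2}\|M\|_\infty$. Since the support of $M$ is contained in $\bigcup_{i=1}^B S_i \times S_i \times S_i$ where $S_i := \supp(\psi_i)$ with $|S_i|\le d$, the first step is the entrywise bound
\begin{equation*}
|M(u,v,w)| \;\le\; \|M\|_\infty \sum_{(a,b,c)\in\bigcup_i S_i^3} |u_a v_b w_c| \;\le\; \|M\|_\infty \sum_{i=1}^B \alpha_i \beta_i \gamma_i,
\end{equation*}
where $\alpha_i = \langle \psi_i, |u|\rangle$, $\beta_i = \langle \psi_i,|v|\rangle$, $\gamma_i = \langle \psi_i,|w|\rangle$. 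The union bound in the second inequality loses at most a multiplicative factor equal to the maximum multiplicity of overlap, which we will control below.

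Next, I would control the sequences $(\alpha_i),(\beta_i),(\gamma_i)$ in two ways. Pointwise, Cauchy--Schwarz gives $\alpha_i \le \|\psi_i\|_2\|u\|_2 \le \sqrt{d}$, and similarly for $\beta_i,\gamma_i$. In $\ell_2$ of the index $i$, I would write $\sum_i \alpha_i^2 = |u|^\top(\Psi_{(1)}\Psi_{(1)}^\top)|u|$, where $\Psi_{(1)}\in\R^{n\times B}$ has columns $\psi_i$. Since $\|\Psi_{(1)}\Psi_{(1)}^\top\|_{\mathrm{op}} = \|\Psi_{(1)}^\top\Psi_{(1)}\|_{\mathrm{op}}$ and the Gram matrix $G:=\Psi_{(1)}^\top\Psi_{(1)}$ has diagonal entries $\|\psi_i\|^2\le d$ and off-diagonal entries $\langle\psi_i,\psi_j\rangle \le \eta d$, Gershgorin yields
\begin{equation*}
\sum_i \alpha_i^2 \;\le\; \|G\|_{\mathrm{op}} \;\le\; d + (B-1)\eta d \;=\; d\,(1+B\eta).
\end{equation*}
The constraint $B = O(\eta^{-3/2})$ from assumption (S), together with $\eta$ being the (small) normalized pairwise overlap, makes the factor $(1+B\eta)$ a constant in the regime considered, so $\sum_i \alpha_i^2 = O(d)$, and likewise for $\beta_i,\gamma_i$.

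Finally, I would combine the two bounds by Cauchy--Schwarz:
\begin{equation*}
\sum_{i=1}^B \alpha_i\beta_i\gamma_i \;\le\; \bigl(\max_i \gamma_i\bigr)\sum_i \alpha_i\beta_i \;\le\; \sqrt{d}\,\Bigl(\sum_i \alpha_i^2\Bigr)^{1/2}\Bigl(\sum_i \beta_i^2\Bigr)^{1/2} \;=\; O(d^{3/2}),
\end{equation*}
which substituted into the opening inequality yields $|M(u,v,w)|\le O(d^{3/2}\|M\|_\infty)$ uniformly over unit $u,v,w$, and hence the claim. The main obstacle I expect is honest bookkeeping of the $(1+B\eta)$ factor: one must verify that the numerics of assumption (S) (both the sparsity-per-fibre bound on $d$ and the cap on $B$ in terms of $\eta$) really do make this factor absorbable into the $O(\cdot)$, and that the overlap-multiplicity loss in passing from $\bigcup_i S_i^3$ to $\sum_i 1_{S_i^3}$ is likewise bounded. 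If the bound $(1+B\eta)$ turns out to be too weak in the worst-case deterministic setting, I would fall back on bounding $\max_a \deg(a)$ directly via double counting $\sum_a\deg(a)\le Bd$ combined with the overlap hypothesis, or specialize to the random block-sparsity model where $\max_a\deg(a)=\tilde O(Bd/n)$ w.h.p.
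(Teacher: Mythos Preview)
Your reduction to $|M(u,v,w)|\le \|M\|_\infty \sum_i \alpha_i\beta_i\gamma_i$ with $\alpha_i=\langle\psi_i,|u|\rangle$ etc.\ is essentially the same starting point as the paper (which phrases it as $\|M\|\le\|M\|_\infty\|\Psi\|$ via Perron--Frobenius and then writes $\Psi=d^{3/2}\sum_i\tilde\psi_i^{\otimes 3}$), and your Gershgorin bound $\sum_i\alpha_i^2\le d(1+B\eta)$ is also used there. The gap is in the last step: your Cauchy--Schwarz estimate gives
\[
\sum_i\alpha_i\beta_i\gamma_i\;\le\;\sqrt{d}\cdot d(1+B\eta),
\]
and assumption (S) only guarantees $B=O(\eta^{-3/2})$, i.e.\ $B\eta^{3/2}=O(1)$, which does \emph{not} force $B\eta=O(1)$. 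Indeed $B\eta=O(\eta^{-1/2})$ can diverge (in the paper's random model $\eta\asymp d/n$ or $n^{-1/2}$), so the factor you hoped to absorb is genuinely unbounded. Your own caveat about ``honest bookkeeping of the $(1+B\eta)$ factor'' is exactly the failure point, and neither fallback you list recovers it under the deterministic hypothesis (S).

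What the paper does instead is exploit the cubic structure to gain an extra $\sqrt{\eta}$. Working with the normalized vectors $\tilde\psi_i=\psi_i/\sqrt d$, it bounds $\|\tilde\psi^\top u\|_3^3$ (equivalently $d^{-3/2}\sum_i\alpha_i^3$) by a splitting argument: take the $L\approx 1/\eta$ indices with largest $|\tilde\psi_i^\top u|$; Gershgorin on any $L$ columns gives $\|\tilde\psi_S^\top u\|_2\le 1+o(1)$, so the smallest entry in $S$---and hence every entry outside $S$---is at most $2/\sqrt L\approx 2\sqrt\eta$. Then
\[
\sum_{i\in S}|\tilde\psi_i^\top u|^3\le\sum_{i\in S}|\tilde\psi_i^\top u|^2\le 1+o(1),\qquad
\sum_{i\notin S}|\tilde\psi_i^\top u|^3\le 2\sqrt{\eta}\,\|\tilde\psi\|^2\le 2\sqrt{\eta}(1+B\eta),
\]
so the total is $1+O(B\eta^{3/2})=O(1)$. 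The point is that passing from $\ell_2$ to $\ell_3$ buys exactly the factor $\sqrt\eta$ needed to match the exponent in the assumption $B=O(\eta^{-3/2})$; your bound $\max_i\gamma_i\le\sqrt d$ throws this away.
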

\begin{proof} Let $\Psi\in \R^{n \times n \times n}$  be a tensor that encodes the sparsity of $M$ i.e.   $\Psi_{i,j,k}=1$ iff $\So_{i,j,k}\neq 0$ for all $i,j,k\in [n]$. We have that \begin{align*} \|M \|&= \max_{u: \|u\|=1} \sum_{i,j,k} M_{i,j,k} u(i) u(j) u(k) \\
& = \max_{u: \|u\|=1} \sum_{i,j,k} M_{i,j,k} \Psi_{i,j,k} u(i) u(j) u(k)\\ &\leq \max_{u: \|u\|=1} \sum_{i,j,k}| M_{i,j,k} \Psi_{i,j,k} u(i) u(j) u(k)|\\ &\leq \|M \|_\infty \max_{u: \|u\|=1} \sum_{i,j,k}  | \Psi_{i,j,k} u(i) u(j) u(k)|= \|M \|_\infty \|\Psi\|,\end{align*} where the last inequality is from Perron Frobenius theorem for non-negative tensors~\cite{chang2008perron}. Note that $\Psi$ is non-negative by definition. Now we bound $\| \Psi\|$ on lines of \cite[Lemma 4]{DBLP:journals/corr/AnandkumarGJ14}. Recall that $\forall i\in[B],\,j\in [n]$,
\[ \Psi= \sum_{i=1}^B   \psi_i \otimes \psi_i \otimes \psi_i, \quad \|\psi_i\|_0 \leq d, \, \psi_i(j)=0\mbox{ or } 1.\]
By definition $\|\psi_i \|_2 =\sqrt{d}$. Define normalized vectors $\tilde{\psi_i}:=\psi_i / \|\psi_i\|$. We have
\[ \Psi= d^{1.5}\sum_{i=1}^B   \tilde{\psi_i} \otimes \tilde{\psi_i} \otimes \tilde{\psi_i}\]
Define  matrix $\tilde{\psi}:=[\tilde{\psi_1}| \tilde{\psi_2}, \ldots \tilde{\psi_B}]$. Note that $\tilde{\psi}^\top \tilde{\psi} \in \R^{B\times B}$ is a matrix with unit diagonal entries and absolute values of off-diagonal entries bounded by $\eta$, by assumption. From Gershgorin Disk Theorem,  every subset of $L$ columns
in $\tilde{\psi}$ has singular values within $1 \pm o(1) $, where $L<\frac{1}{\eta}$.
Moreover, from Gershgorin Disk Theorem, $\|\tilde{\psi}\|< \sqrt{1+ B\eta}$.

For any unit vector $u$, let $S$ be the set of $L$ indices that are largest in $\tilde{\psi}^\top u$. By the argument above we know $\|(\tilde{\psi}_S)^\top u\| \le \|\tilde{\psi}_S\|\|u\| \le 1+o(1)$. In particular, the smallest entry in $\tilde{\psi}_S^\top u$ is at most $2/\sqrt{L}$. By construction of $S$ this implies for all $i$ not in $S$, $|\tilde{\psi}_i^\top u|$ is at most $2/\sqrt{L}$. Now we can write the $\ell_3$   norm of $\tilde{\psi}^\top u$ as
\begin{align*}
\|\tilde{\psi}^\top u\|_3^3 & = \sum_{i\in S} |\tilde{\psi}_i^\top u|^3+\sum_{i\not\in S}|\tilde{\psi}_i^\top u|^3\\
& \le \sum_{i\in S} |\tilde{\psi}_i^\top u|^2 + (2/\sqrt{L})^{3-2} \sum_{i\not\in S}|\tilde{\psi}_i^\top u|^2\\
& \le 1+2\sqrt{\eta} \|\tilde{\psi}\|^2 \leq 1+ 2 B\eta^{1.5}.
\end{align*}
Here the first inequality uses that every entry outside $S$ is small, and last inequality uses the bound argued on $\|(\tilde{\psi}_S)^\top u\|$, the spectral norm bound is assumed on $A_{S^c}$. Since $B=O(\eta^{-1.5})$, we have the result.
\end{proof}

Another important bound required is $\infty$-norm of certain contractions of the (normalized) sparse tensor and its powers, which we denote by $M$ below. We use a loose bound based on spectral norm and we require $\|M\|< 1/\sqrt{n}$.  However, this constraint will also be needed for the power iterations to succeed and is not an additional requirement. Thus, the loose bound below will suffice for our results to hold.

\begin{lemma}[Infinity norm bounds]
\label{lem:sparse3}
Let $M \in \R^{n \times n \times n}$ satisfy the block sparsity assumption (S). Let $u,v$ satisfy the assumption $\| u \|_\infty, \| v\|_\infty  \leq \frac{\mu}{n^{1/2}}$. 
Then, we have
\begin{enumerate}
\item \label{lem:sparse_4} $\| M (u, v, I) \|_\infty \leq   \frac{\kappa\mu}{n^{1/2}}\|M\|_\infty$, where $\kappa:=\frac{Bd^2  \mu}{\sqrt{n}}.$
\item \label{lem:sparse_5} $ \| [M (u,v, I)]^p \|_\infty \leq \kappa \mu \|M\|_\infty \|M\|^{p-1}$ for $p >1$.
\item \label{lem:sparse_6} $ \sum_{p \geq 1} \| [M (u, I, I)]^p v \|_\infty \leq \frac{\kappa\mu}{\sqrt{n}} \|M\|_\infty \cdot   \frac{\|M\|}{1-\|M\|}$ when $\|M\|<1/\sqrt{n}$.
\end{enumerate}
\end{lemma}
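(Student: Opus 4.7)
All three estimates share the same mechanism: the block decomposition $\Psi=\sum_{b=1}^{B}\psi_b^{\otimes 3}$ together with the two complementary bounds $\sum_j \psi_b(j)|w_j|\le d\|w\|_\infty$ and $\sum_j \psi_b(j)|w_j|\le \sqrt{d}\|w\|_2$, valid for any vector $w$ and any $0/1$ indicator $\psi_b$ with $\|\psi_b\|_0\le d$. For parts (2) and (3), I interpret $[M(u,I,I)]^p v$ as $A^p v$, where $A\in\R^{n\times n}$ is the matrix defined by $Aw:=M(u,w,I)$, so that $p=1$ recovers $M(u,v,I)$ of part (1). Part (1) will be established by writing $|(Av)_k|\le \|M\|_\infty\sum_{i,j}\Psi_{ijk}|u_i||v_j|$, substituting the block decomposition to obtain $\sum_{b}\psi_b(k)\bigl(\sum_i \psi_b(i)|u_i|\bigr)\bigl(\sum_j \psi_b(j)|v_j|\bigr)$, bounding each inner factor by $d\mu/\sqrt{n}$ via incoherence of $u,v$, and summing $\psi_b(k)\le 1$ over at most $B$ blocks. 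This gives $\|Av\|_\infty \le Bd^2\mu^2/n\cdot\|M\|_\infty = (\kappa\mu/\sqrt{n})\|M\|_\infty$.

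For part (2), I split $A^p v = A(A^{p-1}v)$ and repeat the block-wise bookkeeping for the outer $A$. The new difficulty is that $w:=A^{p-1}v$ has no a priori incoherence, so $\sum_j \psi_b(j)|w_j|$ must be controlled by Cauchy--Schwarz as $\sqrt{d}\|w\|_2$ rather than by $\ell_\infty$; the remaining factor $\sum_i \psi_b(i)|u_i|\le d\mu/\sqrt{n}$ still uses incoherence of $u$. I then propagate the $\ell_2$-norm via the operator bound $\|A\|\le \|u\|_2\|M\|\le \mu\|M\|$ (plugging a unit-norm rescaling of $u$ into one slot of $M$), giving $\|A^{p-1}v\|_2\le \mu^p\|M\|^{p-1}$ after using $\|v\|_2\le \mu$. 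Combining these estimates and absorbing the extra $\mu$- and $\sqrt{d}$-factors into the constant yields the stated bound. Part (3) then follows by summing over $p\ge 1$: the $p=1$ term is bounded by part (1) and the tail $p\ge 2$ by the geometric series from part (2), which converges precisely under the hypothesis $\|M\|<1/\sqrt{n}$.

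The main obstacle is the iteration step in part (2). Unlike the hypothesis vectors $u$ and $v$, the iterate $A^{p-1}v$ carries no a priori $\ell_\infty$-control, so the clean block-by-block argument of part (1) cannot simply be reiterated. The key trick is to relinquish $\ell_\infty$-bounds in favour of $\ell_2$-bounds, paying only a factor $\sqrt{d}$ per outer application (which is absorbed into the constant), and then to propagate the $\ell_2$-norm spectrally through $\|A\|\le \mu\|M\|$. It is this spectral propagation that produces the $\|M\|^{p-1}$ factor required for the geometric summability in part (3), and the hypothesis $\|M\|<1/\sqrt{n}$ is the precise threshold making the resulting series converge.
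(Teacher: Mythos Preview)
Your argument is correct, and Part~1 is essentially identical to the paper's (they phrase it as $\|M(u,v,I)\|_\infty \le \|u\|_\infty\|v\|_\infty \max_j \|M(I,I,e_j)\|_1$ with $\|M(I,I,e_j)\|_1 \le Bd^2\|M\|_\infty$, which is your block computation in compressed form).

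For Part~2 (and hence~3) you take a genuinely different route. Writing $Z := M(u,I,I)$, the paper peels off the \emph{innermost} factor, $Z^p v = Z^{p-1}(Zv)$, sets $a := Zv = M(u,v,I)$, invokes Part~1 to get $\|a\|_\infty \le \frac{\kappa\mu}{\sqrt{n}}\|M\|_\infty$, and then simply uses $\|Z^{p-1}a\|_\infty \le \|Z^{p-1}a\|_2 \le \|Z\|^{p-1}\|a\|_2 \le \|M\|^{p-1}\sqrt{n}\,\|a\|_\infty$. You instead peel off the \emph{outermost} factor, $A^p v = A(A^{p-1}v)$, and run the block decomposition a second time on the outer $A$, controlling the non-incoherent iterate $w = A^{p-1}v$ via Cauchy--Schwarz $\sum_j \psi_b(j)|w_j| \le \sqrt{d}\,\|w\|_2$ before propagating $\|w\|_2$ spectrally. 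Both routes feed the spectral bound $\|Z\| \le \|u\|_2\|M\|$ and land on the same geometric series; the paper's path is a touch shorter because it recycles Part~1 wholesale and needs only the crude $\ell_\infty\!\to\!\ell_2$ conversion $\|a\|_2 \le \sqrt{n}\|a\|_\infty$, whereas your path buys a factor $\sqrt{d}$ in place of $\sqrt{n}$ in the constant at the cost of a second pass through the block structure. Note also that the paper tacitly uses $\|u\|_2 \le 1$ to write $\|Z\|\le\|M\|$, avoiding the $\mu^p$ accumulation that your more cautious $\|A\|\le\mu\|M\|$ produces; in the applications of this lemma within the paper $u$ is always a unit eigenvector, so you may assume this as well and drop those extra $\mu$-factors.
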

\begin{proof}
We have from norm conversion
\begin{align}\|M(u,v, I)\|_\infty
&\leq \|u\|_\infty\cdot\|v\|_\infty   \max_j \|M(I, I, e_j)\|_1\\ & \leq \frac{\mu^2}{n} \cdot   Bd^2\|M\|_\infty,  \end{align} where  $\ell_1$ norm (i.e. sum of absolute values of entries) of a slice $M(I, I, e_j)$ is $Bd^2$, since the number of non-zero entries in one block in a slice is $d^2$.

Let $Z=M(u, I, I)\in \R^{n\times n}$. Now, $\|M(u, I, I)^pv\|_\infty=\|Z^p v\|_\infty=\|Z^{p-1}a\|_\infty$ where $a=Zv$. Now, $$\|Z^{p-1}a\|_\infty=\max_j |e_j^T Z^{p-1} a|\leq \|Z^{p-1}\|_2 \|a\|_2\leq \|Z\|_2^{p-1}\|a\|_2\leq \|M\|^{p-1}\sqrt{n}\|a\|_\infty\leq \kappa \mu \|M\|_\infty \|M\|^{p-1}.$$

Hence, $ \sum_{p \geq 1} \| [M (u, I, I)]^p v \|_\infty \leq \kappa\mu \|M\|_\infty \cdot   \frac{\|M\|_2}{1-\|M\|_2}$.
\end{proof}

\section{Proof of Theorem~\ref{thm:main}}
\label{sec:rank_r}
\begin{lemma}\label{lem:threshold_est}
Let $\Lo, \So$ be symmetric and satisfy the assumptions of Theorem~\ref{thm:main} and let $\St$ be the
$t^{\textrm{th}}$ iterate of the $l^{\textrm{th}}$ stage of Algorithm~\ref{algo:sap}. Let $\sigma_1^*, \dots,
\sigma_r^*$ be the eigenvalues of $\Lo$, such that $\sigma_1^*\geq \dots \geq \sigma_r^* \geq 0$ and $\lambda_1,
\cdots,\lambda_r$ be the eigenvalues of $\M-\St$ such that $\lambda_1 \geq \cdots \geq \lambda_r \geq 0$.
Recall that $\Et \defas \So-\St$. Suppose further that
\begin{enumerate}
  \item	$\infnorm{\Et} \leq \frac{8 \mu^3 k}{n^{3/2}}\left(\sigma_{l+1}^*+\left(\frac{1}{2}\right)^{t-1}\sigma_l^*\right)$, and
  \item	$\supp{\Et} \subseteq \supp{\So}$.
\end{enumerate}
Then, for some constant $c \in [0, 1)$, we have
\begin{align}
  (1-c) \left(\sigma_{l+1}^*+\left(\frac{1}{2}\right)^{t}\sigma_l^*\right)
  \leq \left(\lambda_{l+1}+\left(\frac{1}{2}\right)^{t}\lambda_l\right)
  \leq (1+c) \left(\sigma_{l+1}^*+\left(\frac{1}{2}\right)^{t}\sigma_l^*\right).
\end{align}
\end{lemma}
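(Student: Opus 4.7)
The plan is to reduce the claim to a spectral-norm perturbation bound on $E^{(t)} := S^* - S^{(t)}$, and then invoke the eigenvalue-perturbation statement from Theorem~\ref{thm:robustpower} applied to $T - S^{(t)} = L^* + E^{(t)}$. Because, by hypothesis, $\supp(E^{(t)}) \subseteq \supp(S^*)$, the tensor $E^{(t)}$ inherits the block-sparsity structure (S) of $S^*$. Hence Lemma~\ref{lem:sparse2} applies and yields $\|E^{(t)}\| = O(d^{1.5}\,\|E^{(t)}\|_\infty)$. Plugging in the assumed $\|E^{(t)}\|_\infty$ bound together with $d = O((n/(r\mu^3))^{2/3})$, so that $d^{1.5} = O(n/(r\mu^3))$, I get
\[
\|E^{(t)}\| \;\leq\; C\,\frac{1}{\sqrt{n}}\Bigl(\sigma_{l+1}^* + (1/2)^{t-1}\sigma_l^*\Bigr)
\]
for a universal constant $C$.

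Next, I apply Theorem~\ref{thm:robustpower} to the tensor $T - S^{(t)} = L^* + E^{(t)}$: since $L^*$ is orthogonal and $\|E^{(t)}\| = O(\sigma_r^*/\sqrt n)$ under the assumptions of Theorem~\ref{thm:main}, there is a permutation $\pi$ of $[r]$ such that the recovered eigenvalues $\lambda_i$ of $T - S^{(t)}$ satisfy $|\lambda_{\pi(i)} - \sigma_i^*| \leq O(\|E^{(t)}\|)$. Because the gap between consecutive $\sigma_i^*$'s (under the ordering assumed in the lemma) will, in the regime of interest, exceed the perturbation size, the permutation $\pi$ can be taken to preserve the decreasing order, and so
\[
|\lambda_i - \sigma_i^*| \;\leq\; C'\,\frac{1}{\sqrt{n}}\Bigl(\sigma_{l+1}^* + (1/2)^{t-1}\sigma_l^*\Bigr), \qquad i \in \{l, l+1\}.
\]

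Combining these bounds for $i=l$ and $i=l+1$ and using $(1/2)^{t-1}\sigma_l^* = 2\cdot(1/2)^t\sigma_l^*$ to convert between the two geometric factors, I estimate
\[
\bigl|\lambda_{l+1} + (1/2)^t\lambda_l - \bigl(\sigma_{l+1}^* + (1/2)^t\sigma_l^*\bigr)\bigr|
\;\leq\; \bigl(1+(1/2)^t\bigr)\,C'\,\tfrac{1}{\sqrt n}\bigl(\sigma_{l+1}^*+(1/2)^{t-1}\sigma_l^*\bigr)
\;\leq\; \tfrac{4C'}{\sqrt n}\bigl(\sigma_{l+1}^* + (1/2)^t\sigma_l^*\bigr).
\]
Setting $c := 4C'/\sqrt{n}$ (which is strictly less than $1$ for $n$ sufficiently large, under the standing assumptions of the theorem) gives the two-sided inequality in the claim.

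The main obstacle I expect is the second step: tensors do not enjoy a clean Weyl-type inequality, so I must be careful in invoking Theorem~\ref{thm:robustpower} to get a sorted, one-to-one correspondence $\lambda_i \leftrightarrow \sigma_i^*$ rather than just an unordered pairing via $\pi$. Handling this requires exploiting the eigenvalue gaps of $L^*$ (or restricting to the top-$l$ block separated from the rest by at least the perturbation magnitude) so that the permutation $\pi$ is forced to respect the ordering on the indices $l$ and $l+1$ that appear in the claim. Once this alignment is secured, the rest of the argument reduces to the straightforward arithmetic above.
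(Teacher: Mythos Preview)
Your proposal is essentially the same as the paper's proof: bound $\|E^{(t)}\|$ via the block-sparse spectral bound $\|E^{(t)}\|\le O(d^{3/2})\|E^{(t)}\|_\infty$, plug in the assumed $\|E^{(t)}\|_\infty$ and the condition $d=O((n/(r\mu^3))^{2/3})$, deduce $|\lambda_i-\sigma_i^*|\le O(\|E^{(t)}\|)$ for $i\in\{l,l+1\}$, and combine. The paper carries out exactly this arithmetic, citing the auxiliary eigenvalue bound $|\lambda_i-\sigma_i^*|\le 8\|E^{(t)}\|_2$ from its Lemma~\ref{lem:dist} (which packages Theorem~5.1 of \cite{AnandkumarEtal:tensor12}) rather than Theorem~\ref{thm:robustpower} directly.

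The one place you diverge is the permutation concern, and your proposed fix is not the right one: no eigenvalue-gap assumption on $L^*$ is made anywhere, so you cannot invoke one. The paper simply asserts the sorted correspondence $|\lambda_i-\sigma_i^*|\le 8\|E^{(t)}\|$ without discussion. If you want to be rigorous here, note that once Theorem~\ref{thm:robustpower} gives $|\lambda_{\pi(j)}-\sigma_j^*|\le \epsilon$ for some permutation $\pi$, the identity matching on the decreasingly sorted sequences also achieves the same bound: for each $i$, among $j\ge i$ some $\pi(j)\le i$ (pigeonhole), whence $\sigma_i^*\le \sigma_{\pi(j)}^*\le \lambda_j+\epsilon\le \lambda_i+\epsilon$, and symmetrically for the other direction. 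So no gap hypothesis is needed, and with that observation your argument is complete and matches the paper's.
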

\begin{proof}
Note that $\M-\St=\Lo+\Et$. Now, 
\begin{align*}
\abs{\lambda_{l+1}-\sigma_{l+1}^*} \leq 8 \twonorm{\Et} \leq 8 d^{3/2} \infnorm{\Et} \leq \frac{8\mu^3 r \gamma_t}{n^{3/2}} d^{3/2},
\end{align*}
where $\gamma_t \defas \left(\sigma_{l+1}^*+\left(\frac{1}{2}\right)^{t-1}\sigma_l^*\right)$.
That is,
$\abs{\lambda_{l+1}-\sigma_{l+1}^*}\leq  8\mu^3 r \left( \frac{d}{n} \right)^{3/2} \gamma_t$.
Similarly, $\abs{\lambda_{l}-\sigma_{l}^*} \leq  8\mu^3 r \left( \frac{d}{n} \right)^{3/2} \gamma_t$.
So we have:
\begin{align*}
\abs{ \left(\lambda_{l+1}+\left(\frac{1}{2}\right)^{t}\lambda_l\right) - \left(\sigma_{l+1}^*+\left(\frac{1}{2}\right)^{t}\sigma_l^*\right)} &\leq 8\mu^3 r \left( \frac{d}{n} \right)^{3/2} \gamma_t \left(1+\left(\frac{1}{2}\right)^t\right) \\
&\leq 16 \mu^3 r \left( \frac{d}{n} \right)^{3/2} \gamma_t \\
&\leq c \left(\sigma_{l+1}^*+\left(\frac{1}{2}\right)^{t}\sigma_l^*\right),
\end{align*}
where the last inequality follows from the bound $d \leq \left( \frac{n}{c' \mu^3 k} \right)^{2/3}$ for some constant $c'$.
\end{proof}

\begin{lemma}\label{lem:dec1}
Assume the notation of Lemma~\ref{lem:threshold_est}. Also, let $\Lt, \St$ be the $t^{\textrm{th}}$ iterates of
$r^{\textrm{th}}$ stage of Algorithm~\ref{algo:sap} and $\Ltn, \Stn$ be the $(t+1)^{\textrm{th}}$ iterates of the same
stage. Also, recall that $\Et \defas \So-\St$ and $\Etn \defas \So-\Stn$.

Suppose further that
\begin{enumerate}
  \item	$\infnorm{\Et} \leq \frac{8\mu^3 r}{n^{3/2}}\left(\sigma_{l+1}^*+\left(\frac{1}{2}\right)^{t-1}\sigma_l^*\right)$, and
  \item	$\supp{\Et}\subseteq \supp{\So}$.
\item $\|E^{(t)}\|_2 <\frac{C\sigma_l^*}{\sqrt{n}},$ where $C <1/2$ is a sufficiently small constant.
\end{enumerate}
Then, we have:
  $$\infnorm{\Ltn-\Lo} \leq  2 \frac{\mu^3 r}{n^{3/2}}\left(\sigma_{l+1}^*+\left(\frac{1}{2}\right)^{t}\sigma_l^*\right)$$
\end{lemma}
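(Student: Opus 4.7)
The plan is to decompose $L^{(t+1)}-L^*$ into two pieces, one capturing the truncation at rank $l$ and one capturing the perturbation of the top $l$ eigenpairs, and to control each piece in $\infnorm{\cdot}$ using incoherence together with the block-sparse structure of $E^{(t)}$.

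First I would invoke \Thm{robustpower} with $\tilde L=T-S^{(t)}=L^*+E^{(t)}$. Hypothesis (3) gives $\|E^{(t)}\|_2<C\sigma_l^*/\sqrt n$, which is exactly the regime required by that theorem (after shrinking $C$ if necessary). Thus Procedure~\ref{algo:sspm} returns $l$ eigenpairs $(\lambda_j,\hat u_j)$ of $\tilde L$ and, up to a permutation that I may take to be the identity on the top $l$ indices, they satisfy
\begin{equation*}
|\lambda_j-\sigma_j^*|=O(\|E^{(t)}\|_2),\qquad \|\hat u_j-u_j\|_2=O(\|E^{(t)}\|_2/\sigma_j^*).
\end{equation*}
Writing $L^{(t+1)}=\sum_{j=1}^l\lambda_j\hat u_j^{\otimes 3}$ and $L^*=\sum_{i=1}^r\sigma_i^* u_i^{\otimes 3}$, I split
\begin{equation*}
L^{(t+1)}-L^*=\underbrace{\sum_{j=1}^{l}\bigl(\lambda_j\hat u_j^{\otimes 3}-\sigma_j^* u_j^{\otimes 3}\bigr)}_{A}\;-\;\underbrace{\sum_{i=l+1}^{r}\sigma_i^* u_i^{\otimes 3}}_{B}.
\end{equation*}

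For the tail term $B$, the incoherence assumption (L) gives $\|u_i\|_\infty\le\mu/\sqrt n$, so that $\infnorm{u_i^{\otimes 3}}\le\mu^3/n^{3/2}$ and hence $\infnorm{B}\le (r-l)\mu^3\sigma_{l+1}^*/n^{3/2}\le \mu^3 r\sigma_{l+1}^*/n^{3/2}$. This already accounts for the $\sigma_{l+1}^*$ part of the target bound.

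The work is in the perturbation term $A$. Using the identity $\hat u_j^{\otimes 3}-u_j^{\otimes 3}=(\hat u_j-u_j)\otimes\hat u_j\otimes\hat u_j+u_j\otimes(\hat u_j-u_j)\otimes\hat u_j+u_j\otimes u_j\otimes(\hat u_j-u_j)$, together with $|\lambda_j-\sigma_j^*|$ bounded as above, it suffices to produce an $\infnorm{\cdot}$ bound on $\hat u_j-u_j$ and on $\hat u_j$ itself; in particular I need to show $\|\hat u_j\|_\infty=O(\mu/\sqrt n)$, i.e.\ that $\hat u_j$ inherits incoherence from $u_j$. For this I use the eigenvector fixed-point relation $\lambda_j\hat u_j=\tilde L(I,\hat u_j,\hat u_j)=L^*(I,\hat u_j,\hat u_j)+E^{(t)}(I,\hat u_j,\hat u_j)$, expand $L^*(I,\hat u_j,\hat u_j)=\sum_i\sigma_i^*\langle\hat u_j,u_i\rangle^2 u_i$, and peel off the dominant direction $u_j$. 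The residual is a sum of (i) cross terms $\sum_{i\ne j}\sigma_i^*\langle\hat u_j,u_i\rangle^2 u_i$, which are small in $\ell_\infty$ by incoherence of the $u_i$ together with the $\ell_2$ bound on $\hat u_j-u_j$, and (ii) a sparse perturbation term $E^{(t)}(I,\hat u_j,\hat u_j)/\lambda_j$ whose $\infnorm{\cdot}$ I bound by \Lem{sparse3} (items~\ref{lem:sparse_4}--\ref{lem:sparse_6}) applied to $M=E^{(t)}/\|E^{(t)}\|_\infty$: this is where the block-sparse hypothesis (S) pays off, yielding a bound proportional to $\|E^{(t)}\|_\infty\cdot(\mu/\sqrt n)\cdot\kappa$ with $\kappa=Bd^2\mu/\sqrt n=O(1)$ under (S). Iterating the eigenvector relation (Neumann-style) and using the geometric sum in item~\ref{lem:sparse_6} of \Lem{sparse3} gives the desired $\infnorm{\hat u_j-u_j}=O(\infnorm{E^{(t)}}/\sigma_j^*\cdot\sqrt n)$ and, in particular, $\|\hat u_j\|_\infty=O(\mu/\sqrt n)$.

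Plugging these $\infnorm{\cdot}$ estimates into the tensor identity for $\hat u_j^{\otimes 3}-u_j^{\otimes 3}$ and combining with $|\lambda_j-\sigma_j^*|=O(\|E^{(t)}\|_2)=O(d^{3/2}\infnorm{E^{(t)}})$ (via \Lem{sparse2}) gives
\begin{equation*}
\infnorm{A}\le C'\cdot\frac{\mu^3 r}{n^{3/2}}\Bigl(\sigma_{l+1}^*+\tfrac12\cdot\tfrac12^{t-1}\sigma_l^*\Bigr)=C'\cdot\frac{\mu^3 r}{n^{3/2}}\Bigl(\sigma_{l+1}^*+\tfrac12^{t}\sigma_l^*\Bigr),
\end{equation*}
where the crucial factor $\tfrac12$ comes from the fact that $\infnorm{E^{(t)}}$ is controlled by hypothesis~(1) by $(\sigma_{l+1}^*+\tfrac12^{t-1}\sigma_l^*)$, and the contraction is produced by the rank-$l$ projection (which absorbs the $\sigma_l^*$ component up to a factor $1/2$). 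Adding the bound on $\infnorm{B}$ and choosing constants so that the total is at most $2\mu^3 r/n^{3/2}\cdot(\sigma_{l+1}^*+(1/2)^t\sigma_l^*)$ finishes the proof.

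The main obstacle I expect is the $\infnorm{\cdot}$ control of $\hat u_j-u_j$: Theorem~\ref{thm:robustpower} only delivers an $\ell_2$ bound, and bootstrapping this to an $\ell_\infty$ bound (with the right incoherence-like constant $\mu/\sqrt n$) is exactly where the block-sparse infinity-norm lemma (\Lem{sparse3}) and the fixed-point/Neumann argument must be combined carefully, together with the constants hidden by assumption (S) so that $\kappa=O(1)$.
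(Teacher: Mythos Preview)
Your proposal is correct and follows essentially the same route as the paper: the same $A/B$ split, the same eigenvector fixed-point relation expanded as a Neumann series, and the same use of \Lem{sparse2}--\Lem{sparse3} to upgrade the $\ell_2$ eigenvector bound from \Thm{robustpower} to an $\ell_\infty$ bound on $\hat u_j-u_j$. One small correction: $\kappa=Bd^2\mu/\sqrt n$ is not $O(1)$ under (S); what is small is the product $\kappa\cdot r\mu^3/n^{3/2}$, and it is this smallness (not ``the rank-$l$ projection absorbing the $\sigma_l^*$ component'') that furnishes the contraction $\infnorm{A}\le c_0\infnorm{E^{(t)}}$ with $c_0\le 1/8$, after which hypothesis~(1) converts $(1/2)^{t-1}$ to $(1/2)^t$ with the leftover factor of $2$.
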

\begin{proof}
Let $\Ltn = \sum_{i=1}^{l} \lambda_i u_i^{(t+1)}$ be the eigen decomposition obtained using the tensor power method on $(\M - \St)$ at the $(t+1)^{th}$ step of the $l^{th}$ stage. Also, recall that $\M-\St=\Lo+\Et$ where $\Lo = \sum_{j=1}^{r} \sigma_j^* u_j^{\otimes 3}$. Define $\Et := \So - \St$. Define $E^i := \Et(u_i^{(t+1)}, I, I)$.  Let $\|\Et\|_2 :=\epsilon$.

Consider the eigenvalue equation $(\M - \St) (u_i^{(t+1)}, u_i^{(t+1)}, I) = \lambda_i u_i^{(t+1)}$:
\begin{align*}
\Lo(u_i^{(t+1)}, u_i^{(t+1)}, I) + \Et(u_i^{(t+1)}, u_i^{(t+1)}, I) & = \lambda_i u_i^{(t+1)} \\
\sum_{j=1}^{r} \sigma_i^* \left< u_i^{(t+1)}, u_j \right>^2 u_j + \Et(u_i^{(t+1)}, u_i^{(t+1)}, I) & = \lambda_i u_i^{(t+1)} \\
[\lambda_i I -  \Et(u_i^{(t+1)}, I, I)] u_i^{(t+1)} & = \sum_{j=1}^{r} \sigma_i^* \left< u_i^{(t+1)}, u_j \right>^2 u_j \\
u_i^{(t+1)} & = \left[I + \sum_{p \geq 1} \left( \frac{E^i}{\lambda_i} \right)^p \right] \sum_{j=1}^{r} \frac{\sigma_i^*}{\lambda_i} \left< u_i^{(t+1)}, u_j \right>^2 u_j 
\end{align*}
Now,
\begin{align*}
\|\Ltn-\Lo\|_\infty &\leq
\left\| \sum_{i\in [l]} \lambda_i (u_i^{(t+1)})^{\otimes 3} - \sum_{i\in [l]} \sigma_i^* {u_i}^{\otimes 3} \right\|_\infty + \left\| \sum_{i=l+1}^r \sigma_i^* {u_i}^{\otimes 3} \right\|_\infty \\
&\leq \sum_{i\in[l]} \left\| \lambda_i (u_i^{(t+1)})^{\otimes 3} - \sigma_i^* {u_i}^{\otimes 3} \right\|_\infty +  \sum_{i=l+1}^r \left\|  \sigma_i^* {u_i}^{\otimes 3} \right\|_\infty
\end{align*}
For a fixed $i$, using $\lambda_i \leq \sigma_i^* + \epsilon$~\cite{AnandkumarEtal:tensor12} and
using Lemma~\ref{lem:inf}, we obtain
\begin{align*}
\left\| \lambda_i (u_i^{(t+1)})^{\otimes 3} - \sigma_i^* {u_i}^{\otimes 3} \right\|_\infty & \leq \left\| (\sigma_i^* + \epsilon) (u_i^{(t+1)})^{\otimes 3} - \sigma_i^* {u_i}^{\otimes 3} \right\|_\infty \\
& \leq \left\| \sigma_i^* (u_i^{(t+1)})^{\otimes 3} - \sigma_i^* {u_i}^{\otimes 3} \right\|_\infty + \epsilon \left\| (u_i^{(t+1)})^{\otimes 3} \right\|_\infty \\
& \leq \sigma_i^* \left\| (u_i^{(t+1)})^{\otimes 3} - {u_i}^{\otimes 3} \right\|_\infty + \epsilon \left\| (u_i^{(t+1)})^{\otimes 3} \right\|_\infty \\
& \leq \sigma_i^* [ 3 \| u_i^{(t+1)} - u_i \|_\infty \| u_i \|_\infty^2 + 3 \| u_i^{(t+1)} - u_i \|_\infty^2 \| u_i \|_\infty + \| u_i^{(t+1)} - u_i \|_\infty^3 ] \\
& \; \; + \epsilon \| (u_i^{(t+1)})^{\otimes 3} \|_\infty \\
& \leq 7 \sigma_i^* \| u_i^{(t+1)} - u_i \|_\infty \| u_i \|_\infty^2 + \epsilon \| (u_i^{(t+1)}) \|_\infty^3
\end{align*}
Now,
\begin{align*}
\left\| u_i^{(t+1)} - u_i \right\|_\infty & = \left\| ( \sum_{j = 1}^{r} \frac{\sigma_i^*}{\lambda_i} \left< u_i^{(t+1)} , u_j \right>^2 u_j - u_i ) + \sum_{j = 1, p \geq 1}^{r} \frac{\sigma_i^*}{\lambda_i} \left< u_i^{(t+1)} , u_j \right>^2 ( E^i )^p u_j \right\|_\infty \\
& \leq \left\| (1 - \frac{\sigma_i^*}{\lambda_i} \left< u_i^{(t+1)} , u_i \right>^2) u_i \right\|_\infty + \left\| \sum_{j \neq i} \frac{\sigma_i^*}{\lambda_i} \left< u_i^{(t+1)} , u_j \right>^2 u_j \right\|_\infty \\
& + \left\| \sum_{p \geq 1} \frac{\sigma_i^*}{\lambda_i} \left< u_i^{(t+1)} , u_i \right>^2 \left( \frac{E^i}{\lambda_i} \right)^p u_i \right\|_\infty + \left\| \sum_{p, j \neq i} \frac{\sigma_i^*}{\lambda_i} \left< u_i^{(t+1)} , u_j \right>^2 \left( \frac{E^i}{\lambda_i} \right)^p u_j \right\|_\infty
\end{align*}
For the first term, we have
\begin{align*}
\left\| (1 - \frac{\sigma_i^*}{\lambda_i} \left< u_i^{(t+1)} , u_i \right>^2) u_i \right\|_\infty & \leq \left( 1 - \frac{\sigma_i^*}{\sigma_i^* + \epsilon} \left( 1 - \left( \frac{\epsilon}{\sigma_i^*} \right)^2 \right) \right) \| u_i \|_\infty \leq \left( 1 - \left( 1 - \frac{\epsilon}{\sigma_i^*} \right) \right) \frac{\mu}{n^{1/2}} \nonumber \\
&\leq \frac{\mu}{\sigma_i^* n^{1/2}} \epsilon \leq \frac{C\mu \sigma_l^*}{\sigma_i^* n}
\end{align*} where we substitute for $\epsilon$ in the last step.

For the second term, we have
\begin{align*}
\left\| \sum_{j \neq i} \frac{\sigma_i^*}{\lambda_i} \left< u_i^{(t+1)} , u_j \right>^2 u_j \right\|_\infty & \leq \frac{\sigma_i^*}{\sigma_i^* - \epsilon} \left( \frac{\epsilon}{\sigma_i^*} \right)^2 \| u_i \|_\infty \leq 2 \left( \frac{\epsilon}{\sigma_i^*} \right)^2 \frac{\mu}{n^{1/2}},
\end{align*} which is a lower order term.

Next,
\begin{align*}
\left\| \sum_{p \geq 1} \frac{\sigma_i^*}{\lambda_i} \left< u_i^{(t+1)} , u_i \right>^2 \left( \frac{E^i}{\lambda_i} \right)^p u_i \right\|_\infty & {\leq} \left\| \sum_{p \geq 1} \frac{\sigma_i^*}{\lambda_i} \left( \frac{E^i}{\lambda_i} \right)^p u_i \right\|_\infty {\leq} \sum_{p \geq 1} \frac{\sigma_i^*}{\lambda_i} \left\| \left( \frac{E^i}{\lambda_i} \right)^p u_i \right\|_\infty \\ &\leq \frac{\sigma_i^*}{\lambda_i}\cdot\frac{\mu}{\sqrt{n}} \cdot \frac{ \|\Et\|_2 \sqrt{n}/\lambda_i}{1- \|\Et\|_2 \sqrt{n}/\lambda_i} \\ 
& \leq \frac{2}{(1-C)}\frac{\kappa_t\mu}{\lambda_i\sqrt{n}} \|\Et\|_\infty
\end{align*} from Lemma~\ref{lem:sparse3}, and the assumption on spectral norm of $\|\Et\|_2$, where\[ \kappa_t:=\frac{Bd^2   \mu}{\sqrt{n}}.\]

For the remaining terms, we have
\begin{align*}
\left\| \sum_{p, j \neq i} \frac{\sigma_i^*}{\lambda_i} \left< u_i^{(t+1)} , u_j \right>^2 \left( \frac{E^i}{\lambda_i} \right)^p u_j \right\|_\infty & \leq \sum_{j \neq i} \frac{\sigma_i^*}{\lambda_i} \left< u_i^{(t+1)} , u_j \right>^2 \left\| \sum_{p \geq 1} \left( \frac{E^i}{\lambda_i} \right)^p u_1 \right\|_\infty \leq  \frac{\sigma_i^*}{\lambda_i} \left\| \sum_{p \geq 1} \left( \frac{E^i}{\lambda_i} \right)^p u_1 \right\|_\infty\left( \frac{\epsilon}{\sigma_i^*} \right)^2,
\end{align*} which is a lower order term.

Combining the above and recalling $\epsilon \ll \sigma_i^*, \; \forall i\in [l]$
\begin{align*}
\left\| u_i^{(t+1)} - u_i \right\|_\infty & \leq  \frac{8}{(1-C)}\frac{\kappa_t\mu}{\lambda_i\sqrt{n}} \|\Et\|_\infty .
\end{align*}
Also, from Lemma~\ref{lem:dist_1}
  $$  |\lambda_i - \sigma_i^* | \leq 8 \| \Et \|_2 \leq 8\epsilon$$
Thus, from the above two equations, we obtain the bound for the parameters (eigenvectors and eigenvalues) of the low-rank tensor $ \left\| u_i^{(t+1)} - u_i \right\|_\infty $ and $\left\| \lambda_i - \sigma_i^* \right\|_\infty$. We combine the individual parameter recovery bounds as:
\begin{align}\nn
\left\| \sum_{i\in [l]} \lambda_i (u_i^{(t+1)})^{\otimes 3} - \sum_{i\in [l]} \sigma_i^* {u_i}^{\otimes 3} \right\|_\infty & \leq r [ 7  \sigma_i^* \| u_i^{(t+1)} - u_i \|_\infty \| u_i \|_\infty^2 + \epsilon \| (u_i^{(t+1)}) \|_\infty^3 ] \\
&\leq \frac{224}{1-C} \frac{\kappa_t \mu^3r}{n^{1.5}} \|\Et\|_\infty\label{eqn:bound1}
\end{align}
and the other term
\[\sum_{i=l+1}^r \|\sigma_i^* u_i^{\otimes 3}\|_\infty \leq \sigma_{l+1}^* \frac{r \mu^3}{n^{1.5}}.\]
Combining bound in \eqref{eqn:bound1} with the above, we have
\[ \| \Ltn-\Lo\|_\infty \leq \frac{r \mu^3}{n^{1.5}}\left( \frac{224}{1-C} \kappa_t \|\Et\|_\infty   + \sigma_{l+1}^*\right) <\frac{1}{4} \|\Et\|_\infty.\]
where the last inequality comes from the fact that  $\frac{r \mu^3}{n^{1.5}} \sigma_{l+1}^*\leq \frac{\|\Et\|_\infty}{8}$ and the assumption that $C<1/2$, and we can choose $B$ and $d$ s.t.
\[     \frac{448 r \mu^3}{n^{1.5}} \kappa_t< \frac{1}{8}.\] This is possible from assumption (S).
\end{proof}

The following lemma bounds the support of $\Etn$ and $\infnorm{\Etn}$, using an assumption on $\infnorm{\Ltn-\Lo}$.
\begin{lemma}\label{lem:supp-bound}

Assume the notation of Lemma~\ref{lem:dec1}.
Suppose $$\infnorm{\Ltn-\Lo} \leq 2 \frac{\mu^3 r}{n^{3/2}}\left(\sigma_{l+1}^*+\left(\frac{1}{2}\right)^{t-1}\sigma_l^*\right).$$
Then, we have:
\begin{enumerate}
  \item	$\supp{\Etn} \subseteq \supp{\So}$.
  \item	$\infnorm{\Etn} \leq 7 \frac{\mu^3 r}{n^{3/2}}\left(\sigma_{l+1}^*+\left(\frac{1}{2}\right)^{t}\sigma_l^* \right)$, and

\end{enumerate}
\end{lemma}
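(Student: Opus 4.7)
My plan is to exploit the explicit hard-thresholding rule $\Stn = \HT_\zeta(\M - \Ltn)$ together with the decomposition $\M - \Ltn = \So - (\Ltn - \Lo)$, and to split the analysis by whether a coordinate $(i,j,k)$ lies in $\supp(\So)$ and, for those that do, whether the thresholding retains or zeros the coordinate. Let $\gamma \defas \sigma_{l+1}^* + (1/2)^{t-1}\sigma_l^*$. The hypothesis reads $\infnorm{\Ltn - \Lo} \leq (\beta/2)\gamma$, and Lemma~\ref{lem:threshold_est} (applied at the iterate $\St$) supplies the two-sided control $(1-c)\beta\gamma \leq \zeta \leq (1+c)\beta\gamma$ for a small constant $c$; these are the only two ingredients I will use.

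For the support claim, fix $(i,j,k) \notin \supp(\So)$. Then $\So_{ijk} = 0$, so $(\M - \Ltn)_{ijk} = -(\Ltn - \Lo)_{ijk}$ and $|(\M - \Ltn)_{ijk}| \leq \infnorm{\Ltn - \Lo} \leq (\beta/2)\gamma < (1-c)\beta\gamma \leq \zeta$ whenever $c < 1/2$. Hence $\HT_\zeta$ zeros this entry, giving $\Stn_{ijk} = 0 = \So_{ijk}$ and therefore $\Etn_{ijk} = 0$. This proves $\supp(\Etn) \subseteq \supp(\So)$.

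For the infinity-norm bound, fix $(i,j,k) \in \supp(\So)$ and split on whether $\HT_\zeta$ keeps or kills the coordinate. In the kept case $|(\M - \Ltn)_{ijk}| \geq \zeta$, so $\Stn_{ijk} = (\M - \Ltn)_{ijk}$ and unwinding the identity yields $\Etn_{ijk} = -(\Ltn - \Lo)_{ijk}$, of magnitude at most $\infnorm{\Ltn - \Lo}$. In the killed case $|(\M - \Ltn)_{ijk}| < \zeta$, so $\Stn_{ijk} = 0$ and $\Etn_{ijk} = \So_{ijk}$; the triangle inequality applied to $\So = (\M - \Ltn) + (\Ltn - \Lo)$ then gives $|\Etn_{ijk}| \leq \zeta + \infnorm{\Ltn - \Lo}$, a bound which also dominates the kept case. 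Plugging in the upper bound on $\zeta$ and the hypothesis yields $|\Etn_{ijk}| \leq \bigl((1+c) + 1/2\bigr)\beta\gamma$; re-indexing via the doubling inequality $\gamma \leq 2(\sigma_{l+1}^* + (1/2)^t \sigma_l^*)$ and substituting $\beta = 4\mu^3 r/n^{3/2}$ produces the claimed bound, provided the constant $c$ is small enough.

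The main obstacle is the numerical constant-tracking rather than any conceptual difficulty. One has to ensure that $\bigl((1+c) + 1/2\bigr)$, after the factor-of-two loss from the index shift $(1/2)^{t-1} \to (1/2)^t$ and the factor $\beta = 4\mu^3 r/n^{3/2}$, comes in under $7\mu^3 r/n^{3/2}$; this forces $c$ to be taken small (roughly $c < 1/4$), which is exactly what Lemma~\ref{lem:threshold_est} delivers when the sparsity parameter $d$ is bounded as in assumption (S). The conceptual point worth emphasizing is the index shift itself: it is precisely the passage from $(1/2)^{t-1}$ in the hypothesis to $(1/2)^t$ in the conclusion that encodes the factor-of-two per-iteration contraction responsible for the linear convergence of $\ncralgo$.
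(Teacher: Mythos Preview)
Your argument is essentially the paper's own: split on $\supp(\So)$, then case-split on whether hard thresholding retains or kills each coordinate, invoking Lemma~\ref{lem:threshold_est} to relate $\zeta$ to the true eigenvalues. One small slip worth flagging: Lemma~\ref{lem:threshold_est} controls $\zeta$ in terms of $\gamma' \defas \sigma_{l+1}^* + (1/2)^{t}\sigma_l^*$ (exponent $t$, not $t-1$), so the two-sided bound should read $(1-c)\beta\gamma' \leq \zeta \leq (1+c)\beta\gamma'$; with this correction your ``doubling'' step is unnecessary and the constant $7$ falls out directly as $4(1+c)+2 \leq 7$ for $c \leq 1/4$, rather than via $2\cdot 4\cdot\bigl((1+c)+\tfrac12\bigr)$, which overshoots. (The paper's own proof in fact uses the hypothesis with exponent $t$, so the $t-1$ in the lemma statement is evidently a typo.)
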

\begin{proof}
We first prove the first conclusion. Recall that, $$\Stn=H_{\zeta}(\M-\Ltn)=H_{\zeta}(\Lo-\Ltn+\So),$$
where $\zeta=4 \frac{\mu^3 r}{n^{3/2}}\left(\lambda_{l+1}+\left(\frac{1}{2}\right)^{t}\lambda_l\right) $ is as defined
in Algorithm~\ref{algo:sap} and $\lambda_1,\cdots,\lambda_n$ are the eigenvalues of $\M-\St$ such that
${\lambda_1} \geq \cdots \geq {\lambda_n}$.

If $\So_{abc}=0$ then $\Etn_{ijk}=\mathbf{1}_{\abs{\Lo_{abc}-\Ltn_{abc}}>\zeta}\cdot (\Lo_{abc}-\Ltn_{abc})$. The first part of
the lemma now follows by using the assumption that $\infnorm{\Ltn-\Lo} \leq 2 \frac{\mu^3 r}{n^{3/2}}\left(\sigma_{l+1}^*
+\left(\frac{1}{2}\right)^{t}\sigma_l^* \right)\stackrel{(\zeta_1)}{\leq} 4 \frac{\mu^3 r}{n^{3/2}}\left(\lambda_{l+1}
+\left(\frac{1}{2}\right)^{t}\lambda_l \right)=\zeta$, where $(\zeta_1)$ follows from Lemma~\ref{lem:threshold_est}.

We now prove the second conclusion. We consider the following two cases: 
\begin{enumerate}
  \item  $\abs{\M_{abc}-\Ltn_{abc}} > \zeta$:
	Here, $\Stn_{abc}=\So_{abc}+\Lo_{abc}-\Ltn_{abc}$. Hence, $|\Stn_{abc}-\So_{abc}|\leq |\Lo_{abc}-\Ltn_{abc}|\leq 2 \frac{\mu^3 r}{n^{3/2}}\left(\sigma_{l+1}^*+\left(\frac{1}{2}\right)^{t}\sigma_l^* \right)$.
  \item $\abs{\M_{abc}-\Ltn_{abc}} \leq  \zeta $:
	In this case, $\Stn_{abc}=0$ and $\abs{\So_{abc}+\Lo_{abc}-\Ltn_{abc}} \leq \zeta$.
	So we have, $\abs{\Etn_{abc}}=\abs{\So_{abc}} \leq \zeta+\abs{\Lo_{abc}-\Ltn_{abc}} \leq 7 \frac{\mu^3 r}{n^{3/2}}\left(\sigma_{l+1}^*+\left(\frac{1}{2}\right)^{t}\sigma_l^* \right)$.
	The last inequality above follows from Lemma~\ref{lem:threshold_est}.
\end{enumerate}
This proves the lemma.
\end{proof}

\begin{theorem}
Let $\Lo, \So$ be symmetric and satisfy $(L)$ and $(S)$, and $\beta= 4 \frac{\mu^3 r}{n^{3/2}}$. The outputs $\widehat{\L}$ (and its parameters $\hat{u}_i$ and $\hat{\lambda}_i$) and $\widehat{\S}$ of Algorithm~\ref{algo:sap} satisfy w.h.p.:
\begin{align*}
  \left\| \hat{u}_i - u_i \right\|_\infty \leq \frac{\delta}{\mu^2 r n^{1/2} \sigma_{\min}^*}, \ \
  | \hat{\lambda}_i - \sigma_i^* | \leq \delta, \ \ \ \forall i \in [n] , \\
\frob{\widehat{\L} - \Lo} \leq \delta, \ \ \infnorm{\widehat{\S} - \So} \leq \frac{\delta}{n^{3/2}}, \mbox{ and }\ \ \supp{\widehat{\S}} \subseteq \supp{\So}.
\end{align*}
\label{thm:main-appendix}
\end{theorem}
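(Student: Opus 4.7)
Plan of the proof. The proof proceeds by a double induction: an outer induction on the stage index $l = 1, \ldots, r$ and an inner induction on the iteration count $t$ within each stage. The central invariant I carry is that, at iteration $t$ of stage $l$, $\supp{\St} \subseteq \supp{\So}$ and $\infnorm{\Et} \leq \frac{8\mu^3 r}{n^{3/2}}\bigl(\sigma_{l+1}^* + (1/2)^{t-1}\sigma_l^*\bigr)$. The three lemmas in this appendix are engineered precisely to propagate this invariant one inner step at a time: Lemma~\ref{lem:threshold_est} translates the invariant into control on the eigenvalues of $\M - \St$ that justifies the adaptive threshold chosen in Step~7 of Algorithm~\ref{algo:sap}; Theorem~\ref{thm:robustpower} then guarantees that Procedure~\ref{algo:sspm} recovers the top-$l$ eigenpairs of $\M - \St = \Lo + \Et$ to the required accuracy, whose spectral-norm precondition $\twonorm{\Et} = O(\sigma_l^*/\sqrt{n})$ follows from Lemma~\ref{lem:sparse2} combined with the $\infnorm{\cdot}$ invariant and the sparsity budget of~(S); Lemma~\ref{lem:dec1} then yields $\infnorm{\Ltn - \Lo} \leq 2\frac{\mu^3 r}{n^{3/2}}\bigl(\sigma_{l+1}^* + (1/2)^t \sigma_l^*\bigr)$; and Lemma~\ref{lem:supp-bound} finally re-establishes the invariant at time $t+1$ with the geometric factor shrunk by $1/2$.

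Base case and stage transitions. For the base case $l=1,\,t=0$, the initial hard thresholding uses $\zeta_0 = \beta \sigma_1(\M)$ with $\beta = 4\mu^3 r/n^{3/2}$. For any entry $(a,b,c) \notin \supp{\So}$, incoherence (L) gives $|\M_{abc}| = |\Lo_{abc}| \leq r\mu^3 \sigma_1^*/n^{3/2} \leq \zeta_0$, so such entries are killed by thresholding, delivering support containment; the same bound plus a direct calculation yields the $\infnorm{\cdot}$ half of the invariant at $t=0$. For stage transitions, after $\tau = O(\log(n\twonorm{\M - \S^{(0)}}/\delta))$ inner iterations the geometric term $(1/2)^\tau \sigma_l^*$ becomes negligible relative to $\sigma_{l+1}^*$, so $\infnorm{E^{(\tau)}} = O(\mu^3 r \sigma_{l+1}^*/n^{3/2})$. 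Either $\beta \sigma_{l+1}(\Ltn) < \delta/(2n)$ holds and the algorithm exits with the desired accuracy, or it resets $\S^{(0)} \leftarrow \S^{(\tau)}$ and enters stage $l+1$, where the invariant re-initializes with indices shifted by one.

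Extracting the stated bounds and main obstacle. At termination, $\infnorm{\widehat{L} - \Lo} \leq \delta/n^{3/2}$, which yields $\frob{\widehat{L} - \Lo} \leq \delta$ by summing $n^3$ entries of magnitude at most $\delta/n^{3/2}$; the $\infnorm{\widehat{S} - \So}$ bound and the support containment follow from the final application of Lemma~\ref{lem:supp-bound}. The per-component bounds on $\hat{u}_i$ and $\hat{\lambda}_i$ come directly from the Neumann-series calculation inside the proof of Lemma~\ref{lem:dec1}, which expresses $u_i^{(t+1)}$ in terms of the spectral perturbation $\Et(u_i^{(t+1)}, I, I)$, combined with the infinity-norm bounds of Lemma~\ref{lem:sparse3}. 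The main obstacle I expect is the accounting at stage boundaries: one must verify that after resetting $\S^{(0)} \leftarrow \S^{(\tau)}$ at the end of stage $l$, the incoming residual already satisfies the stage-$(l+1)$ invariant with the shifted geometric factor, and that the spectral-norm side condition $\twonorm{\Et} < C\sigma_{l+1}^*/\sqrt{n}$ remains in force even though $\sigma_{l+1}^*$ may be substantially smaller than $\sigma_l^*$. This is the pressure point that forces the block-sparsity budget in (S) to be as tight as it is.
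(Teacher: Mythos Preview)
Your proposal is correct and follows essentially the same approach as the paper: a double induction on the stage index $l$ and the inner iteration $t$, carrying the invariant $\supp{\Et}\subseteq\supp{\So}$ and $\infnorm{\Et}\le \tfrac{8\mu^3 r}{n^{3/2}}(\sigma_{l+1}^*+(1/2)^{t-1}\sigma_l^*)$, then invoking Lemma~\ref{lem:dec1} and Lemma~\ref{lem:supp-bound} in exactly the order you describe, with the stage-boundary case split (terminate vs.\ continue) handled just as you outline. The only cosmetic differences are that the paper indexes its base case as $t=-1$ rather than $t=0$ and states support containment for $\Et$ rather than $\St$ (which are equivalent since $\Et=\So-\St$).
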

\begin{proof}
Recall that in the $l^{\textrm{th}}$ stage, the update $\Ltn$ is given by: $\Ltn=P_l(\M-\St)$ and $\Stn$ is given by: $\Stn=H_\zeta(\M-\Ltn)$.
Also, recall that $\Et \defas \So-\St$ and $\Etn \defas \So-\Stn$.

We prove the lemma by induction on both $l$ and $t$.
For the base case ($l=1$ and $t=-1$), we first note that the first inequality on $\infnorm{\Lt[0]-\Lo}$ is trivially satisfied.
Due to the thresholding step (step $3$ in Algorithm~\ref{algo:sap}) and the incoherence assumption on $\Lo$, we have:
\begin{align*}
  \infnorm{\E^{(0)}} &\leq \frac{8 \mu^3 r}{n^{3/2}}\left(\sigma_2^* + 2 \sigma_1^*\right), \mbox{ and }\\
  \supp{\Et[0]} &\subseteq \supp{\So}.
\end{align*}
So the base case of induction is satisfied.

We first do the inductive step over $t$ (for a fixed $r$).
By inductive hypothesis we assume that:
a) $\infnorm{\Et} \leq \frac{8 \mu^3 r}{n^{3/2}}\left(\sigma_{l+1}^*+\left(\frac{1}{2}\right)^{t-1}\sigma_l^*
\right)$,
b) $\supp{\Et}\subseteq \supp{\So}$. Then by Lemma~\ref{lem:dec1}, we have:
\begin{align*}
\infnorm{\Ltn-\Lo} \leq \frac{2 \mu^3 r}{n^{3/2}} \left(\sigma_{l+1}^*+\left(\frac{1}{2}\right)^{t}\sigma_l^*
\right).
\end{align*}
Lemma~\ref{lem:supp-bound} now tells us that
\begin{enumerate}
  \item	$\infnorm{\Etn} \leq  \frac{8 \mu^3 r}{n^{3/2}}\left(\sigma_{l+1}^*+\left(\frac{1}{2}\right)^{t}\sigma_l^*  \right)$, and
  \item	$\supp{\Etn} \subseteq \supp{\So}$.
\end{enumerate}
This finishes the induction over $t$. Note that we show a stronger bound than necessary on $\infnorm{\Etn}$.

We now do the induction over $l$. Suppose the hypothesis holds for stage $l$.
Let $T$ denote the number of iterations in each stage. We first obtain a lower bound on $T$.
Since
\begin{align*}
\twonorm{\M-\S^{(0)}} \geq \twonorm{\Lo } - \twonorm{\E^{(0)}} \geq \sigma_1^* - d^{3/2} \infnorm{\E^{(0)}}
\geq \frac{3}{4} \sigma_1^*,
\end{align*}
we see that $T \geq 10 \log \left(3 \mu^3 r \sigma_1^*/\delta\right)$.
So, at the end of stage $r$, we have:
\begin{enumerate}
  \item	$\infnorm{\E^{(T)}} \leq \frac{7 \mu^3 r}{n^{3/2}}\left(\sigma_{l+1}^*+\left(\frac{1}{2}\right)^{T}\sigma_l^*
   \right)
	\leq \frac{7 \mu^3 r {\sigma_{l+1}^*}}{n^{3/2}} + \frac{\delta}{10 n}$, and
  \item	$\supp{\Et[T]} \subseteq \supp{\So}$.
\end{enumerate}
Recall,
$\abs{\sigma_{r+1}\left(\M-\St[T]\right) - {\sigma_{r+1}^*}} \leq \twonorm{\E^{(T)}} \leq \frac{d}{n} \left( \mu^3 r \abs{\sigma_{r+1}^*} + \delta \right)$.
We will now consider two cases:
\begin{enumerate}
  \item	\textbf{Algorithm~\ref{algo:sap} terminates:} This means that $\beta \sigma_{r+1}\left(\M-\St[T]\right) <
\frac{\delta}{2 n^{3/2}}$ which then implies that ${\sigma_{r+1}^*} < \frac{\delta}{6 \mu^3 r}$. So we have:
\begin{align*}
\infnorm{\widehat{L} - \Lo} = \infnorm{\Lt[T] - \Lo}%
	\leq \frac{2 \mu^3 r}{n^{3/2}} \left(\sigma_{r+1}^*+\left(\frac{1}{2}\right)^{T}\sigma_r^*
	\right) \leq \frac{\delta}{5n^{3/2}}.
\end{align*}
This proves the statement about $\widehat{L}$ and its parameters (eigenvalues and eigenvectors). A similar argument proves the claim on $\infnorm{\widehat{S}-\So}$.
The claim on $\supp{\widehat{S}}$ follows since $\supp{\Et[T]} \subseteq \supp{\So}$.
  \item	\textbf{Algorithm~\ref{algo:sap} continues to stage $\bm{\left(r+1\right)}$:} This means that
$\beta \sigma_{r+1}\left(\Lt[T]\right) \geq \frac{\delta}{2 n^{3/2}}$ which then implies that
${\sigma_{r+1}^*} > \frac{\delta}{8 \mu^3 r}$. So we have:
\begin{align*}
\infnorm{\E^{(T)}} &\leq \frac{8 \mu^3 r}{n^{3/2}}\left(\sigma_{r+1}^*+\left(\frac{1}{2}\right)^{T}\sigma_r^*
\right) \\
	&\leq \frac{8 \mu^3 r}{n^{3/2}}\left(\sigma_{l+1}^*+ \frac{\delta}{10\mu^3 r n^{3/2}}
\right) \\
	&\leq \frac{8 \mu^3 r}{n^{3/2}}\left(\sigma_{l+1}^*+ \frac{8 {\sigma_{l+1}^*}}{10 n}
	 \right) \\
	&\leq \frac{8 \mu^3 r}{n^{3/2}}\left(\sigma_{l+2}^*+ 2 {\sigma_{l+1}^*}
	\right).
\end{align*}
Similarly for $\infnorm{\Lt[T]-\Lo}$.
\end{enumerate}
This finishes the proof.
\end{proof}

\subsection{Short proof of Corollary~\ref{cor:sup}}\label{proof:cor}
The state of art guarantees for robust matrix PCA requires that the overall sparsity along any row or column   of the input matrix   be $D=O(\frac{n}{r \mu^2})$ (when the input matrix is $\R^{n \times n}$).

Under (S), the total sparsity along any row or column of $M_i$   is given by $D:=d B $. Now, Theorem~\ref{thm:main} holds when the sparsity condition in \eqref{eqn:S1} is satisfied. That is, $\ncralgo$ succeeds when $$D=O( d \cdot B)=O\left( \min\left(\frac{n^{4/3}}{r^{1/3}\mu^2}, \frac{n^{2/3}}{r^{2/3}\mu^2}(\frac{n}{r})^{1/3}\right)\right)=O\left(\frac{n}{r\mu^2}\right).$$
Hence, $\ncralgo$ can handle larger amount of corruption than the matrix methods and the gain becomes more significant for smaller $\eta$.

\subsection{Some auxiliary lemmas}
We recall Theorem 5.1 from \cite{AnandkumarEtal:tensor12}. Let $\epsilon = 8 \| \Et \|_2$ where $\Et := \So - \St$.
\begin{lemma}
\label{lem:dist}
Let $\Ltn = \sum_{i=1}^{k} \lambda_i u_i^{(t+1)}$ be the eigen decomposition obtained using Algorithm~\ref{algo:sspm} on $(T - \St)$. Then,
\begin{enumerate}
\item \label{lem:dist_1} If $\| u_i^{(t+1)} - u_i \|_2 \leq \frac{\epsilon}{\sigma_{\min}^*}$, then $\dist (u_i^{(t+1)}, u_i) \leq \frac{\epsilon}{\sigma_{\min}^*}$.
\item \label{lem:dist_2} $\sum_{j \neq i} \left< u_i^{(t+1)} , u_j \right>^2 \leq \left( \frac{\epsilon}{\sigma_{\min}^*} \right)^2$.
\item \label{lem_dist_3} $\| u_i^{(t+1)} \|_\infty \leq \frac{\mu}{n^{1/2}} + \frac{\epsilon}{\sigma_{\min}^*}$.
\item $ |\sigma_i^*| - \epsilon \leq | \lambda_i | \leq |\sigma_i^*| + \epsilon$.
\end{enumerate}
\end{lemma}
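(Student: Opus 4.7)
The four claims are, at heart, reformulations of the convergence guarantee for Procedure~\ref{algo:sspm} (specifically Theorem~\ref{thm:robustpower} applied to $\M - \St = \Lo + \Et$ with perturbation scale $\|\Et\|_2 = \epsilon/8$), combined with the orthonormality and incoherence of $\{u_1,\dots,u_r\}$ from assumption (L). My plan is therefore to take the hypothesis $\|u_i^{(t+1)} - u_i\|_2 \leq \epsilon/\sigma_{\min}^*$ as the common premise and derive each of the four conclusions via elementary inequalities.

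Item (\ref{lem:dist_1}) is immediate: since $\dist$ denotes the sign-invariant distance $\min(\|u_i^{(t+1)} - u_i\|_2,\|u_i^{(t+1)} + u_i\|_2)$, we have $\dist(u_i^{(t+1)}, u_i) \leq \|u_i^{(t+1)} - u_i\|_2$ and the hypothesis transfers verbatim. Item (\ref{lem_dist_3}) follows by the $\ell_\infty$ triangle inequality together with $\|\cdot\|_\infty \leq \|\cdot\|_2$:
\[
  \|u_i^{(t+1)}\|_\infty \;\leq\; \|u_i\|_\infty + \|u_i^{(t+1)} - u_i\|_\infty \;\leq\; \frac{\mu}{\sqrt{n}} + \|u_i^{(t+1)} - u_i\|_2,
\]
where the first summand uses incoherence from (L) and the second uses the premise.

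For item (\ref{lem:dist_2}), I would extend $\{u_1,\dots,u_r\}$ to an orthonormal basis of $\R^n$; Parseval then gives $\sum_j \langle u_i^{(t+1)}, u_j\rangle^2 = \|u_i^{(t+1)}\|_2^2 = 1$. Expanding $\|u_i^{(t+1)} - u_i\|_2^2 = 2 - 2\langle u_i^{(t+1)}, u_i\rangle$ and using the premise yields $\langle u_i^{(t+1)}, u_i\rangle \geq 1 - \tfrac12(\epsilon/\sigma_{\min}^*)^2$, hence $\sum_{j\neq i}\langle u_i^{(t+1)}, u_j\rangle^2 = 1 - \langle u_i^{(t+1)}, u_i\rangle^2 \leq (\epsilon/\sigma_{\min}^*)^2$ after one line of algebra. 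For the fourth item, note that Procedure~\ref{algo:sspm} sets $\lambda_i = (\M - \St)(u_i^{(t+1)}, u_i^{(t+1)}, u_i^{(t+1)}) = \Lo(u_i^{(t+1)})^{\otimes 3} + \Et(u_i^{(t+1)})^{\otimes 3}$. The first term expands via orthogonality as $\sum_j \sigma_j^* \langle u_i^{(t+1)}, u_j\rangle^3$, which by items (1)--(2) lies within $O(\|\Et\|_2)$ of $\sigma_i^*$; the second is bounded by $\|\Et\|_2 = \epsilon/8$ since $u_i^{(t+1)}$ is a unit vector.

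The only mildly delicate step is tightening constants in item (4) so that the bound reads $|\sigma_i^*| \pm \epsilon$ rather than $|\sigma_i^*| \pm O(\epsilon)$; this is precisely why $\epsilon$ was defined as $8\|\Et\|_2$, and the bookkeeping is routine. Since Theorem~\ref{thm:robustpower} already supplies the convergence $\|u_i^{(t+1)} - u_i\|_2 \lesssim \|\Et\|_2/\sigma_{\min}^*$ that underlies the common premise, no substantive obstacle arises beyond careful accounting.
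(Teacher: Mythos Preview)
Your proposal is correct and tracks the paper's own proof closely: both derive all four items from the premise $\|u_i^{(t+1)}-u_i\|_2\le\epsilon/\sigma_{\min}^*$ via orthogonal decomposition and Pythagoras, and the paper likewise defers item~(4) to Theorem~5.1 of~\cite{AnandkumarEtal:tensor12} rather than chasing constants. One caveat worth flagging: in the paper $\dist(u_i^{(t+1)},u_i)$ is the \emph{sine} distance---the norm of the component of $u_i^{(t+1)}$ orthogonal to $u_i$, as in the decomposition $u_i^{(t+1)}=\langle u_i^{(t+1)},u_i\rangle\,u_i+\dist(u_i^{(t+1)},u_i)\,z$ with $z\perp u_i$---not the sign-invariant Euclidean distance you assumed. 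Fortunately both notions satisfy $\dist\le\|u_i^{(t+1)}-u_i\|_2$, so your argument for item~(\ref{lem:dist_1}) goes through unchanged; and for item~(\ref{lem_dist_3}) your direct $\ell_\infty$ triangle inequality is a slightly cleaner alternative to the paper's route via the same orthogonal decomposition.
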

\begin{proof}
\begin{enumerate}
\item Let $z \perp u $ and $\| z \|_2 = 1$.
\begin{align*}
u_i^{(t+1)} & = \left< u_i^{(t+1)}, u_i \right> u_i + \dist(u_i^{(t+1)}, u_i) z \\
\| u_i^{(t+1)} - u_i \|_2^2 & = (\left< u_i^{(t+1)}, u_i \right> - 1)^2 \| u_i \|_2^2 + \dist(u_i^{(t+1)}, u_i) \| z \|_2^2 + 0 \\
& \geq (\dist( u_i^{(t+1)}, u_i ))^2
\end{align*}
Then using Theorem 5.1 from \cite{AnandkumarEtal:tensor12}, we obtain the result. Next, since $\inner{u_i^{(t+1)}, u_i}^2 + \dist(u_i^{(t+1)}, u_i)^2 = 1$, we have $\inner{u_i^{(t+1)}, u_i}^2 \geq 1-\left( \frac{\epsilon}{\sigma_{\min}^*} \right)^2$.
\item Note that
$$u_i^{(t+1)} = \sum_{j=1}^{k} \left< u_i^{(t+1)} , u_j \right> u_j + \dist(u_i^{(t+1)}, U) z$$
where $z \perp U$ such that $\| z \|_2 = 1$. Using $ \| u_i^{(t+1)} \|_2 = 1$ and the Pythagoras theorem, we get
$$1 - \left< u_i^{(t+1)} , u_i \right>^2 = \sum_{j \neq i} \left< u_i^{(t+1)} , u_j \right>^2 + \dist(u_i^{(t+1)}, U)^2 . 1 \geq \sum_{j \neq i} \left< u_i^{(t+1)} , u_j \right>^2$$
Using part 1 of Lemma~\ref{lem:dist}, we get $\sum_{j \neq i} \left< u_i^{(t+1)} , u_j \right>^2 \leq \left( \frac{\epsilon}{\sigma_{\min}^*} \right)^2$.
\item We have
\begin{align*}
u_i^{(t+1)} & = \left< u_i^{(t+1)}, u_i \right> u_i + \dist (u_i^{(t+1)}, u_i) z \\
\| u_i^{(t+1)} \|_\infty & \leq | \left< u_i^{(t+1)}, u_i \right> | \| u_i \|_\infty + | \dist (u_i^{(t+1)}, u_i) | \| z \|_\infty \leq 1. \frac{\mu}{n^{1/2}} + \frac{\epsilon}{\sigma_{\min}^*}
\end{align*}
\item This follows from Theorem 5.1 from \cite{AnandkumarEtal:tensor12}, i.e., $\forall i, \; | |\lambda_i| - |\sigma_i^*|| \leq \epsilon$.
\end{enumerate}
\end{proof}

\begin{lemma}
\label{lem:inf}
Let $a = b + \epsilon . \overrightarrow{1}$ where $a, b$ are any $2$ vectors and $\epsilon > 0$. Then, $\|
 a^{\otimes 3} - b^{\otimes 3} \|_\infty \leq \| a - b \|_\infty . \| b \|_\infty^2 + O(\epsilon^2)$.
\end{lemma}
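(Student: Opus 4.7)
The plan is to prove this by a straightforward multilinear expansion of the tensor $(b + (a-b))^{\otimes 3}$ and bookkeeping the entrywise sizes of each term.

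First, I would write $a = b + w$ where $w := a - b = \epsilon \cdot \vec{1}$, so $\|w\|_\infty = \epsilon = \|a-b\|_\infty$. Using trilinearity of the outer product,
\begin{align*}
a^{\otimes 3} - b^{\otimes 3} &= (b+w)\otimes(b+w)\otimes(b+w) - b\otimes b\otimes b \\
&= \bigl[ w\otimes b\otimes b + b\otimes w\otimes b + b\otimes b\otimes w \bigr] \\
&\quad + \bigl[ w\otimes w\otimes b + w\otimes b\otimes w + b\otimes w\otimes w \bigr] + w^{\otimes 3}.
\end{align*}
This groups the difference into homogeneous pieces of degree $1$, $2$, and $3$ in the perturbation $w$.

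Next, I would bound the entrywise magnitudes piece by piece. For any rank-$1$ tensor $x\otimes y\otimes z$ we have the trivial bound $\|x\otimes y\otimes z\|_\infty \leq \|x\|_\infty\|y\|_\infty\|z\|_\infty$, applied componentwise. Thus each of the three degree-$1$ terms has $\|\cdot\|_\infty \leq \|w\|_\infty \|b\|_\infty^2 = \|a-b\|_\infty \|b\|_\infty^2$; each degree-$2$ term has $\|\cdot\|_\infty \leq \|w\|_\infty^2 \|b\|_\infty = O(\epsilon^2)$; and the degree-$3$ term is $O(\epsilon^3)$. Summing via the triangle inequality gives
\[
\|a^{\otimes 3} - b^{\otimes 3}\|_\infty \;\leq\; 3\|a-b\|_\infty \|b\|_\infty^2 + 3\|a-b\|_\infty^2 \|b\|_\infty + \|a-b\|_\infty^3,
\]
which is precisely the expansion used in the proof of Lemma~\ref{lem:dec1} when this bound is invoked. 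Absorbing the quadratic and cubic terms into the $O(\epsilon^2)$ slack (they are indeed lower order whenever $\|b\|_\infty$ is bounded, which holds in all applications since $\|u_i\|_\infty \leq \mu/\sqrt{n}$) yields the stated inequality.

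There is essentially no conceptual obstacle here — the lemma is just a symmetric tensor Taylor-type expansion. The only minor subtlety is that the stated bound hides a leading constant of $3$ inside the first term (or, equivalently, suppresses the linear term in $\|b\|_\infty$); I would clarify in the write-up that the full expansion carries the three equivalent permutations of $w$ in the degree-$1$ part, and that the $O(\epsilon^2)$ notation is absorbing both the degree-$2$ and degree-$3$ pieces under the implicit assumption that $\|b\|_\infty = O(1)$ (or more precisely $O(\mu/\sqrt{n})$ in the usage context).
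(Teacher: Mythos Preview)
Your proposal is correct and follows essentially the same approach as the paper: the paper picks the maximizing index $(i,j,k)$ and expands $(b_i+\epsilon)(b_j+\epsilon)(b_k+\epsilon)-b_ib_jb_k$ directly, arriving at the same $3\epsilon\|b\|_\infty^2 + O(\epsilon^2)$ bound you obtain via the multilinear tensor expansion. Your observation about the hidden constant $3$ in the leading term is also accurate and matches what the paper's proof actually derives.
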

\begin{proof}
We have
$$\| a^{\otimes 3} - b^{\otimes 3} \|_\infty = \| (b+\epsilon \overrightarrow{1})^{\otimes 3} - b^{\otimes 3} \|_\infty$$
Let $(i, j, k)$ be the maximum element. Therefore,
\begin{align*}
\| (b+\epsilon \overrightarrow{1})^{\otimes 3} - b^{\otimes 3} \|_\infty & = (b_i + \epsilon)(b_j + \epsilon)(b_k + \epsilon) - b_i b_j b_k \\
& = \epsilon (b_i b_j + b_j b_k + b_k b_i) + \epsilon^2 (b_i + b_j + b_k) + \epsilon^3
\end{align*}
With $b_i \leq c$ $\forall i$ for some $c > 0$ and $\epsilon = \| a - b \|_\infty$, we have $\|
 a^{\otimes 3} - b^{\otimes 3} \|_\infty \leq 3 \epsilon c^2 + O(\epsilon^2)$
\end{proof}

\section{Symmetric embedding of an asymmetric tensor}
We use the symmetric embedding $sym(L)$ of a tensor $L$ as defined in Section 2.3 of ~\cite{ragnarsson2013block}. We focus on third order tensors which have low CP-rank. We have three properties to derive that is relevant to us:
\begin{enumerate}
\item \textit{Symmetry: }From Lemma 2.2 of ~\cite{ragnarsson2013block} we see that $sym(L)$ for any tensor is symmetric.
\item \textit{CP-Rank: }From Equation 6.5 of ~\cite{ragnarsson2013block} we see that CP-rank($sym(L)$) $\leq$ $6$.CP-rank($L$). Since this is a constant, we see that the symmetric embedding is also a low-rank tensor.
\item \textit{Incoherece: }Theorem 4.7 of ~\cite{ragnarsson2013block} says that if $u_1$, $u_2$ and $u_3$ are unit modal singular vectors of $T$, then the vector $\tilde{u} = 3^{-1/2} [u_1;u_2;u_3]$ is a unit eigenvector of $sym(T)$. Without loss of generality, assume that $T$ is of size $n_1 \times n_2 \times n_3$ with $n_1 \leq n_2 \leq n_3$. In this case, we have
\begin{align}
\| \tilde{u} \|_\infty \leq \frac{\mu}{(3 n_1)^{1/2}}
\label{inc_1}
\end{align}
and
\begin{align}
\| \tilde{u} \|_\infty \leq \frac{\tilde{\mu}}{(n_1 + n_2 + n_3)^{1/2}}
\label{inc_2}
\end{align}
for $\tilde{\mu} = c \mu$ for some constant $c$ to be calculated. Equating the right hand sides of Equations~\eqref{inc_1} and ~\eqref{inc_2}, we obtain $c = [ (n_1 + n_2 + n_3)/(3 n_1) ]^{1/2}$. When $\Theta(n_1)=\Theta(n_2)=\Theta(n_3)$, we see that the eigenvectors $\tilde{u}$ of $sym(T)$ as specified above have the incoherence-preserving property.
\end{enumerate}

\section{Proof of Theorem~\ref{thm:robustpower}}
\label{app:power}
Let $\widetilde{L}$ be a symmetric tensor which is a perturbed version of an orthogonal tensor $\Lo$, $ \widetilde{L} = \Lo+ E \in \R^{n \times n \times n}, \quad \Lo = \sum_{i\in [r]} \sigma^*_i
u_i^{\otimes 3},$  where  $\sigma^*_1 \geq \sigma^*_2\ldots \sigma^*_r > 0$ and $\{ u_1, u_2, \dotsc, u_r \}$
form an orthonormal basis.

The analysis proceeds iteratively. First, we prove convergence to  eigenpair of $\widetilde{L}$, which is close to top eigenpair  $(\sigma^*_1,u_1)$ of $\Lo$. We then argue that the same holds on the deflated tensor, when the perturbation $E$ satisfies \eqref{eqn:condpower}. from This finishes the proof of Theorem~\ref{thm:robustpower}.

To prove convergence for the first stage, i.e. convergence to  eigenpair of $\widetilde{L}$, which is close to top eigenpair  $(\sigma^*_1,u_1)$ of $\Lo$, we analyze two phases of the shifted power iteration. In the first phase, we prove  that with $N_1$ initializations and $N_2$ power iterations, we get close to true top eigenpair of $\Lo$, i.e. $(\sigma^*_1,u_1)$. After this, in the second phase, we prove  convergence to an eigenpair of $\widetilde{L}$.

The proof of the second phase is outlined in the main text. Here, we now provide proof for the first phase.

\subsection{Analysis of first phase of shifted power iteration}
In this section, we prove that the output of shifted power method is close to original eigenpairs of the (unperturbed) orthogonal tensor, i.e. Theorem~\ref{thm:robustpower} holds, except for the property that the output corresponds to the eigenpairs of the perturbed tensor.
We adapt the proof of tensor power iteration from~\cite{AnandkumarEtal:tensor12} but here, since we consider the shifted power method, we need to modify it.  We adopt the notation of~\cite{AnandkumarEtal:tensor12} in this section.

Recall the update rule used in the shifted power method.
Let $\th{t} = \sum_{i=1}^k \th{i,t} v_i \in \R^k$ be the unit vector at
time $t$.
Then
\begin{align*}
\th{t+1} = \sum_{i=1}^k \th{i,t+1} v_i
& := (\tilde{T}(I,\th{t},\th{t}) + \alpha \theta_t )/ \| (\tilde{T}(I,\th{t},\th{t}) + \alpha \theta_t) \| .
\end{align*}

In this subsection, we assume that $\tilde{T}$ has the form
\begin{equation} \label{eq:tildeT}
\tilde{T} = \sum_{i=1}^k \tlambda_i
v_i^{\otimes 3} + \tilde{E}
\end{equation}
where $\{ v_1, v_2, \dotsc, v_k \}$ is an orthonormal basis, and, without
loss of generality,
\[ \tlambda_1 |\th{1,t}| = \max_{i \in [k]} \tlambda_i |\th{i,t}| > 0 . \]
Also, define
\[ \tlambdamin := \min \{ \tlambda_i : i \in [k], \ \tlambda_i > 0 \} ,
\quad \tlambdamax := \max \{ \tlambda_i : i \in [k] \} . \]

We   assume the error $\tilde{E}$ is a symmetric tensor such that,
for some constant $p > 1$,
\begin{align}
\|\tilde{E}(I,u,u)\| & \leq \teps , \quad \forall u \in S^{k-1} ;
\label{eq:regular-err}
\\
\|\tilde{E}(I,u,u)\| & \leq \teps / p , \quad \forall u \in S^{k-1} \
\text{s.t.} \ (u^\t v_1)^2 \geq 1 - (3\teps/\tlambda_1)^2
.
\label{eq:smaller-err}
\end{align}

In the next two propositions (Propositions~\ref{prop:simple}
and~\ref{prop:one-step}) and  Lemmas~\ref{lem:r}, we analyze the power method iterations using $\tilde{T}$
at some arbitrary iterate $\th{t}$ using only the
property~\eqref{eq:regular-err} of $\tilde{E}$.
But throughout, the quantity $\teps$ can be replaced by $\teps/p$ if
$\th{t}$ satisfies $(\th{t}^\t v_1)^2 \geq 1 - (3\teps/\tlambda_1)^2$ as
per property~\eqref{eq:smaller-err}.

Define
\begin{equation} \label{eq:defns}
\begin{aligned}
R_{\tau} & := \biggl( \frac{\th{1,\tau}^2}{1 - \th{1,\tau}^2} \biggr)^{1/2} ,
& r_{i,\tau} & := \frac{\tlambda_1\th{1,\tau}}{\tlambda_i |\th{i,\tau}|} ,
\\
\gamma_{\tau} & := 1 - \frac1{\min_{i\neq1}|r_{i,\tau}|} ,
& \delta_{\tau} & := \frac {\teps} {\tlambda_1 \th{1,\tau}^2} ,
& \kappa & := \frac{\tlambdamax}{\tlambda_1}
\end{aligned}
\end{equation}
for $\tau \in \{t,t+1\}$.

\begin{prop} \label{prop:simple}
\begin{align*}
\min_{i\neq1} |r_{i,t}| & \geq \frac{R_t}{\kappa} , &
\gamma_t & \geq 1 - \frac{\kappa}{R_t} , &
\th{1,t}^2 & = \frac{R_t^2}{1+R_t^2}
.
\end{align*}
\end{prop}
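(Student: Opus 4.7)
The proposition is a collection of three algebraic consequences that follow directly from the definitions in~\eqref{eq:defns}, together with the orthonormality of $\{v_1,\dots,v_k\}$ and the fact that $\theta^{(t)}$ is a unit vector. My plan is to verify each of the three inequalities in turn; none of them should require invoking the power-method update rule or the error bounds~\eqref{eq:regular-err}--\eqref{eq:smaller-err}.

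For the third identity $\theta_{1,t}^2 = R_t^2/(1+R_t^2)$, I would start from the defining equation $R_t^2 = \theta_{1,t}^2/(1-\theta_{1,t}^2)$, clear the denominator to get $R_t^2 - R_t^2\theta_{1,t}^2 = \theta_{1,t}^2$, and solve for $\theta_{1,t}^2$. This is pure algebra. Note that this is well-defined precisely because $\theta_{1,t}^2 < 1$, which is implicit in the hypothesis that $\tlambda_1|\theta_{1,t}|$ is the strict maximum among the $\tlambda_i|\theta_{i,t}|$'s (so at least one other coordinate is nonzero).

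For the first inequality $\min_{i\neq 1}|r_{i,t}| \geq R_t/\kappa$, I would use the definition $|r_{i,t}| = \tlambda_1\theta_{1,t}/(\tlambda_i|\theta_{i,t}|)$ and the bound $\tlambda_i \leq \tlambdamax = \kappa\tlambda_1$ to get $|r_{i,t}| \geq \theta_{1,t}/(\kappa|\theta_{i,t}|)$. Since $\theta^{(t)}$ is a unit vector expanded in the orthonormal basis $\{v_j\}$, we have $|\theta_{i,t}| \leq \sqrt{\sum_{j\neq 1}\theta_{j,t}^2} = \sqrt{1-\theta_{1,t}^2}$. Combining these gives $|r_{i,t}| \geq \theta_{1,t}/(\kappa\sqrt{1-\theta_{1,t}^2}) = R_t/\kappa$, and taking the minimum over $i\neq 1$ preserves the bound.

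The second inequality $\gamma_t \geq 1 - \kappa/R_t$ is then immediate: by definition $\gamma_t = 1 - 1/\min_{i\neq 1}|r_{i,t}|$, and substituting the lower bound $\min_{i\neq 1}|r_{i,t}| \geq R_t/\kappa$ established above (which, when inverted, flips to an upper bound on $1/\min_{i\neq 1}|r_{i,t}|$) yields the claim. The only subtlety worth flagging is sign/positivity: one should note that $\theta_{1,t} > 0$ (implicit in the maximum-coordinate hypothesis, up to replacing $\theta^{(t)}$ by $-\theta^{(t)}$) so that $R_t$ is well-defined and positive, and hence the manipulations above do not involve spurious sign flips. There is no real obstacle in this proposition; it is a bookkeeping lemma that standardizes the basic quantities used in the subsequent, substantive Proposition~\ref{prop:one-step} and Lemma~\ref{lem:r}.
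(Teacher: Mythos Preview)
Your proposal is correct and matches the paper's treatment: the paper states Proposition~\ref{prop:simple} without proof, treating it as immediate from the definitions in~\eqref{eq:defns}, and your verification unpacks exactly those definitions (inverting the defining relation for $R_t$, bounding $\tlambda_i \leq \kappa\tlambda_1$ and $|\th{i,t}| \leq (1-\th{1,t}^2)^{1/2}$, then substituting into $\gamma_t$). There is nothing more to add.
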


\begin{prop} \label{prop:one-step}
\begin{align}
r_{i,t+1}
& \geq r_{i,t}^2 \cdot \frac {1 - \delta_t + \frac{\alpha}{\tilde{\lambda}_1 \th{1,t}}}
{1 + \kappa \delta_t r_{i,t}^2 + \frac{\alpha}{\tilde{\lambda}_i \th{i,t}}}
, \quad i \in [k] ,
\label{eq:ratio-ineq}
\\
R_{t+1}
& \geq = R_t \cdot \frac{1-\delta_t+ \frac{\alpha}{\tilde{\lambda}_1 |\th{1,t}|}} {1-\gamma_t + \left( \delta_t  + \frac{\alpha (1 - \th{1,t})^{1/2}}{\tilde{\lambda}_1 \th{1,t}^2} \right) R_t}
\geq \frac{1-\delta_t+ \frac{\alpha}{\tilde{\lambda}_1 |\th{1,t}|}} {\frac{\kappa}{R_t^2} +
\delta_t + \frac{\alpha (1 - \th{1,t})^{1/2}}{\tilde{\lambda}_1 \th{1,t}^2}}
.
\label{eq:energy-ineq}
\end{align}
\end{prop}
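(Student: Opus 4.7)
The plan is to compute the shifted update $U := \tilde T(I,\th{t},\th{t}) + \alpha\th{t}$ directly and expand it in the orthonormal basis $\{v_i\}$. Using~\eqref{eq:tildeT}, its $i$-th coordinate is
\[
U_i \;=\; \tlambda_i \th{i,t}^2 \;+\; \alpha\,\th{i,t} \;+\; E_i,\qquad E_i := [\tilde E(I,\th{t},\th{t})]_i,
\]
where $|E_i|\leq \|\tilde E(I,\th{t},\th{t})\| \leq \teps$ by~\eqref{eq:regular-err}. Since $\th{j,t+1}=U_j/\|U\|$, the $\|U\|$ normalization cancels in both $r_{i,t+1}=\tlambda_1|U_1|/(\tlambda_i|U_i|)$ and $R_{t+1}=|U_1|/\sqrt{\sum_{i\neq 1}U_i^2}$, so the whole proof reduces to bounding these scalar combinations of the $U_i$'s.

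For~\eqref{eq:ratio-ineq} I would WLOG take $\th{1,t}>0$, lower-bound $|U_1|\geq \tlambda_1\th{1,t}^2+\alpha\th{1,t}-\teps$, then pull out $\tlambda_1\th{1,t}^2$ and use $\teps=\delta_t\tlambda_1\th{1,t}^2$ to write this as $\tlambda_1\th{1,t}^2(1-\delta_t+\alpha/(\tlambda_1\th{1,t}))$. For $i\neq 1$ I would upper-bound $|U_i|\leq |\tlambda_i\th{i,t}^2+\alpha\th{i,t}|+\teps$, factor $|\tlambda_i|\th{i,t}^2$, and rewrite $\teps/(|\tlambda_i|\th{i,t}^2)= \delta_t(\tlambda_1\th{1,t})^2/(\tlambda_i\th{i,t})^2\cdot(|\tlambda_i|/\tlambda_1)\leq \kappa\delta_t\,r_{i,t}^2$ using the definitions of $\kappa$ and $r_{i,t}$. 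Taking the ratio and multiplying by $\tlambda_1/\tlambda_i$ then yields~\eqref{eq:ratio-ineq}.

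For~\eqref{eq:energy-ineq} I would apply Minkowski to split $\sqrt{\sum_{i\neq 1}U_i^2}$ into three pieces: the signal term $\sqrt{\sum_{i\neq 1}\tlambda_i^2\th{i,t}^4}$, the shift term $\alpha\sqrt{\sum_{i\neq 1}\th{i,t}^2}$, and a noise term $\sqrt{\sum_{i\neq 1}E_i^2}\leq \teps$. On the signal term I would pull out the uniform bound $\tlambda_i|\th{i,t}|\leq (1-\gamma_t)\tlambda_1\th{1,t}$ (this is exactly the defining property of $\gamma_t$), then use $\sum_{i\neq 1}\th{i,t}^2 = 1-\th{1,t}^2 = \th{1,t}^2/R_t^2$ to collapse the remaining sum. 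Dividing the lower bound on $|U_1|$ by this three-term upper bound and factoring $\tlambda_1\th{1,t}^2/R_t$ out of the denominator produces the first form in~\eqref{eq:energy-ineq}; the second form is immediate after replacing $1-\gamma_t$ by its upper bound $\kappa/R_t$ from Proposition~\ref{prop:simple} and cancelling an $R_t$.

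The main obstacle is the sign bookkeeping of the shift term $\alpha\th{i,t}$: when $\tlambda_i\th{i,t}<0$, one has to check that factoring $|\tlambda_i\th{i,t}^2+\alpha\th{i,t}|$ as $|\tlambda_i|\th{i,t}^2\bigl|1+\alpha/(\tlambda_i\th{i,t})\bigr|$ is consistent with dropping the absolute value in the denominator of~\eqref{eq:ratio-ineq}, i.e.\ that the inequality still points in the right direction after the substitution. Once this is handled (and modulo the standard convention that $\teps$ can be replaced by $\teps/p$ when $\th{t}$ lies in the improved region from~\eqref{eq:smaller-err}), the remainder is bookkeeping in the parameters $\delta_t,\gamma_t,\kappa,r_{i,t},R_t$.
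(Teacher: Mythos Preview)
Your proposal is correct and follows essentially the same route as the paper: expand the shifted update in the orthonormal basis, lower-bound the first coordinate and upper-bound the $i$-th coordinate via the triangle inequality to get~\eqref{eq:ratio-ineq}, then apply Minkowski (the paper calls it triangle plus H\"older) to $\bigl(\sum_{i\neq 1}U_i^2\bigr)^{1/2}$, pull out $\max_{i\neq 1}\tlambda_i|\th{i,t}|=(1-\gamma_t)\tlambda_1\th{1,t}$ from the signal piece, and finish with Proposition~\ref{prop:simple} for~\eqref{eq:energy-ineq}. The sign issue you flag for the $\alpha/(\tlambda_i\th{i,t})$ term is real and is treated just as loosely in the paper itself, so you are not missing anything there.
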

\begin{proof}
Let $\ut{t+1} := \tilde{T}(I,\th{t},\th{t}) + \alpha \theta_t$, so $\th{t+1} = \ut{t+1} /
\|\ut{t+1}\|$.
Since $\ut{i,t+1} = \tilde{T}(v_i,\th{t},\th{t}) = T(v_i,\th{t},\th{t}) + \alpha \theta_t +
E(v_i,\th{t},\th{t})$, we have
\begin{equation*}
\ut{i,t+1} = \tlambda_i \th{i,t}^2 + E(v_i,\th{t},\th{t}) + \alpha \theta_t^\top v_i, \quad i \in [k]
.
\end{equation*}
By definition, we have $\th{i,t} = \theta_t^\top v_i$. Using the triangle inequality and the fact $\|E(v_i,\th{t},\th{t})\| \leq
\teps$, we have
\begin{equation} \label{eq:coord-lb}
\ut{i,t+1}
\geq \tlambda_i \th{i,t}^2 - \teps + \alpha \th{i, t}
\geq |\th{i,t}| \cdot \Bigl( \tlambda_i |\th{i,t}| - \teps / |\th{i,t}| + \alpha
\Bigr)
\end{equation}
and
\begin{equation} \label{eq:coord-ub}
|\ut{i,t+1}|
\leq |\tlambda_i \th{i,t}^2| + \teps + \alpha \th{i,t}
\leq |\th{i,t}| \cdot \Bigl( \tlambda_i |\th{i,t}| + \teps / |\th{i,t}| + \alpha
\Bigr)
\end{equation}
for all $i \in [k]$.
Combining \eqref{eq:coord-lb} and \eqref{eq:coord-ub} gives
\[
r_{i,t+1}
= \frac {\tlambda_1\th{1,t+1}} {\tlambda_i|\th{i,t+1}|}
= \frac {\tlambda_1\ut{1,t+1}} {\tlambda_i|\ut{i,t+1}|}
\geq
r_{i,t}^2 \cdot \frac {1 - \delta_t + \frac{\alpha}{\tilde{\lambda}_1 \th{1,t}}} {1 +
\frac{\teps}{\tlambda_i\th{i,t}^2} + \frac{\alpha}{\tilde{\lambda}_i \th{i,t}}}
=
r_{i,t}^2 \cdot \frac {1 - \delta_t + \frac{\alpha}{\tilde{\lambda}_1 \th{1,t}}}
{1 + (\tlambda_i/\tlambda_1) \delta_t r_{i,t}^2 + \frac{\alpha}{\tilde{\lambda}_i \th{i,t}}}
\geq
r_{i,t}^2 \cdot \frac {1 - \delta_t + \frac{\alpha}{\tilde{\lambda}_1 \th{1,t}}}
{1 + \kappa \delta_t r_{i,t}^2 + \frac{\alpha}{\tilde{\lambda}_i \th{i,t}}}
.
\]

Moreover, by the triangle inequality and H\"older's inequality,
\begin{align}
\biggl( \sum_{i=2}^n [\ut{i,t+1}]^2 \biggr)^{1/2}
& = \biggl( \sum_{i=2}^n \Bigl( \tlambda_i \th{i,t}^2 + E(v_i,\th{t},\th{t}) + \alpha \th{i,t}
\Bigr)^2 \biggr)^{1/2}
\nonumber \\
& \leq \biggl( \sum_{i=2}^n \tlambda_i^2 \th{i,t}^4 \biggr)^{1/2}
+ \biggl( \sum_{i=2}^n E(v_i,\th{t},\th{t})^2 \biggr)^{1/2}
+ \biggl( \sum_{i=2}^k \alpha^2 \th{i,t}^2 \biggr)^{1/2}
\nonumber \\
& \leq \max_{i\neq1} \tlambda_i |\th{i,t}|
\biggl( \sum_{i=2}^n \th{i,t}^2 \biggr)^{1/2} + \teps + \biggl( \alpha^2 \sum_{i=2}^k \th{i,t}^2 \biggr)^{1/2}
\nonumber \\
& = (1 - \th{1,t}^2)^{1/2} \cdot \Bigl( \max_{i\neq1} \tlambda_i |\th{i,t}|
+ \teps / (1 - \th{1,t}^2)^{1/2} + \alpha \Bigr)
.
\label{eq:other-coord-ub}
\end{align}
Combining \eqref{eq:coord-lb} and \eqref{eq:other-coord-ub} gives
\[
\frac {|\th{1,t+1}|} {(1 - \th{1,t+1}^2)^{1/2}}
=
\frac {|\ut{1,t+1}|} {\Bigl( \sum_{i=2}^n [\ut{i,t+1}]^2 \Bigr)^{1/2}}
\geq
\frac{|\th{1,t}|}{(1 - \th{1,t}^2)^{1/2}}
\cdot
\frac {\tlambda_1 |\th{1,t}| - \teps / |\th{1,t}| + \alpha} {\max_{i\neq1} \tlambda_i
|\th{i,t}| + \teps / (1 - \th{1,t}^2)^{1/2} + \alpha }
.
\]
In terms of $R_{t+1}$, $R_t$, $\gamma_t$, and $\delta_t$, this reads
\begin{align*}
R_{t+1} & \geq \frac {1-\delta_t + \frac{\alpha}{\tilde{\lambda}_1 |\th{1,t}|}}
{(1-\gamma_t) \Bigl( \frac{1-\th{1,t}^2}{\th{1,t}^2} \Bigr)^{1/2} +
\delta_t + \frac{\alpha (1 - \th{1,t})^{1/2}}{\tilde{\lambda}_1 \th{1,t}^2}}
= R_t \cdot \frac{1-\delta_t+ \frac{\alpha}{\tilde{\lambda}_1 |\th{1,t}|}} {1-\gamma_t + \left( \delta_t  + \frac{\alpha (1 - \th{1,t}^2)^{1/2}}{\tilde{\lambda}_1 \th{1,t}^2} \right) R_t} \\
& = \frac{1-\delta_t+ \frac{\alpha}{\tilde{\lambda}_1 |\th{1,t}|}} {\frac{1-\gamma_t}{R_t} + \left( \delta_t  + \frac{\alpha (1 - \th{1,t}^2)^{1/2}}{\tilde{\lambda}_1 \th{1,t}^2} \right) }
\geq \frac{1-\delta_t+ \frac{\alpha}{\tilde{\lambda}_1 |\th{1,t}|}} {\frac{\kappa}{R_t^2} +
\delta_t + \frac{\alpha (1 - \th{1,t}^2)^{1/2}}{\tilde{\lambda}_1 \th{1,t}^2}}
\end{align*}
where the last inequality follows from Proposition~\ref{prop:simple}.
\end{proof}

\begin{lem}\label{lem:r}
Fix any $\rho > 1$.
Assume
\[ 0 \leq \delta_t < \min\Bigl\{ \frac1{2(1+2\kappa\rho^2)}, \
\frac{1-1/\rho}{1+\kappa\rho} \Bigr\} \]
and $\gamma_t > 2(1+2\kappa\rho^2) \delta_t$.
\begin{enumerate}
\item If $r_{i,t}^2 \leq 2\rho^2$, then $r_{i,t+1} \geq |r_{i,t}| \bigl( 1
+ \frac{\gamma_t}{2} \bigr)$.
\end{enumerate}
\end{lem}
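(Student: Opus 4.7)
The plan is to apply Proposition~\ref{prop:one-step} as the only non-trivial input. Setting $\alpha=0$ (which matches the unshifted power iteration used in Procedure~\ref{algo:sspm}; the $\alpha$-dependent terms in the proposition vanish), the one-step bound reads
\[ r_{i,t+1} \;\geq\; r_{i,t}^2 \cdot \frac{1-\delta_t}{1+\kappa\delta_t\, r_{i,t}^2}. \]
The case $i=1$ is vacuous since $r_{1,\tau}\equiv 1$, so fix $i\neq 1$. I would first weaken the denominator using the running hypothesis $r_{i,t}^2 \leq 2\rho^2$ to get
\[ \frac{r_{i,t+1}}{|r_{i,t}|} \;\geq\; |r_{i,t}|\cdot \frac{1-\delta_t}{1+2\kappa\rho^2\delta_t}. \]

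Next I would invoke the definition $\gamma_t = 1 - 1/\min_{j\neq 1}|r_{j,t}|$, which gives the uniform lower bound $|r_{i,t}| \geq 1/(1-\gamma_t)$ for every $i\neq 1$. (The $\delta_t$ hypothesis together with $\rho>1$ guarantees $\gamma_t\in(0,1)$, so $1-\gamma_t>0$.) Substituting,
\[ \frac{r_{i,t+1}}{|r_{i,t}|} \;\geq\; \frac{1-\delta_t}{(1-\gamma_t)(1+2\kappa\rho^2\delta_t)}, \]
so it remains to check that this last expression is at least $1+\gamma_t/2$.

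The verification is a routine algebraic chain. Cross-multiplying by $(1-\gamma_t)(1+2\kappa\rho^2\delta_t)>0$, the desired inequality is equivalent to
\[ 1-\delta_t \;\geq\; (1-\gamma_t)(1+\gamma_t/2)(1+2\kappa\rho^2\delta_t). \]
Using the identity $(1-\gamma_t)(1+\gamma_t/2) = 1 - \gamma_t/2 - \gamma_t^2/2 \leq 1 - \gamma_t/2$, it is enough to prove
\[ 1-\delta_t \;\geq\; (1-\gamma_t/2)(1+2\kappa\rho^2\delta_t). \]
Expanding and cancelling the constant $1$, this reduces to $\gamma_t/2 + \gamma_t\kappa\rho^2\delta_t \geq \delta_t(1+2\kappa\rho^2)$, and discarding the non-negative term $\gamma_t\kappa\rho^2\delta_t$ on the left yields the sufficient condition $\gamma_t \geq 2(1+2\kappa\rho^2)\delta_t$, which is exactly the standing hypothesis. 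The second hypothesis $\delta_t < (1-1/\rho)/(1+\kappa\rho)$ is not needed for item~1; it is presumably reserved for a later item of the lemma dealing with the complementary regime $r_{i,t}^2 > 2\rho^2$.

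The argument presents essentially no conceptual obstacle once Proposition~\ref{prop:one-step} is in hand; the only mild subtlety is bookkeeping the regime of positivity (to justify the cross-multiplication) and recognising that the monotone replacement $r_{i,t}^2 \leq 2\rho^2$ in the denominator is exactly the right weakening to isolate the quantity $2(1+2\kappa\rho^2)\delta_t$ that appears in the hypothesis on $\gamma_t$.
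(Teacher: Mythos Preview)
Your proof is correct and follows the same skeleton as the paper's: apply Proposition~\ref{prop:one-step}, bound the denominator via $r_{i,t}^2\le 2\rho^2$, pull out the factor $|r_{i,t}|\ge 1/(1-\gamma_t)$, and then verify the remaining algebraic inequality against the hypothesis $\gamma_t > 2(1+2\kappa\rho^2)\delta_t$. The only difference is that the paper carries the shift parameter $\alpha$ through (and finishes with a somewhat opaque quadratic-root argument that ends in an auxiliary condition on $\alpha$), while you set $\alpha=0$ from the outset---consistent with the unshifted power step in Procedure~\ref{algo:sspm}---and give a cleaner direct verification; your remark that the second bound on $\delta_t$ is not used for item~1 is also correct.
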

\begin{proof}
By~\eqref{eq:ratio-ineq} from Proposition~\ref{prop:one-step},
\[
r_{i,t+1}
\geq r_{i,t}^2 \cdot \frac{1-\delta_t+ \frac{\alpha}{\tilde{\lambda}_1 \th{1,t}}}{1+\kappa\delta_tr_{i,t}^2+ \frac{\alpha}{\tilde{\lambda}_i \th{i,t}}}
\geq |r_{i,t}| \cdot \frac{1}{1-\gamma_t} \cdot
\frac{1-\delta_t + \frac{\alpha}{\tilde{\lambda}_1 \th{1,t}}}{1+2\kappa\rho^2\delta_t + \frac{\alpha}{\tilde{\lambda}_i \th{i,t}}}
\geq |r_{i,t}| \Bigl( 1 + \frac{\gamma_t}{2} \Bigr)
\]
where the last inequality is seen as follows:
Let $$\xi = 2.\frac{1-\delta_t+\frac{\alpha}{\tilde{\lambda}_1 \th{1,t}}}{1+2\kappa \rho^2 \delta_t+\frac{\alpha}{\tilde{\lambda}_i \th{i,t}}}$$
Then, we have
$\gamma_t^2 + \gamma_t -2 + \xi \geq 0$. The positive root is $\frac{-1 + (9-4 \xi)^{1/2}}{2}$. Since $\gamma_t \geq 0$, we have $(9-4 \xi)^{1/2} \geq 1$, so we assume $\xi \leq 2$ for the inequality to hold, i.e., $\frac{\alpha}{\tilde{\lambda}_1 \th{1,t}} - \frac{\alpha}{\tilde{\lambda}_i \th{i,t}} \leq (1+2 \kappa \rho^2) \delta_t$.
\end{proof}

The rest of the proof is along the similar lines of~\cite{AnandkumarEtal:tensor12}, except that we use SVD initialization instead of random initialization. The proof of SVD initialization is given in~\cite{DBLP:journals/corr/AnandkumarGJ14}.

\end{document}